\title{Optimal Representations for Covariate Shift}
\author{Yangjun Ruan\thanks{Authors contributed equally.}~~$^{12}$, Yann Dubois\footnotemark[1]~~$^{2}$, Chris J. Maddison$^{12}$ \\
$^{1}$University of Toronto \& $^{2}$Vector Institute\\
\texttt{\{yjruan,yanndubois,cmaddis\}@cs.toronto.edu}
}
\newif\ifarxiv
\newcommand{\edit}[1]{{#1}} 
\newcommand{\editarxiv}[1]{{#1}} 
\begin{document}

\maketitle

\doparttoc 
\faketableofcontents 

\vspace{-1\baselineskip}
\begin{abstract}
\vspace{-.5\baselineskip}
Machine learning systems often experience a distribution shift between training and testing. In this paper, we introduce a simple variational objective whose optima are \textit{exactly} the set of \textit{all} representations on which risk minimizers are guaranteed to be robust to any distribution shift that preserves the Bayes predictor, e.g., covariate shifts. Our objective has two components. First, a representation must remain discriminative for the task, i.e., some predictor must be able to simultaneously minimize the source and target risk. Second, the representation's marginal support needs to be the same across source and target. We make this practical by designing self-supervised objectives that only use unlabelled data and augmentations to train robust representations. 
Our objectives give insights into the robustness of CLIP, and further improve CLIP's representations to achieve SOTA results on DomainBed.
\end{abstract}

\vspace{-.75\baselineskip}
\section{Introduction}
\label{sec:introduction}
It is hard to build machine learning (ML) systems that are robust to distribution shifts between a source (train) and target (test) domain.
One promising approach to domain generalization (DG) is learning robust representations from which predictors trained on source must perform well on target.
In practice, however, no current DG methods for learning representation uniformly outperform empirical source-risk minimizers (ERM) \citep{gulrajani2021in}.
Furthermore, our theoretical understanding of DG is still lacking.
Specifically, while previous work have studied properties that would or would not imply robust representations \citep{ben2007analysis,ben2010theory,zhao2019learning,johansson2019support}, the \textit{minimal} set of \edit{\textit{achievable}} requirements for perfect DG is not yet known.

We introduce the first, simple, variational objective whose optima are exactly the set of all representations on which source risk minimizers are guaranteed to generalize across distribution shifts that preserve the Bayes predictor. We work in an idealized DG (IDG) setting; we assume that a learner has access to the source population risk. Our variational characterization implies that it is both sufficient and \emph{necessary} for optimal IDG that a representation:
\begin{inlinelist}[label={(\alph*)}]
    \item remains discriminative for the learning task, i.e., there must exist predictors from the representation to the labels that can simultaneously minimize \emph{both} source and target risk; and
    \item keeps the support of its marginal distribution invariant to shifts.
\end{inlinelist} 

This means that any optimal representation learning method must seek discriminative information about the target. Even worse, we prove that without access to some knowledge about the target, {any} representation learning algorithm cannot uniformly (over all target domains) outperform a \emph{constant} representation, which may explain why DG methods struggle to outperform ERM.

We show, in theory and practice, how to overcome these challenges using only a large set of unlabeled examples and particular data augmentations that retain all discriminative information but minimal domain-specific information. 
Text descriptions of images are examples of such augmentations, as they are informative for many downstream classification tasks, but they remove a lot of domain-specific information. With such augmentations, we design practical self-supervised learning (SSL) objectives for learning robust representations. Our objectives give insights into the robustness of CLIP \citep{radford2021learning} over other SSL methods, and lead to improved CLIP-based representations that achieve state-of-the-art (SOTA) results on DomainBed \citep{gulrajani2021in}. 
To summarize, we:
\begin{itemize}[leftmargin=*,noitemsep,topsep=0pt]
\item provide \edit{minimal sufficient} objectives whose optima achieve optimal DG under covariate shift;
\item prove that it is impossible to learn useful representations without accessing target information;
\item provide practical objectives to learn optimally robust representations using specific augmentations;
\item obtain SOTA results on typical domain generalization benchmarks.\footnote{Our implementation is released at \url{https://github.com/ryoungj/optdom}.}
\end{itemize}




\section{Background: domain generalization and representations}
\label{sec:background}
We are interested in predictions that are robust across distribution shifts. 
We formalize this using domain generalization (DG) language. 
Given a distribution $\conddens{X, Y}{d_s}$ over inputs $x \in \inputspace$ and labels $y \in \labelspace$ from the \emph{source domain} $d_s \in \domspace$, we select a predictor $\fx : \inputspace \to \actionspace$. 
The predictions $\act \in \actionspace$ could for example be labels or distributions over labels.
Despite being selected on the source domain, we would like $\fx$ to achieve a small expected risk with respect to a loss function $\ell : \labelspace \times \actionspace \to \R_{\geq 0}$,
\begin{equation}
    \domriskq{X}{\fx}{d} \defeq \E{\conddens{X, Y}{d}}{\ell(Y, \fx(X))},
\end{equation} on a distribution $\conddens{X, Y}{d}$ from a \emph{target domain} $d=\dt \in  \domspace$, which is somehow related to $\ds$.


A common strategy for DG is to learn robust representations, which splits the problem into two.
First, learn an encoder $\conddens{Z}{X}$, which maps inputs $X$ to representations $Z$. Then, learn a predictor $\fz : \repspace \to \actionspace$ from representations $Z$ to labels $Y$ using standard risk minimization. 
The goal is to design a robust representation $\rv Z$, so that predictors $\fz$ trained to minimize the source risk $\domriskq{Z}{\fz}{d_s}$ also achieve low target risk $\domriskq{Z}{\fz}{d_t}$.
Many methods have been proposed to try to learn such $\rv Z$, e.g., by enforcing domain invariance of the marginal $\conddens{Z}{d}$ \citep[e.g.,][]{ganin2016domain}. Still, many of these proposals are not sound \citep{zhao2019learning,johansson2019support}.
Furthermore, they rarely outperform source empirical risk minimization (ERM) in practice \cite{gulrajani2021in}.

\section{Optimal representations for domain generalization}
\label{sec:theory}


To separate domain generalization from finite sample generalization, we consider an idealized DG (IDG), where the predictor $\fz$ is selected on the source \textit{population} risk rather than empirical risk. 
We assume sample spaces $\inputspace{},\repspace{},\labelspace{},\domspace{}$ are discrete; formal statements and proofs are in \cref{appcs:preliminary,appcs:proof}.

\subsection{Defining Optimal Representations for Idealized Domain Generalization}

We want to evaluate the quality of a representation $Z$ of $X$.
In our IDG, the learner is given a random source $\Ds$; she selects any source risk minimizer; and is scored according to her risk on a random target domain $\Dt$. To give uniform guarantees while reflecting the uncertainty over the source-target pair $(\Ds,\Dt)$, we measure the quality of $Z$ as the expected risk of the learner's worst-case choice.
\begin{definition*}\label{def:optimal}
The \textit{idealized domain generalization risk (IDG risk)} of an encoder $\conddens{Z}{X}$ is the expected (over domains) worst-case (over source risk minimizers) target risk, i.e.,
\begin{equation}\label{eq:optimal}
\IDGrisk{Z} \defeq \E{\dens{\Ds, \Dt}}{\sup_{\fz \in \mainbayesfamilyzd{\Ds}} \domriskq{Z}{\fz}{\Dt}}
\end{equation}
where $\mainbayesfamilyzd{\Ds} \defeq \argmin_{\fz} \domriskq{Z}{\fz}{\Ds}$ are the source risk minimizers, and $\dens{\Ds, \Dt}$ is any joint distribution that has full support over $\domspace \times \domspace$. We call a representation $Z^*$ (or its encoder) \emph{optimal for IDG} if it minimizes the IDG risk.
\end{definition*}

\subsection{Characterizing  optimal representations for IDG under covariate shift}

The IDG risk is useful to evaluate representations, but gives few insights into IDG and is impractical to optimize due to the supremum in \cref{eq:optimal}.
Under mild assumptions, we provide a simplified, equivalent objective
, which is easier to optimize. 
For convenience, we assume that there is a unique Bayes predictor $\bayesfx$, which minimizes the expected risk over domains, i.e., $\bayesfx = \argmin_{\fx} \mathbb{E}_{\dens{\Dt}}[\domriskq{X}{\fx}{\Dt}]$.
This is satisfied by standard ML tasks $\dens{Y,X}$ and losses $\ell$.
More importantly, we assume the following domain structure, which ensures the existence of optimal encoders and allows our simplification.

        

\begin{assumptions*}
All domains $d \in \mathcal{D}$ we consider are related by the following assumptions:
\begin{enumerate}[leftmargin=*, topsep=0pt, itemsep=0pt]
    \item \label{assumption:bayes_inv} \emph{Generalized covariate shift.} All domain-specific risk minimizers 
    $f \in \argmin_{\fx} [\domriskq{X}{\fx}{d}]$ 
    are equal to the Bayes predictor $\bayesfx$ on their support, \ie, $f(x) = \bayesfx(x)$ for all $x \in \supp{\conddens{X}{d}}$.
    \item \label{assumption:bayes_image} \emph{Invariance of Bayes predictions.} The set of Bayes predictions is the same for all domains, \ie, $\set{\bayesfx(x) \cond x \in \supp{\conddens{X}{d}}} = \set{\bayesfx(x) \cond x \in \mathcal{X}}$.
\end{enumerate}
\end{assumptions*}
%
Generalized covariate shift (GCS) ensures that $\bayesfx$ is simultaneously optimal on all domains.
For log-loss $\ell$ it recovers standard covariate shift, \ie, $\conddens{Y}{x, d} = \conddens{Y}{x}$.
For other losses, GCS is weaker, \eg, it only requires invariance of most likely labels 
for 0-1 loss, and of conditional expectations 
for MSE. 
Invariance of Bayes predictors is necessary to learn useful predictors using a single domain.
For example, for 0-1 loss it ensures that each label is seen at least once in each domain.

\begin{figure}[t]
    \vspace{-1.5\baselineskip}
    \centering
    \begin{subfigure}[b]{0.345\textwidth}
            \includegraphics[width=\linewidth]{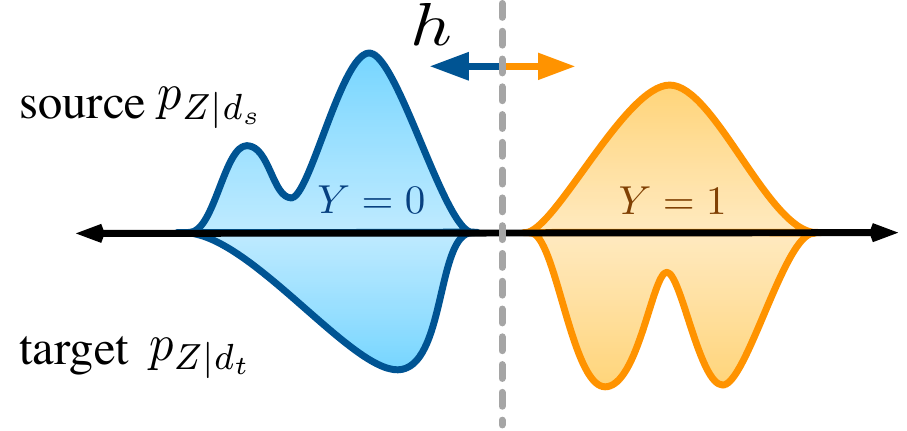}
            \caption{discriminative \& support match}
            \label{fig:opt_rep:both_supp_risk}
    \end{subfigure}
    \begin{subfigure}[b]{0.318\textwidth}
            \includegraphics[width=\linewidth]{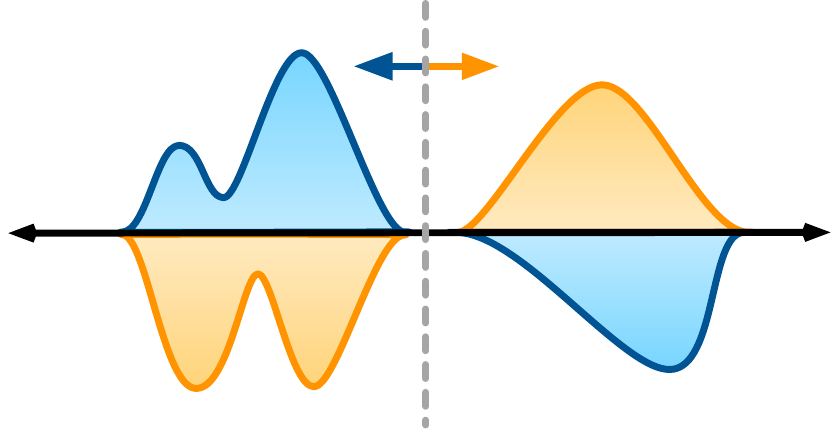}
            \caption{only support match}
            \label{fig:opt_rep:only_supp}
    \end{subfigure}
    \hspace{0.1em}
    \begin{subfigure}[b]{0.316\textwidth}
            \includegraphics[width=\linewidth]{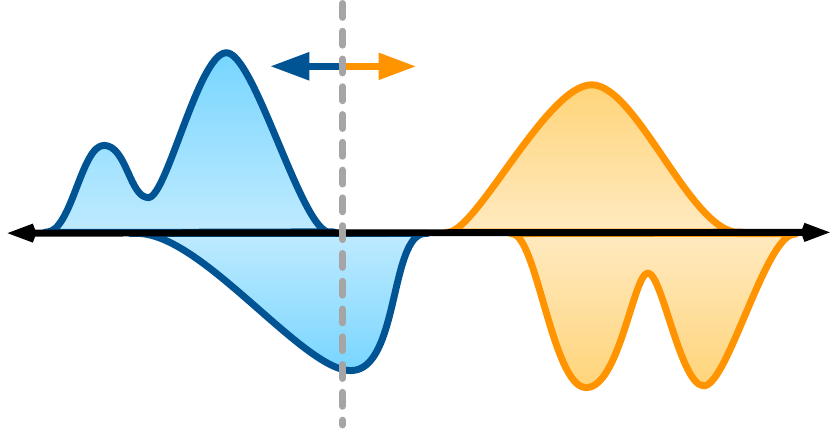}
            \caption{only discriminative}
            \label{fig:opt_rep:only_risk}
    \end{subfigure}
    \vspace{-1\baselineskip}
    \caption{(a) Optimal representations for IDG must have invariant supports while being simultaneously discriminative on all domains: (b)~without the discriminative requirement, a source-risk minimizer can mispredict the target, and (c)~without support match, some risk minimizer will perform poorly.
    }
    \label{fig:opt_rep}
    \vspace{-1\baselineskip}
\end{figure}

The intuition behind our objective is that under GCS any source risk minimizer will make optimal predictions on target samples $x$ that are also in the source. Thus, IDG optimal representations are exactly those that
\begin{inlinelist}[label={(\alph*)}]
\item have the same support in $\repspace$ for all domain, and 
\item retain GCS from $Z$ without sacrificing the ability to predict $Y$, which can be ensured by minimizing the risk from $Z$.
\end{inlinelist} See \cref{fig:opt_rep}.


%

\begin{theorem}\label{thm:main_paper}
Under our assumptions, an encoder $\conddens{Z^*}{X}$ is optimal for IDG if and only if it minimizes the risk $\risk{Y}{Z} \defeq \inf_{\fz} \E{\dens{\Dt}}{\domriskq{Z}{\fz}{\Dt}}$ while matching the support of $Z$ across domains, \ie,
\begin{align}
\conddens{Z^*}{X} \in  \argmin_{\conddens{Z}{X}}  \ \risk{Y}{Z} \quad \subject \quad \forall\ d \in \domspace, \ \supp{\conddens{Z}{d}} = \supp{\dens{Z}}  \label{eq:min_risk} 
\end{align}
Moreover, such encoders exist and their IDG risk is the Bayes risk $\IDGrisk{Z^*} = \risk{Y}{X}$.
\end{theorem}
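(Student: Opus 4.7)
My approach decomposes the theorem around the inequality chain $\IDGrisk{Z} \geq \risk{Y}{Z} \geq \risk{Y}{X}$, which holds for every encoder $Z$. The first inequality comes from choosing, for each realization of $\Ds$, some source risk minimizer $f_Z^{\Ds} \in \mainbayesfamilyzd{\Ds}$: then $\IDGrisk{Z} \geq \E{\dens{\Ds,\Dt}}{\domriskq{Z}{f_Z^{\Ds}}{\Dt}} \geq \inf_{\fz}\E{\dens{\Dt}}{\domriskq{Z}{\fz}{\Dt}} = \risk{Y}{Z}$, where the second step uses that $f_Z^{\Ds}$ is, for fixed $\Ds$, a particular feasible choice in the inner infimum. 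The bound $\risk{Y}{Z} \geq \risk{Y}{X}$ is the standard data processing inequality applied inside each expectation, since every $f_Z \circ Z$ is a (possibly stochastic) predictor of $Y$ from $X$.

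For existence I would exhibit the encoder $Z^* = \bayesfx(X)$. Assumption 2 makes $\supp{\conddens{Z^*}{d}}$ equal to the common set $\{\bayesfx(x) : x \in \mathcal{X}\}$ across all $d$, so support matching holds. The identity predictor on $Z^*$ recovers $\bayesfx(X)$, and under GCS it attains domain $d$'s Bayes risk for every $d$, giving $\risk{Y}{Z^*} = \risk{Y}{X}$. Thus $Z^*$ is feasible and attains the lower bound, proving existence and identifying the minimum in \eqref{eq:min_risk} with $\risk{Y}{X}$.

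For the \textbf{if} direction, suppose $Z^*$ satisfies the two constraints, so $\risk{Y}{Z^*} = \risk{Y}{X}$ by the previous step, and let $f^*$ attain this infimum. Data processing bounds each $\domriskq{Z^*}{f^*}{d}$ below by domain $d$'s Bayes risk on $X$; since the expectation over $\dens{\Dt}$ coincides with $\risk{Y}{X}$ and $\dens{\Dt}$ has full support, this inequality is tight for every $d$. Uniqueness of $\bayesfx$ combined with GCS then forces $f^* \circ Z^* = \bayesfx$ on $\supp{\dens{X}}$. Fix any $d_s$ and $g \in \argmin_{\fz}\domriskq{Z^*}{\fz}{d_s}$: since $g \circ Z^*$ is a Bayes predictor on $X$ for $d_s$, GCS gives $g \circ Z^* = \bayesfx$ on $\supp{\conddens{X}{d_s}}$, hence $g = f^*$ on $\supp{\conddens{Z^*}{d_s}}$. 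Support matching propagates this equality to $\supp{\conddens{Z^*}{d_t}}$ for every $d_t$, so $g$ attains $d_t$'s Bayes risk, and averaging yields $\IDGrisk{Z^*} = \risk{Y}{X}$, matching the lower bound.

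For the \textbf{only if} direction, let $\IDGrisk{Z} = \risk{Y}{X}$. The chain collapses to equalities, so $Z$ minimizes $\risk{Y}{Z}$. If support matching failed there would exist $d_1,d_2$ and $z^* \in \supp{\conddens{Z}{d_2}} \setminus \supp{\conddens{Z}{d_1}}$; a source risk minimizer on $d_1$ is unconstrained at $z^*$, so I can select one whose value at $z^*$ is not Bayes optimal for $d_2$, producing strictly higher than $d_2$'s Bayes risk, and because $\dens{\Ds,\Dt}$ has full support this lifts $\IDGrisk{Z}$ strictly above $\risk{Y}{X}$, a contradiction. The step I expect to be hardest is the sufficiency argument: upgrading an equality of expected risks to a pointwise identification of predictors, and then propagating that identification across the shared support, requires delicately combining uniqueness of $\bayesfx$, GCS, and support matching, and handling stochastic encoders needs care with almost-sure qualifiers throughout.
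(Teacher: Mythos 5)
Your overall structure (existence via $Z^* = \bayesfx(X)$, sufficiency by showing all source minimizers agree with the global Bayes predictor on the shared support, necessity by constructing a bad source minimizer off-support) matches the paper's proof of \cref{thm:main}. There is, however, a genuine gap in your inequality chain that propagates into your necessity argument.

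The first inequality $\IDGrisk{Z} \geq \risk{Y}{Z}$ is not true under the theorem's assumptions. Your argument for the step $\E{\dens{\Ds,\Dt}}{\domriskq{Z}{f_Z^{\Ds}}{\Dt}} \geq \inf_{\fz}\E{\dens{\Dt}}{\domriskq{Z}{\fz}{\Dt}}$ implicitly requires $\Ds \indep \Dt$: only then can you factor $\E{\dens{\Ds,\Dt}}{\cdot} = \E{\dens{\Ds}}{\E{\dens{\Dt}}{\cdot}}$ and compare the inner expectation with the infimum. But \cref{asmp:domains} allows \emph{any} joint with full support, including highly correlated $(\Ds,\Dt)$. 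When they are correlated, the source-adaptive predictor $f_Z^{\Ds}$ sees a target $\Dt$ that is likely close to $\Ds$, and the resulting expected risk can be \emph{strictly below} $\risk{Y}{Z}$. Concretely: take $\inputspace=\{1,2\}$, $\labelspace=\{0,1\}$, 0-1 loss, $\conddens{Y}{X=1}(1)=1$, $\conddens{Y}{X=2}(1)=0$; two domains with $\conddens{X}{d_1}(1)=0.8$, $\conddens{X}{d_2}(1)=0.2$; $\dens{\Dt}(d_1)=0.6$; and a constant encoder $Z$. Then $\risk{Y}{Z}=0.44$, while $\E{\dens{\Dt}}{\domrisk{Z}{\Dt}}=0.2$. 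With a near-diagonal joint $\dens{\Ds,\Dt}$ (still full support), $\IDGrisk{Z}\approx 0.2 < 0.44 = \risk{Y}{Z}$. The chain $\IDGrisk{Z}\geq\risk{Y}{Z}$ thus fails for this non-optimal $Z$, even though the theorem's conclusion ($\IDGrisk{Z}>\risk{Y}{X}=0$ for this $Z$) is intact. The paper instead uses the chain $\IDGrisk{Z} \geq \E{\dens{\Ds,\Dt}}{\domrisk{Z}{\Dt}} \geq \E{\dens{\Dt}}{\domrisk{X}{\Dt}} = \risk{Y}{X}$, where the first step is $\sup_{\fz\in\bayesfamilyzd{\Ds}}\domriskq{Z}{\fz}{\Dt} \geq \domrisk{Z}{\Dt}$ and the second is DPI applied domain-by-domain under GCS.

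This matters most for your necessity direction, where you write ``the chain collapses to equalities, so $Z$ minimizes $\risk{Y}{Z}$.'' Since the first link is broken, this inference does not follow. The correct route is: from $\IDGrisk{Z}=\risk{Y}{X}$ and the paper's chain, $\domrisk{Z}{d}=\domrisk{X}{d}$ for all $d$ and $\sup_{\fz\in\bayesfamilyzd{d_s}}\domriskq{Z}{\fz}{d_t}=\domrisk{Z}{d_t}$ for every pair $(d_s,d_t)$ (using full support of $\dens{\Ds,\Dt}$). Fix any $d_s$ and any $g\in\bayesfamilyzd{d_s}$: then $\domriskq{Z}{g}{d_t}=\domrisk{X}{d_t}$ for all $d_t$, so $\riskq{Y}{Z}{g}=\E{\dens{\Dt}}{\domrisk{X}{\Dt}}=\risk{Y}{X}$, giving $\risk{Y}{Z}\leq\risk{Y}{X}$ and hence equality. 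After this repair, your off-support construction for the support-match part of necessity and your sufficiency argument are sound and essentially match \cref{lemm:agree} and \cref{cor:agree} in the paper.
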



\edit{\Cref{thm:main_paper} provides an objective to learn representations on which performing risk minimization using a single domain and $Z^*$ is as good as performing risk minimization on the target domain from inputs $X$.}
\edit{Other sufficient conditions have previously been hinted towards, e.g., matching the marginal $\conddens{Z}{d}$ instead of its support  \citep[\eg,][]{ben2010theory} which is the focus of most DG methods \citep[\eg,][]{ganin2016domain}.
Note that previous conditions are nevertheless generally not necessary and could be too stringent to be achievable. 
To our knowledge, \cref{thm:main_paper} is the first characterization of necessary and sufficient conditions, which gives better insights into the essential goal for optimal IDG and provides a guide for deriving the \emph{least stringent} objectives in practice. 
}


The risk minimization \pcref{eq:min_risk} shows that one must have some knowledge about the target domains to learn optimal representations for IDG. 
Access to targets might seem unrealistic, but without such knowledge or additional assumptions it is provably impossible to beat even \emph{constant} representations.


\begin{proposition}[No free lunch for IDG]\label{prop:impossibility}
Let $d_s$ be any source domain, $Z_{d_s}$ be any representation 
chosen on source $d_s$, and $C \in \repspace$ be a constant representation.
Under minor assumptions, for every ``good'' target domain 
outside the source's support on which 
$Z_{d_s}$ outperforms $C$ for IDG, 
there are many ``bad'' target domains 
on which $Z_{d_s}$ is \emph{strictly worse} than $C$.  Formal statement in \cref{appx:impossib}.
\end{proposition}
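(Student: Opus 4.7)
The plan is to exploit the fact that $Z_{d_s}$ is chosen based only on the source, so on target inputs $x$ for which $Z_{d_s}(x)$ is outside the source's $Z$-image, the worst-case source-risk minimizer---over which $\IDGrisk{\cdot}$ takes a supremum---can take any value, while on target inputs where $Z_{d_s}(x)$ is inside the source's $Z$-image, the value of the source-risk minimizer is pinned by \emph{source} labels rather than target ones. Any agreement between the source-trained $\hat f^Z$ and the target Bayes predictions on $A = \supp{\conddens{X}{d_t}}$ is therefore a coincidence driven by the specific target labeling, and I will construct many targets by permuting that labeling; most of them will break the coincidence.

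Concretely, I would first fix $d_s$, $Z_{d_s}$, the constant $C$, and a ``good'' target $d_t$ with $A \subseteq \inputspace \setminus \supp{\conddens{X}{d_s}}$. Let $\mathcal{P}$ be the group of permutations of $A$ that act on the finite set $\set{\bayesfx(x) \cond x \in A}$; by Assumption~2 this set is the same as the global set of Bayes predictions, so each permutation yields a valid reassignment. For every $\pi \in \mathcal{P}$ define $d_t^\pi$ with the same $X$-marginal as $d_t$ but with $\bayesfx$ composed with $\pi$ on $A$. Assumption~1 (generalized covariate shift) is preserved because only the labeling on $A$ changes, and the full-support hypothesis on $\dens{\Ds, \Dt}$ ensures that each $(d_s, d_t^\pi)$ has positive density under $\dens{\Dt \cond d_s}$.

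Second, I would show that the $\pi$-average of the IDG integrand of $Z_{d_s}$ on $\{d_t^\pi\}$ is at least the corresponding average for $C$. Split $A$ into $A_\mathrm{free}$ (where $Z_{d_s}(x) \notin Z_{d_s}(\supp{\conddens{X}{d_s}})$) and $A_\mathrm{fix}$. On $A_\mathrm{free}$, the supremum over source-risk minimizers selects an adversarial $\hat f^Z(Z(x))$ maximizing $\ell$, so its contribution dominates $C$'s contribution pointwise. On $A_\mathrm{fix}$, $\hat f^Z(Z(x))$ is $\pi$-independent, and averaging the label $\pi(\bayesfx(x))$ over $\pi \in \mathcal{P}$ replaces it with a uniform draw from the Bayes-prediction set, so both $Z_{d_s}$'s and $C$'s averaged contribution reduce to fixed predictors evaluated against the same uniform label mixture; since $C$'s source-risk minimizer $\hat a^C$ is itself constant, a direct pointwise comparison yields the averaged inequality. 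Taking the supremum over $\hat f^Z$ before averaging only tightens it.

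Combining: at $\pi = \mathrm{id}$ the IDG integrand of $Z_{d_s}$ is strictly below that of $C$ by assumption, so by the averaging inequality many $\pi \in \mathcal{P}$ must produce $d_t^\pi$ on which $Z_{d_s}$'s integrand strictly exceeds $C$'s. A counting (Markov-type) argument converts this into a lower bound on the number of ``bad'' permutations per ``good,'' and the full-support hypothesis lifts the counting bound to a lower bound on $\dens{\Dt \cond d_s}$-measure of bad targets. The hardest step will be handling the asymmetry between $A_\mathrm{fix}$ and $A_\mathrm{free}$ when the $Z$-images of source and target overlap only partially and $\conddens{X}{d_t}$ is non-uniform, which likely requires the unstated ``minor assumptions'' (mild regularity on $\ell$ and on $\dens{\Dt \cond d_s}$) to make the combinatorial-to-measure translation clean and to rule out degenerate constant-representation corner cases.
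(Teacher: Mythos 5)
Your approach is genuinely different from the paper's, but the proposal has two concrete gaps.

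The paper's proof is a direct construction: since $Z_{d_s}$ strictly outperforms the constant on $d_t^g$, some target input $x_{\neq C}$ must receive a representation on which the source-trained predictor outputs something other than the constant label $\act{}_C$; the paper then builds $d_t^b$ by placing $1-\delta$ mass on $x_{\neq C}$, labelling it $\act{}_C$, spreading the remaining $\delta$ mass as in the source, and letting $\delta$ range over an interval — this yields a continuum of valid bad targets. You instead keep the $X$-marginal fixed and average the IDG integrand over label permutations of the target labeling, which is a legitimate alternative route but needs more care than the proposal supplies.

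First, the pointwise-dominance claim on $A_\mathrm{free}$ is wrong as stated. If $x, x' \in A_\mathrm{free}$ share the same representation $z$ but $\pi(\bayesfx(x)) \neq \pi(\bayesfx(x'))$, the adversarial $\hat f^Z$ must commit to a single value $\hat f^Z(z)$ and cannot be wrong on both, while the constant prediction $\act{}_C$ can be wrong on both — so $C$ can incur strictly larger pointwise risk. What is true is only the $\pi$-averaged weak dominance (the adversary may always fall back to the $\pi$-independent choice $\act{}_C$), and the argument should be stated that way rather than pointwise.

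Second, and more seriously, the averaging argument only shows that \emph{at least one} $\pi$ yields a bad target, whereas the formal statement in \cref{appx:impossib} promises \emph{multiple} bad target domains, and the paper gets this automatically from the free parameter $\delta$. The Markov-type counting argument you gesture at does not deliver multiplicity: with the $0$-$1$ loss bounded in $[0,1]$, knowing $\sum_\pi W(\pi) \leq 0$ and $W(\mathrm{id}) > 0$, where $W(\pi)$ is the gap between the constant's and $Z_{d_s}$'s IDG integrand on $d_t^\pi$, only gives $\sum_{\pi \neq \mathrm{id}} W(\pi) \leq -W(\mathrm{id})$, which is compatible with exactly one bad permutation. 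To repair this, once a single bad labeling is found you could borrow the paper's $\delta$ trick and vary the $X$-marginal to produce a continuum; as written, the ``many'' part of the claim is not established.
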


\Cref{prop:impossibility} shows that target knowledge is necessary for learning useful representations in IDG. 
This may explain why previous DG methods have been unable to outperform ERM in standard benchmarks \cite{gulrajani2021in}: the knowledge they have access to is insufficient \edit{to generalize}. Taken together, \cref{prop:impossibility,thm:main_paper} say that either you have access to target domains $d_t$, in which case you can achieve an IDG risk that matches supervised learning, or you do not access $d_t$, in which case any representation learning algorithm can achieve worse IDG risk than a constant.

\section{Learning representations under covariate shift}
\label{sec:learning}

\vspace{-.25\baselineskip}
\subsection{Self-supervised learning using domain-agnostic augmentations}
\vspace{-.25\baselineskip}
\label{sec:learning:aug}
Our characterization of optimal representations for IDG (\cref{thm:main_paper}) requires labeled data from all domains, which is impractical. We show how this can be overcome with self-supervised learning (SSL), which is a technique for training representations without direct access to labels, and a particular class of data augmentations. E.g, in CLIP, images are augmented with alt-text collected on the internet and invariance is enforced between the representations of the image and its text pair \citep{radford2021learning}. Representations learned like this preserve discriminative information about all downstream tasks $Y$ whose label information is preserved by the augmentation \citep[e.g.,][]{dubois2021lossy}.

More precisely, an augmentation $A$ is a random variable sampled conditionally from the input $X$. 
The key requirement is that augmentations retain task information. Specifically, if any samples $x,x' \in \inputspace{}$ have the same augmentation conditional $\conddens{\augm{}}{x} = \conddens{\augm{}}{x'}$, then their Bayes predictions must be the same $\bayesfx(x) = \bayesfx(x')$. With such $A$, one can learn an encoder that minimizes the risk $\risk{Y}{Z}$ by instead maximizing mutual information $\MI{A}{Z}$. 
Intuitively, if $\rv Z$ has all augmentation information, then it must have information about the conditional $\conddens{\augm{}}{X}$, and thus the Bayes prediction $\bayesfx(X)$.  

This suggests learning optimal representations for IDG by replacing \cref{eq:min_risk} with a maximization of $\MI{A}{Z}$. Unfortunately, fully optimizing $\MI{A}{Z}$ w.r.t. $\conddens{Z}{X}$ is not generally possible under the support constraint \cref{eq:min_risk}. This can be overcome under a \edit{\emph{domain-agnostic}} assumption, which requires that \edit{the set of possible augmentation distributions is the same across domains}, \ie, $\set{\conddens{\augm{}}{x} \cond x \in \supp{\conddens{X}{d}} } = \set{\conddens{\augm{}}{x} \cond x \in \mathcal{X} }$.

\begin{proposition}\label{prop:ssl_aug} 
\vspace{-.5\baselineskip}
Let $\conddens{\augm{}}{X}$ be a domain-agnostic augmenter.
Then any optimal solution $\conddens{Z^*}{X}$ of the following objective is optimal for IDG: 
\begin{align}
\conddens{Z^*}{X} \in  \argmax_{\conddens{Z}{X}}  \ \MI{\augm{}}{Z} \quad \subject \quad \forall\ d \in \domspace, \ \supp{\conddens{Z}{d}} = \supp{\dens{Z}}  \label{eq:aug} 
\end{align}
\vspace{-1\baselineskip}
\end{proposition}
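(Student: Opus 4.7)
My plan is to reduce \cref{prop:ssl_aug} to \cref{thm:main_paper}. The key observation is that \cref{eq:min_risk} and \cref{eq:aug} share the \emph{same} feasible set (encoders whose marginal support is domain-invariant), so it suffices to show that any maximizer of $\MI{\augm{}}{Z}$ over this set is also a minimizer of $\risk{Y}{Z}$ over the same set; optimality for IDG then follows immediately from \cref{thm:main_paper}.

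First, I would exhibit a reference feasible encoder $Z^0$ that attains the information-theoretic ceiling $\MI{\augm{}}{Z^0} = \MI{\augm{}}{X}$. The natural candidate is the deterministic encoder $Z^0 = g(X)$, where $g(x)$ indexes the conditional distribution $\conddens{\augm{}}{x}$. The domain-agnostic assumption states that $\{\conddens{\augm{}}{x} : x \in \supp{\conddens{X}{d}}\}$ does not depend on $d$, which makes $\supp{\conddens{Z^0}{d}}$ constant across domains, so $Z^0$ is feasible. By construction $\augm{} \perp X \mid Z^0$, giving equality in data-processing and thus the ceiling above.

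Second, I would connect this ceiling back to Bayes risk. The augmentation informativeness hypothesis says $\conddens{\augm{}}{x} = \conddens{\augm{}}{x'}$ implies $\bayesfx(x) = \bayesfx(x')$, so $\bayesfx$ factors through $g$, and $Z^0$ reproduces the Bayes predictor, giving $\risk{Y}{Z^0} = \risk{Y}{X}$. Now take any optimal $Z^*$ of \cref{eq:aug}. By optimality, $\MI{\augm{}}{Z^*} \geq \MI{\augm{}}{Z^0} = \MI{\augm{}}{X}$, and because $\augm{} - X - Z^*$ is a Markov chain (the encoder is $\conddens{Z^*}{X}$) the reverse inequality also holds. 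Equality in data-processing then forces $\augm{} \perp X \mid Z^*$, so $\conddens{\augm{}}{X}$ is measurable with respect to $Z^*$, and consequently so is $\bayesfx(X)$. This yields $\risk{Y}{Z^*} = \risk{Y}{X}$, the minimum possible. Feasibility plus risk minimality then give optimality for IDG via \cref{thm:main_paper}.

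\textbf{Main obstacle.} The delicate point is bridging between the single scalar equality $\MI{\augm{}}{Z^*} = \MI{\augm{}}{X}$ taken under the aggregate distribution (marginalized over $\dens{\Ds,\Dt}$) and a per-domain statement about recovering $\bayesfx$ inside each $\supp{\conddens{X}{d}}$. The support-invariance constraint is what glues these together: since $\supp{\conddens{Z^*}{d}}$ is the same for every $d$, no individual domain ``sees'' a strictly smaller image, so sufficiency under the aggregate transfers to sufficiency on each domain. A secondary subtlety, and precisely the reason the domain-agnostic assumption cannot be dropped, is ensuring that the reference encoder $Z^0$ is itself feasible; without domain-agnosticity the support of $Z^0$ would vary across domains and the ceiling step of the argument would collapse.
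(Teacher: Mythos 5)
Your proof is correct and follows essentially the same route as the paper's: reduce to \cref{thm:main_paper} by showing any maximizer of $\MI{\augm{}}{Z}$ under the support constraint achieves $\risk{Y}{Z^*} = \risk{Y}{X}$, via (i) a feasible reference encoder that attains the ceiling $\MI{\augm{}}{X}$ and (ii) the observation that equality in the data-processing inequality pins down $\conddens{\augm{}}{\cdot}$ as a function of $Z^*$, so the Bayes predictor factors through. Your explicit reference encoder $Z^0 = g(X)$ that indexes the conditional $\conddens{\augm{}}{x}$ is precisely what the paper gets by invoking \cref{lem:exists} with $\augm{}$ in place of $Y$ (for log-loss, the Bayes predictor \emph{is} the conditional, so $\phi(\bayesfx(X))$ is the indexing map), and your use of equality in DPI to get $\augm{} \indep X \cond Z^*$ is the content of \cref{lem:CMI} specialized to log-loss. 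Two small notes: you should be a bit more careful with \cref{asmp:aug_cardinality} (needed so the indexing map actually fits into $\repspace$) and \cref{asmp:finite_aug_entropy} (needed so maximizing $\MI{\augm{}}{Z}$ is well posed). Also, your ``main obstacle'' about bridging aggregate-level equality to per-domain statements is not actually an obstacle for this proposition --- once you know $\risk{Y}{Z^*} = \risk{Y}{X}$ and the support constraint holds, \cref{thm:main_paper} and its supporting \cref{lemm:agree} already do all the per-domain work; you do not need to re-prove it here.
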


\begin{figure}[t]
    \vspace{-1.\baselineskip}
    \centering
    \begin{subfigure}[b]{0.28\textwidth}
            \includegraphics[width=\linewidth, trim={10px 40px 0 10px}]{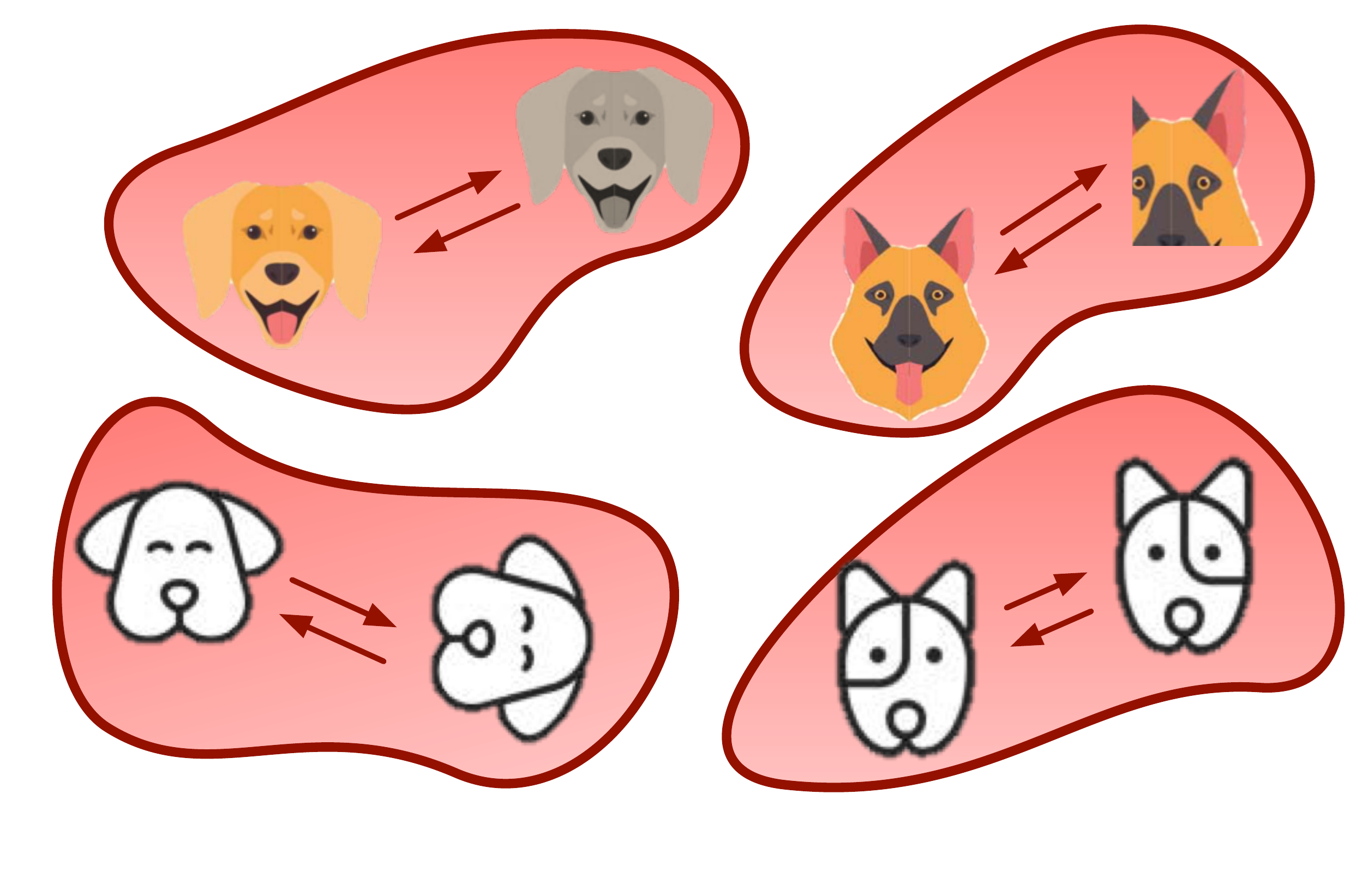}
            \caption{standard augmentations}
            \label{fig:aug:standard}
    \end{subfigure}
    \begin{subfigure}[b]{0.29\textwidth}
            \includegraphics[width=\linewidth, trim={25px 40px 5px 40px}]{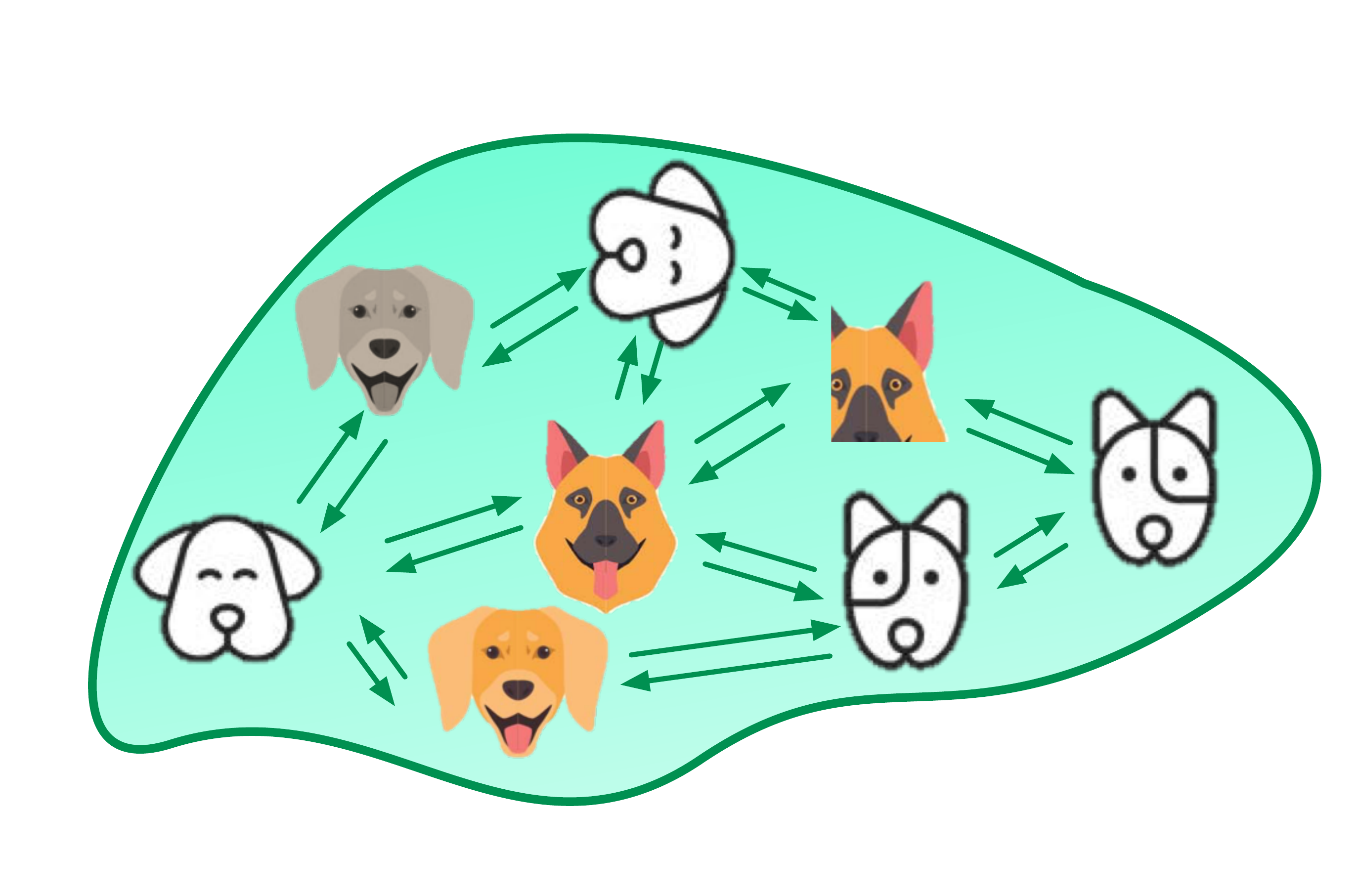}
            \caption{supervised augmentations}
            \label{fig:aug:oracle}
    \end{subfigure}
    \begin{subfigure}[b]{0.41\textwidth}
            \includegraphics[width=\linewidth, trim={0px 15px 0px 0px}]{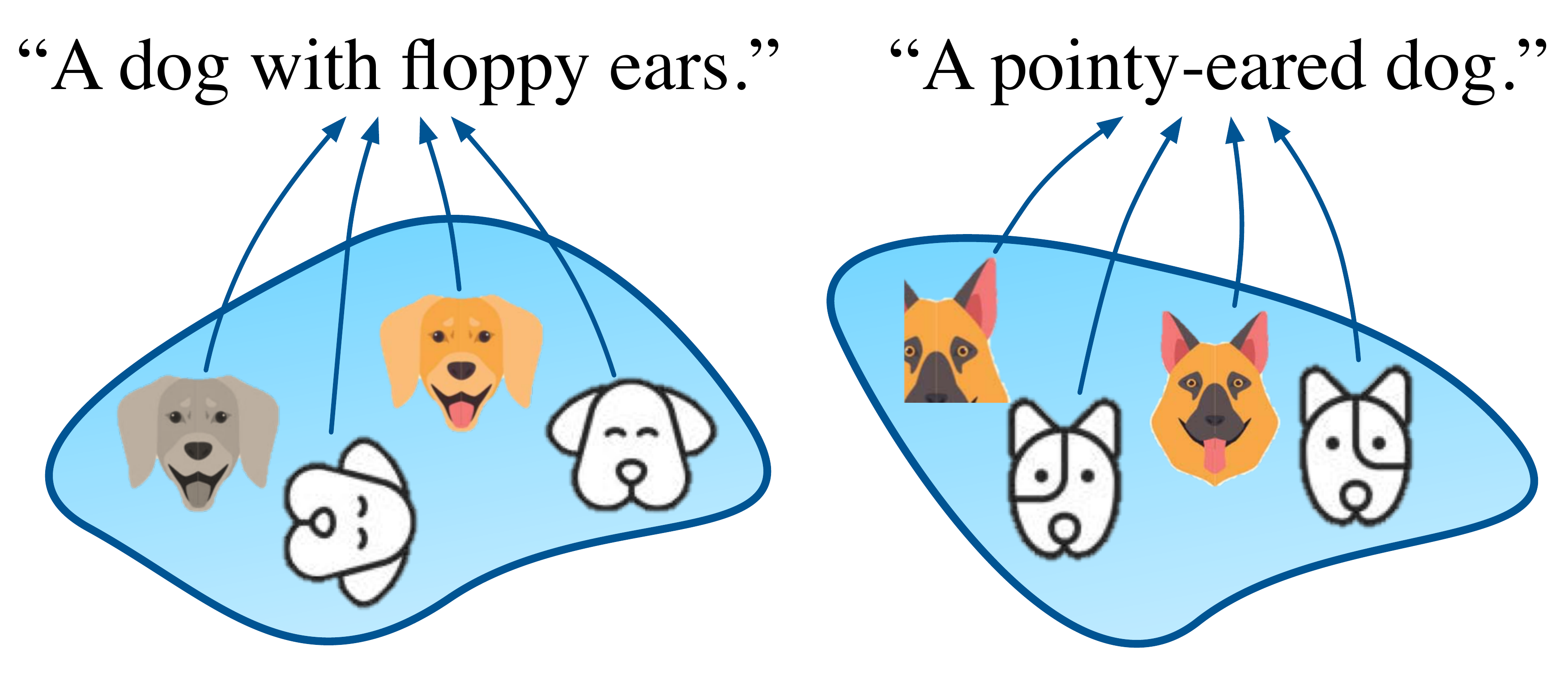}
            \caption{image-text augmentations}
            \label{fig:aug:image_text}
    \end{subfigure}
    \vspace{-.5\baselineskip}
    \caption{Image-text augmentations are practical domain-agnostic augmentations. Arrows denote \edit{augmenters}. Bubbles denote inputs that have the same representations, as induced by predicting the augmentations. (a) Standard augmentations are not domain-agnostic. (b) Supervised augmentations uniformly augment inputs inside their label class, irrespective of domains. (c) Image-text augmentations are (nearly) domain-agnostic as they map images across domains to similar descriptions.}
    \label{fig:aug}
    \vspace{-\baselineskip}
\end{figure}

\Cref{prop:ssl_aug} shows that we can still learn IDG optimal representations without labels if we have access to the right augmentations. How realistic are those augmentations?
For 0-1 loss $\ell$, the most likely label should be preserved, which is satisfied by standard image augmentations like rotations and color jittering.
Those augmentations are nevertheless not domain-agnostic for typical domains (\eg sketches and photos), since outputs $A$ are correlated with the input's domain $D$. See \cref{fig:aug:standard}.

A practical choice of augmentation that is nearly domain-agnostic, is a mapping from images to text descriptions, as with CLIP \citep{radford2021learning} \edit{which uses text-image pairs}.
Image-text augmentations have many advantages. First, text augmentations preserve label information for many downstream tasks.
Second, they are close to being domain-agnostic, since images from different domains (\eg, sketches and photos) but similar semantics are often mapped to similar descriptions.\footnote{Although text descriptions might contain domain information (\eg, referring to ``sketch''), they are still much better than standard augmentations that rarely map together images from different domains.} 
(\cref{fig:aug:image_text}).
\edit{This gives insights into the open question \citep{radford2021learning} about why CLIP's representations are so robust compared to other SSL methods}. 
Finally, image-text pairs are easy to access in practice given their abundance on the internet. 
\edit{Many other multi-modal augmentations, e.g., audio-video \citep{wang2021multimodal}, are also likely domain-agnostic and can be explored in practice.} 


\edit{In practice, even the domain information $D$ is usually unknown. 
One can nevertheless still optimize \pcref{eq:aug} by replace the support constraint with a stronger one that does not rely on $D$} \eg, minimizing $\MI{Z}{X}$ (see \cref{sec:ent_bottleneck}), . This highlights the potential of \cref{prop:ssl_aug}: if one can find a large source of inputs $X$ and domain-agnostic augmentations $A$ (e.g., the 400M image-text pairs of CLIP) then one can, in principle, learn optimal representations for IDG on \emph{any} downstream task $Y$ that $A$ preserves. 

\subsection{Practical objectives}
\label{sec:learning:prac_obj}

We now design practical objectives for learning optimal representations without labels. 
\Cref{prop:ssl_aug} does provide an objective but it is impractical as it involves constrained optimization.
We can nevertheless convert it to the following unconstrained objective by using a Lagrangian relaxation and introducing a \emph{domain bottleneck} $\bottleneckmini{}$ that enforces support match,
\ifarxiv
\begin{equation}\label{eq:lagrangian_relaxation}
\argmin_{\conddens{Z}{X}} \quad  \editarxiv{-\MI{\augm{}}{Z}} + \lambda \bottleneckmini{},
\end{equation}
\else
\begin{equation}\label{eq:lagrangian_relaxation}
\argmin_{\conddens{Z}{X}} \quad  \CH{\augm{}}{Z} + \lambda \bottleneckmini{},
\end{equation}
where $\CH{A}{Z}$ replaces $\MI{A}{Z} = \H{A} - \CH{A}{Z}$ as $\H{A}$ is a constant with respect to the encoder. 
\fi
\Cref{eq:lagrangian_relaxation} is a valid reformulation of \cref{prop:ssl_aug} as long as minimizing $\bottleneckmini{}$ while maximizing $\MI{\augm{}}{Z}$ enforces the support constraint in \cref{eq:aug}.
Below, we provide different choices of such $\bottleneckmini{}$ each of which results in a different SSL objective.
In practice, however, terms in \cref{eq:lagrangian_relaxation} are hard to estimate from finite samples.
We now discuss two variational bounds that can be efficiently estimated and optimized with stochastic gradient descent \citep{bottou2010large}. 
For simplicity, we use a deterministic encoder $\detencoder{} : \inputspace{} \to \repspace{}$ for the rest of the paper.
Detailed derivations are in \cref{appcs:prac_objectives}. 

\ifarxiv
For both practical objectives we use \editarxiv{a contrastive variational lower bound on $\MI{A}{Z}$ based on InfoNCE \cite{oord2018info_nce}}, which is standard in SSL.
\editarxiv{Specifically, for a sample $X$, we first obtain the augmented `positive' $A$ by sampling from $\conddens{A}{X}$.
We then obtain $n$ augmented `negatives' $\set{\augm{}^-_i}_{i=1}^n$ i.i.d.\ from the marginal $\dens{A}$ by first independently sampling $\mathbf{X} \defeq \set{X^-_i}_{i=1}^n$ from $\dens{X}$ and then sampling $\augm{}^-_i$ from $\conddens{\augm{}}{X^-_i}$.
We denote $\mathbf{\augm{}} \defeq \set{\augm{},\augm{}^-_1,\dots,\augm{}^-_n}$.}
InfoNCE then uses a critic $\paramcritic$ to score how likely each $A' \in \mathbf{\augm{}} $ is to be positive, \editarxiv{resulting in the following variational bound,}
\begin{equation}\label{eq:implicit_contrastive}
\editarxiv{%
\MI{\augm{}}{Z} 
\geq \log(n+1) + \E{\dens{\mathbf{\augm{}},X, Z}}{ \log \paraminfonce}.
}
\end{equation}
\else
For both practical objectives we use an upper bound on $\CH{A}{Z} \leq \E{\dens{A,Z}}{- \log q(A\cond Z)}$, where $q$ is the contrastive variational distribution as in InfoNCE \cite{oord2018info_nce}, which is standard in SSL.
Specifically, for a sample $X$, we construct $\mathbf{X} \defeq \set{X,\nega{X}_1,\dots,\nega{X}_n}$ where each $\nega{X}_i$ are \iid sampled from $\dens{X}$. Then we obtain a collection $\mathbf{\augm{}} \defeq \set{\posi{\augm{}},\nega{\augm{}}_1,\dots,\nega{\augm{}}_n}$ of one positive augmentation $\posi{\augm{}}$ and $n$ negatives $\nega{\augm{}}_i$ by independently sampling an augmentation from  $\conddens{\augm{}}{X'}$ for each example  $X' \in \mathbf{X}$.
InfoNCE then uses a critic $\paramcritic$ to score how likely each $A' \in \mathbf{\augm{}} $ is to be positive, \ie,
\begin{equation}\label{eq:implicit_contrastive}
 \CH{\augm{}}{Z} \leq \E{\dens{X, \mathbf{\augm{}}, Z}}{- \log \paramtieclf(\posi{\augm{}} \cond Z)} \text{\ \ \ where \ \ \ } \paramtieclf(\posi{\augm{}} \cond Z) \defeq    \frac{\exp{\paramcritic(\posi{\augm{}},Z)}}{\sum_{\augm{}'\in \mathbf{\augm{}} }  \exp{\paramcritic(\augm{}',Z)}}.
\end{equation}
\fi
When $\augspace{} = \inputspace{}$, one can tie the parameters of the critic and the encoder by passing augmentations through the encoder and taking an inner product, \ie, $\paramcritic(A,Z) \defeq \detencoder{}(A)^TZ$.
Many previous DG regularizers \citep[e.g.,][]{ganin2016domain, li2018cdann, li2018domain} could be valid domain bottlenecks.
In the following, we discuss two possible $\bottleneckmini{}$, the first of which is novel. 
\subsubsection{Contrastive Adversarial Domain Bottleneck (CAD)}
%

\ifarxiv
\begin{wrapfigure}{R}{0.48\textwidth}
\vspace{-2\baselineskip}
\begin{minipage}{0.49\textwidth}
 \centering
\newcommand{\vspacing}{$\vphantom{\sum_{i}^t}$}
\begin{algorithm}[H]
\small
\caption{CAD objective}
\label{alg:loss_cad}
\begin{algorithmic}[1]
\Require $\detencoder{}, \paramcritic,  D, X, n$
\State \vspacing$Z \leftarrow \detencoder(X)$
\State  \vspacing $\editarxiv{A} \leftarrow \algsample{\conddens{A}{X}}$
\State \vspacing $\set{(\nega{D}_i, \nega{X}_i, \nega{A}_i)}_{i=1}^n \xleftarrow[]{\text{i.i.d.}} \algsample{\dens{D, X, A}}$
\State \vspacing $\mathbf{X}, \mathbf{A} \leftarrow \editarxiv{\set{\nega{X}_i}_{i=1}^n}, \set{\editarxiv{A}} \cup \set{\nega{A}_i}_{i=1}^n$
\State \vspacing \editarxiv{$\mathbf{X}_{\neg D} \leftarrow \set{\nega{X}_i \cond \nega{D}_i \neq D, i \in [n]}$}
\State \vspacing $ \Laug \leftarrow -\log \paraminfonce$  \Comment{\editarxiv{$-\MI{A}{Z}$}}
\State \vspacing \editarxiv{$ \Lsupp \leftarrow - \log \frac{\sum_{X'\in \mathbf{X}_{\neg D}}\exp{\detencoder{}(X')^TZ}}{\sum_{X''\in \mathbf{X} }  \exp{\detencoder{}(X'')^TZ}}$} \Comment{$\MI{Z}{D}$}
\State  \Return \vspacing $\Lcab =  \Laug + \lambda  \Lsupp$ 
\end{algorithmic}
\end{algorithm}
\end{minipage}
\vspace{-2\baselineskip}
\end{wrapfigure}
\else
\begin{wrapfigure}{R}{0.48\textwidth}
\vspace{-2\baselineskip}
\begin{minipage}{0.48\textwidth}
 \centering
\newcommand{\vspacing}{$\vphantom{\sum_{i}^t}$}
\begin{algorithm}[H]
\small
\caption{CAD objective}
\label{alg:loss_cad}
\begin{algorithmic}[1]
\Require $\detencoder{}, \paramcritic,  D, X, n$
\State \vspacing$Z \leftarrow \detencoder(X)$
\State  \vspacing $\posi{A} \leftarrow \algsample{\conddens{A}{X}}$
\State \vspacing $\set{(\nega{D}_i, \nega{X}_i, \nega{A}_i)}_{i=1}^n \xleftarrow[]{\text{i.i.d.}} \algsample{\dens{D, X, A}}$
\State \vspacing $\mathbf{X}, \mathbf{A} \leftarrow \set{X} \cup \set{\nega{X}_i}_{i=1}^n, \set{\posi{A}} \cup \set{\nega{A}_i}_{i=1}^n$
\State \vspacing $\mathbf{X}_{D} \leftarrow \set{X} \cup \set{\nega{X}_i \cond \nega{D}_i = D, i \in [n]}$
\State \vspacing $ \Laug \leftarrow -\log \frac{\exp{\paramcritic(\posi{\augm{}},Z)}}{\sum_{\augm{}'\in \mathbf{\augm{}} }  \exp{\paramcritic(\augm{}',Z)}}$  \Comment{$\CH{A}{Z}$}
\State \vspacing $ \Lsupp \leftarrow \log \frac{\sum_{X'\in \mathbf{X}_D}\exp{\detencoder{}(X')^TZ}}{\sum_{X''\in \mathbf{X} }  \exp{\detencoder{}(X'')^TZ}}$ \Comment{$\MI{Z}{D}$} 
\State  \Return \vspacing $\Lcab =  \Laug + \lambda  \Lsupp$ 
\end{algorithmic}
\end{algorithm}
\end{minipage}
\vspace{-2\baselineskip}
\end{wrapfigure}
\fi

Our first domain bottleneck minimizes $\bottleneckmini{}=\MI{Z}{D}$, which enforces support match using a KL divergence.
Dropping constants w.r.t. $Z$ we thus aim to maximize $\CH{D}{Z}$.
Domain-adversarial neural network \citep[DANN,][]{ganin2016domain} does so by ensuring that a domain classifier $\paramdomclf$ cannot predict domains from representations, \ie, it maximizes $\E{\dens{D,Z}}{- \log \paramdomclf(D \cond Z)} \geq \CH{D}{Z}$ w.r.t. encoder parameter $\varphi$ but minimizes it w.r.t. $\phi$.
However, DANN suffers from two issues:
\begin{inlinelist}
\item it maximizes an \emph{upper} bound on the desired term;
\item it requires adversarial training, which is challenging in practice. 
\end{inlinelist}

To overcome these issues, we construct $q(D \cond Z)$ without introducing additional parameters and with a bound that is tight with enough samples.
\ifarxiv
In short, using the equality $\conddens{D}{Z}=\E{\conddens{X}{Z}}{\conddens{D}{X}}$, we set our variational distribution to 
\editarxiv{%
$q(D \cond Z) = \E{\paramXclf}{\hatp(D \cond X)}$, where $\paramXclf(X\cond Z)$ is a contrastive variational distribution of $\conddens{X}{Z}$ constructed with samples $\mathbf{X}$ and a critic $\detencoder{}(X)^TZ$ tied with the encoder, $\hatp$ is a count estimate of $\conddens{D}{X}$.}
Detailed derivations and explanations are in \cref{appcs:prac_objectives:cad}.
The resulting contrastive adversarial domain (CAD) objective is in \cref{alg:loss_cad}.
First, sample domains \editarxiv{$\mathbf{D} \defeq \set{\nega{D}_i}_{i=1}^n$} for each $X' \in \mathbf{X}$.
Then collect inputs associated with \editarxiv{a different domain from the current domain $D$, \ie, $\mathbf{X}_{\neg D} \defeq \set{\nega{X}_i \cond \nega{D}_i \neq D, i \in [n]}$.
Ignoring constants, the final loss is 
\begin{equation}\label{eq:loss_cad}
 \Lcab(\varphi,\psi) \defeq \E{\dens{\mathbf{D}, \mathbf{X}, \mathbf{\augm{}},Z}}{- \log \paraminfonce -
 \lambda \log \pa{\sum_{X' \in \mathbf{X}_{\neg D}} \paramXclf(X' \cond Z)}}.
\end{equation}}%
In \cref{appcs:prac_objectives:ccad}, we also derive a conditional variation of CAD that minimizes $\MI{Z}{D \cond Y}$, which can be used when labels are available and supervised augmentations are used. 
\else
In short, using the equality $\conddens{D}{Z}=\E{\conddens{X}{Z}}{\conddens{D}{X}}$, we set our variational distribution to $\paramXDclf(D \cond Z) = \E{\paramXclf}{\hatp(D \cond X)}$, where $\hatp$ is a count estimate of $\conddens{D}{X}$, and $\paramXclf(X\cond Z)$ is a contrastive family similar to \cref{eq:implicit_contrastive} but with critic $\detencoder{}(X)^TZ$, 
Detailed derivations and explanations are in \cref{appcs:prac_objectives:cad}.
The resulting contrastive adversarial domain (CAD) objective is detailed in \cref{alg:loss_cad}.
First, sample domains $\mathbf{D} \defeq \set{D,\nega{D}_1,\dots,\nega{D}_n}$ for each $X' \in \mathbf{X}$.
Then collect inputs associated with the current domain $D$, \ie, $\mathbf{X}_{D} \defeq \set{X} \cup \set{\nega{X}_i \cond \nega{D}_i = D, i \in [n]}$.
Finally, compute $\paramXDclf(D \cond Z) = \sum_{X' \in \mathbf{X}_{D}} \paramXclf(X' \cond Z)$.
The resulting criterion is 
\begin{equation}\label{eq:loss_cad}
 \Lcab(\varphi,\psi) \defeq \E{\dens{\mathbf{D}, \mathbf{X}, \mathbf{\augm{}},Z}}{- \log \paramtieclf(\posi{\augm{}} \cond Z) +
 \lambda \log \pa{\sum_{X' \in \mathbf{X}_{D}} \paramXclf(X' \cond Z)}}.
\end{equation}
In \cref{appcs:prac_objectives:ccad}, we also derive a conditional variation of CAD that minimizes $\MI{Z}{D \cond Y}$, which can be used when labels are available and supervised augmentations are used. 
\fi



\subsubsection{Entropy Bottleneck (Ent)} \label{sec:ent_bottleneck}
Our second domain bottleneck is the entropy bottleneck (Ent) that minimizes $\H{Z} = \MI{Z}{X} \geq \MI{Z}{D}$, where the first equality uses the encoder's determinism.
Ent enforces support match by removing all information that is not needed to maximize $\MI{Z}{A}$. 
In particular, minimizing $\MI{Z}{X}$ is more stringent than $\MI{Z}{D}$, as it also matches the representations inside a domain.
The advantage of Ent is that it does not require domain samples $\mathbf{D}$, which are rarely accessible in SSL. 
We consider the standard variational bound used in neural compression \citep{balle2016end,theis2017lossy}, $\H{Z}\leq \E{\dens{Z}}{- \log \paramvardist(Z)}$, where an entropy model $\paramvardist(Z)$ is used. This leads to
\ifarxiv
\begin{equation}\label{eq:loss_ent} 
\Leb(\psi,\varphi, \theta) \defeq 
\E{\dens{X, \mathbf{\augm{}}, Z}}{- \log \editarxiv{\paraminfonce} -
 \lambda \log \paramvardist(Z)}.
\end{equation}
\else
\begin{equation}\label{eq:loss_ent} 
\Leb(\psi,\varphi, \theta) \defeq 
\E{\dens{X, \mathbf{\augm{}}, Z}}{- \log \paramtieclf(\posi{\augm{}} \cond Z) -
 \lambda \log \paramvardist(Z)}.
\end{equation}
\fi







\section{Related work}
\label{sec:related}

\textbf{Provably robust representations under covariate shift.}
\edit{
Previous work mostly focuses on domain generalization bounds for robust representations. 
\citet{ben2007analysis,ben2010theory} bound the target risk using the source risk, a divergence between source and target distributions, and the joint optimal risk over source and target domains. \citet{mansour2009domain} generalizes these results from 0-1 loss to more general losses. 
\citet{johansson2019support} takes this further by deriving a support-based bound.
In our setting, these bounds only \emph{hint} towards a sufficient condition for optimality, \ie, matching the marginal $\conddens{Z}{d}$ or its support while minimizing $\risk{Y}{Z}$. However, these bounds can often be loose and the implied sufficient conditions are neither necessary nor generally achievable. 
\citet{pmlr-v9-david10a} suggests that separately minimizing $\risk{Y}{Z}$ or matching the marginal is not sufficient, while \citet{zhao2019learning} also proves minimizing only the source risk $\domrisk{Z}{d_s}$ is not sufficient; but none of them proves the desired necessary condition.
%
%
Our work distinguishes from previous work on three key aspects:
\begin{inlinelist}
\item we are the first to study and formalize optimally robust representations, and provide the \textit{achievable} sufficient and \textit{necessary} conditions;
\item we prove that one can practically learn optimal $Z^*$ with SSL using domain-agnostic augmentations;
\item we consider a more general framework with any standard losses and a less stringent {generalized} covariate shift assumption,
\end{inlinelist}}
Still, our work is more specific than others, as we consider {idealized} DG and unrestricted predictors $\mathcal{H}$. 

\textbf{Practical objectives for DG.}
The most popular DG methods aim to learn domain-invariant representation by minimizing various divergernces between the marginal distributions $\conddens{Z}{d}$ and $\dens{Z}$ \citep{long2015learning,ganin2016domain,sun2016coral,long2017deep,li2018domain,shen2018wasserstein,nguyen2021kl}.
Others propose matching the conditional $\conddens{Z}{y, d}$ across domains instead \cite{gong2016domain,li2018cdann,tachet2020domain}. 
These regularizers would all be valid domain bottlenecks $\bottleneckmini{}$.
Another line of work aims at learning $Z$ with invariant predictors $\conddens{Y}{z, d}$ across domains \citep[\eg,][]{arjovsky2019invariant,krueger2021vrex,li2021invariant}.
However, none of these methods outperform ERM with fair model selections \cite{gulrajani2021in}.

\section{Experiments}
\label{sec:experiments}
In our experiments, we aimed to:
\begin{inlinelist}
\item verify our theoretical results in practice; 
\item  investigate our proposed representation learning objectives in practical DG; 
\item  take advantage of pretrained SSL models (in particular, CLIP) to achieve powerful models for DG.
\end{inlinelist}
Unless stated otherwise, we consider a two-stage training setup.
First, the representation learner (``\bob{}'') trains an encoder $\conddens{Z}{X}$ using a specified objective and freezes it. 
Then, the person performing predictions (``\alice{}'') trains her predictor $\fz$ from $\rv Z$ by minimizing the risk 
on source data. 
Finally, the representation $\rv Z$ and predictor $\fz$ are evaluated on target data.
In all experiments, \alice{} uses a linear classifier for $\fz$.
For the Ent bottleneck, we used \citepos{balle2018variational} entropy model.
For the CAD bottleneck we used its conditional version whenever labels were available. 
When a model contains no domain bottleneck, we label it as ``Base''.
For experimental details and additional results see \cref{appcs:exp_details,appcs:exp_results}.

\subsection{Scientific setting: exploring optimal representations for worst-case DG}
\label{main:experiment/scientific}

To validate our theory, we studied optimal representations in a scientific setup that is as close to our IDG framework as possible with log-loss $\ell$.
In particular, we used the PACS dataset \citep{li2017pacs} and approximated the \emph{idealized} DG by treating the dataset as the population distribution, \ie, we did not split datasets into train and test sets.
To approximate the worst-case source predictor, we followed \citet{dubois2020dib} by incorporating the \emph{wrongly labeled target} data to the source domain.
The experimental setup goes as follows:
\begin{inlinelist}
\item \bob{} trains a ResNet-18 \citep{he2016deep} to minimize the objective on labeled data from all domains;
\item \alice{} trains a \emph{worst-case} source classifier $\fz$ on every possible pair of (source, target);
\item the negative target risk (log likelihood) for each $\fz$ is evaluated.
\end{inlinelist}
We reported the log likelihood averaged over 5 seeds.
For more realistic scenarios (\ie non-idealized average-case DG) see \cref{appcs:exp_results/bridge} which replicates the following results.



\begin{figure}[t]
    \vspace{-1\baselineskip}
    \centering
    \begin{subfigure}[b]{0.34\textwidth}
            \raisebox{.75em}{\includegraphics[width=\linewidth]{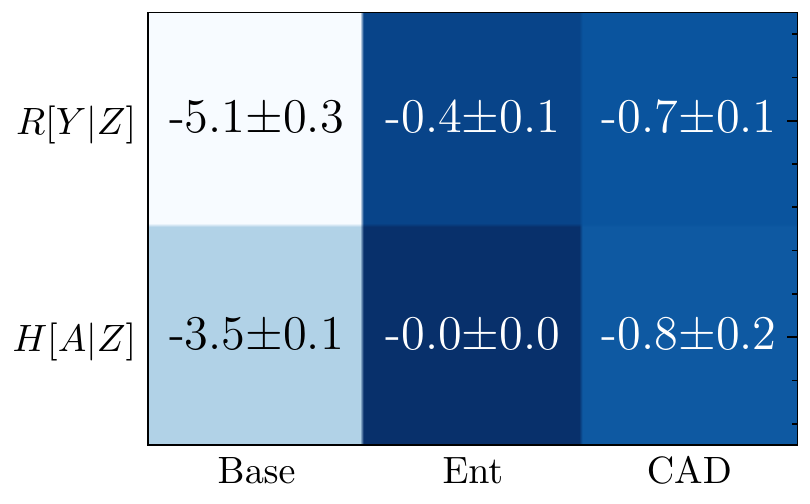}}
            \caption{Effect of different objectives}
            \label{fig:sci_loss_bn_heatmap}
    \end{subfigure}
    \hspace{0.1em}
    \begin{subfigure}[b]{0.32\textwidth}
            \includegraphics[width=0.95\linewidth]{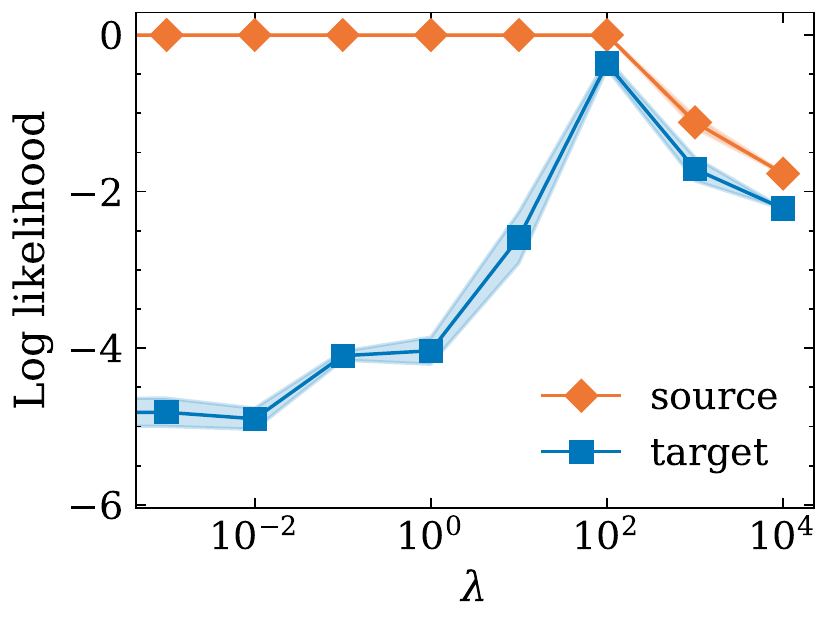}
            \caption{Effect of $\lambda$}
            \label{fig:sci_lambda_effect}
    \end{subfigure}
    \begin{subfigure}[b]{0.32\textwidth}
            \includegraphics[width=0.95\linewidth]{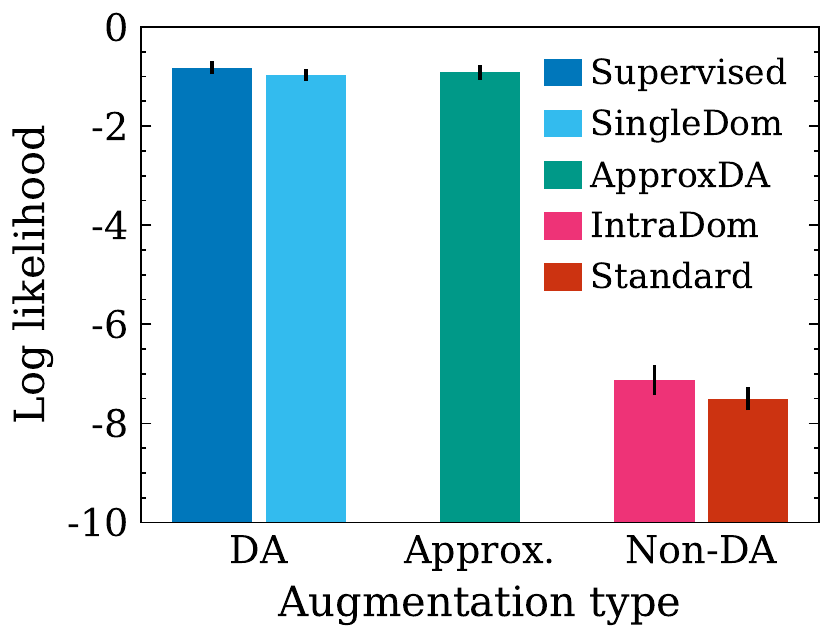}
            \caption{Effect of augmentations}
            \label{fig:sci_inter_dom_aug}
    \end{subfigure}
    \caption{(a)~Adding bottlenecks significantly improves the worst-case DG performance and using domain-agnostic (DA) augmentations ($\CH{A}{Z}$) performs as well as with labels ($\risk{Y}{Z}$). (b)~Increasing the domain bottleneck weight $\lambda$ will improve target performance until it decreases source performance. 
    (c)~DA augmentations are crucial but approx.\ DA aug. might be also be sufficient. 
    }
    \ifarxiv
    \vspace{-.5\baselineskip}
    \else
    \vspace{-1.5\baselineskip}
    \fi
\end{figure}

\paragraph{Do our domain bottlenecks improve worst-case DG?} 
In \cref{fig:sci_loss_bn_heatmap}, we compare IDG performance of representations trained with (Ent, CAD) and without (Base) domain bottlenecks.
We see that both bottlenecks significantly improve the worst-case DG, and nearly achieve the source-domain performance (0 log likelihood).
This shows the importance of support match (\cref{thm:main}) and the effectiveness of our bottlenecks to enforce it. 
In \cref{appcs:exp_results/bridge},
we show that bottlenecks also helps in practical scenarios, \ie, non-idealized average-case DG evaluated with accuracy ($95.9\% \to  96.7\%$).

\paragraph{What is the effect of $\lambda$?}
\cref{fig:sci_lambda_effect} shows the effect of the bottleneck weight $\lambda$ on the worst-case target and source performance.
We see that increasing $\lambda$ will decrease the DG gap.
As a result the target performance improves until $\lambda \approx 10^2$, where source performance starts to decrease.

\paragraph{What if \bob{} has access to domain-agnostic augmentations instead of labels?}
In \cref{sec:learning:prac_obj}, we provide a contrastive objective for using augmentations. 
To show the effectiveness of the objective,
we compared minimizing $\CH{\augm{}}{Z}$ using \cref{eq:implicit_contrastive} to standard supervised risk minimization $\risk{Y}{Z}$ and used the domain-agnostic supervised augmentations (\cref{fig:aug:oracle}).
The $1^\text{st}$ and $2^\text{nd}$ row of \cref{fig:sci_loss_bn_heatmap} show that our objective performs similarly to direct label prediction.

\paragraph{How important is the choice of augmentations?}
\cref{prop:ssl_aug} shows that domain-agnostic (DA) augmentations are sufficient for achieving IDG, but it does not give necessary conditions. 
Here we investigate the effect of using our loss with different choices of augmentations.
Specifically, we used $\Lcab{}$ with five augmentations.
The first two are DA.
`Supervised': augment inputs inside the label class across all domains as in \cref{fig:aug:oracle};
`SingleDom': augment inputs to same label samples from a fixed domain.
The second two are not DA.
`Standard': standard SSL augmentations \cite{chen2020simple} as in \cref{fig:aug:standard};
`IntraDom': augment inputs to same label and same domain samples.
Finally, we consider `ApproxDA', which is approximately DA by augmenting $10\%$ of the time with `Supervised` and $90\%$ of the time with `IntraDom`.
\Cref{fig:sci_inter_dom_aug} shows that the non-DA augmentations give terrible results compared to DA.
Interestingly, `ApproxDA'  also performs very well, which suggests that approximately DA augmentations might be sufficient to learn optimal representations in practice.

\paragraph{What if \bob{} does not have access to target domains?} 
\cref{prop:impossibility} shows that DG without access to target domains is generally impossible.
We empirically verified this by excluding a predefined target $d_t$ domain from \bob{}'s training set, \ie,  $\Lcab{}$ is optimized on 3 of the 4 domains.
\Alice{} then trains a predictor $\fz$ on each source. 
We finally evaluate each $\fz$ on the target domain $d_t$, and average over choices of $d_t$.
The resulting worst-case log likelihood was $-4.2 \pm 0.2$, which is significantly worse than when \bob{} had access to all domains ($-0.8 \pm 0.2$).

\subsection{Approximating optimal representations by exploiting pretrained SSL
}
\label{main:experiment/practical}

As discussed in \cref{sec:learning:aug}, one can learn optimal representations for IDG by performing SSL with a domain bottleneck on a large sample of inputs $X$ and domain-agnostic augmentations $\augm{}$.
This is nearly how CLIP was pretrained (SSL with 400M image-text pairs) except it did not include a domain bottleneck.
In this section, we investigate how to take advantage of CLIP to approximate optimal representations for IDG.
We did so in two simple steps.
First, we froze the pretrained CLIP and added a multi-layer perceptron (MLP) that could effectively finetune CLIP's representations.
Then, we trained the MLP by minimizing our CAD bottleneck and $\risk{Y}{Z}$ on the available data.

In all experiments, we used the standard DomainBed benchmark (with non-MNIST datasets) and protocol \citep{gulrajani2021in}.
In particular, we left out a target domain for evaluation and used the union of other domains for training both the encoder and the classifier. 
Contrary to our scientific setting, \bob{} does not get access to the target domain.  
All our representations were evaluated by fitting a linear classifier on source domains with source validation selection.
As in DomainBed we selected the encoder based on `oracle selection' over 10 hyperparameters, and reported the target accuracy averaged over all choices of targets and 5 random seeds \edit{with standard errors}.
Note that using `oracle selection' is more consistent with our theory since it gets access to the necessary target information (for model selection), as discussed in \cref{appcs:exp_results/domaibed/why_oracle}.
Due to space limit, we only included as baselines `ERM' and `DomainBed SOTA' which for \emph{each} dataset is the best result over all baselines. The extended results and baselines are in \cref{table:clip_on_dombed_full}.
Details in \cref{appcs:exp_details/practical}.
We investigated two pretrained CLIP models with different number of parameters. The larger ViT-B/32 denoted `\clipL{}' and the smaller ResNet-50 denoted `\clipS{}'.

\begin{table}[h]
\caption{\edit{CLIP significantly outperforms the previous SOTA result on DomainBed, as supported by our theoretical analysis}. Finetuning CLIP with our CAD bottleneck consistently improves the robustness of its representations and achieves SOTA performance. 
}
\label{table:clip_on_dombed}
\vspace{-\baselineskip}
\ifarxiv
\begin{center}
\adjustbox{max width=\textwidth}{%
\small
\begin{tabular}{l|ccccc}
\toprule
\textbf{Algorithm}     & \textbf{VLCS}             & \textbf{PACS}             & \textbf{OfficeHome}       & \textbf{TerraIncognita}   & \textbf{DomainNet}              \\
\midrule
ERM                 & 77.6 $\pm$ 0.3            & 86.7 $\pm$ 0.3            & 66.4 $\pm$ 0.5            & 53.0 $\pm$ 0.3            & 41.3 $\pm$ 0.1                      \\
DomainBed SOTA      & 79.9 $\pm$ 0.2            & 87.2 $\pm$ 0.1            & 68.4 $\pm$ 0.2            & \textbf{54.4 $\pm$ 0.3}            & 41.8 $\pm$ 0.1                      \\
\midrule
DINO + CAD          &    69.6 $\pm$ 0.6         &       76.1 $\pm$ 0.1       &   56.9 $\pm$ 0.5     &     25.9 $\pm$ 1.2        &         33.6 $\pm$ 0.1              \\
\midrule
\clipS{}          &    81.1 $\pm$ 0.5         &    90.3 $\pm$ 0.2          &    70.6 $\pm$ 0.1    &        29.6 $\pm$ 0.8     &       47.7 $\pm$ 0.0                \\
\clipS{} + Base          &      81.3 $\pm$ 0.5       &        91.2 $\pm$ 0.3      &   70.6 $\pm$ 0.1     &    36.4 $\pm$ 0.7         &         46.8 $\pm$ 0.2              \\
\clipS{} + CAD          &     \textbf{82.3 $\pm$ 0.3}        &       92.0 $\pm$ 0.2       &   71.9 $\pm$ 0.2     &    36.2 $\pm$ 0.8         &         48.8 $\pm$ 0.1             \\
\midrule
\clipL{}          &     80.7 $\pm$ 0.4        &       93.7 $\pm$ 0.8       &   79.6 $\pm$ 0.1     &   36.9 $\pm$ 0.6          &        52.8 $\pm$ 0.1               \\
\clipL{} + CAD          &     81.6 $\pm$ 0.1        &       \textbf{94.9 $\pm$ 0.3}       &   \textbf{80.0 $\pm$ 0.2}     &   40.6 $\pm$ 1.1          &        \textbf{53.7 $\pm$ 0.1}               \\
\midrule
\editarxiv{Approx. Optimal $Z^*$ }          &     86.8 $\pm$ 0.6        &       97.2 $\pm$ 0.6       &   86.3 $\pm$ 1.6     &   76.5 $\pm$ 4.1          &        66.7 $\pm$ 0.2               \\
\bottomrule
\end{tabular}}
\vspace{-.5\baselineskip}
\end{center}
\else
\begin{center}
\adjustbox{max width=\textwidth}{%
\small
\begin{tabular}{l|ccccc}
\toprule
\textbf{Algorithm}     & \textbf{VLCS}             & \textbf{PACS}             & \textbf{OfficeHome}       & \textbf{TerraIncognita}   & \textbf{DomainNet}              \\
\midrule
ERM                 & 77.6 $\pm$ 0.3            & 86.7 $\pm$ 0.3            & 66.4 $\pm$ 0.5            & 53.0 $\pm$ 0.3            & 41.3 $\pm$ 0.1                      \\
DomainBed SOTA      & 79.9 $\pm$ 0.2            & 87.2 $\pm$ 0.1            & 68.4 $\pm$ 0.2            & \textbf{54.4 $\pm$ 0.3}            & 41.8 $\pm$ 0.1                      \\
\midrule
DINO + CAD          &    69.6 $\pm$ 0.6         &       76.1 $\pm$ 0.1       &   56.9 $\pm$ 0.5     &     25.9 $\pm$ 1.2        &         33.6 $\pm$ 0.1              \\
\midrule
\clipS{}          &    81.1 $\pm$ 0.5         &    90.3 $\pm$ 0.2          &    70.6 $\pm$ 0.1    &        29.6 $\pm$ 0.8     &       47.7 $\pm$ 0.0                \\
\clipS{} + Base          &      81.6 $\pm$ 0.3       &        91.1 $\pm$ 0.3      &   70.6 $\pm$ 0.4     &    36.4 $\pm$ 0.7         &         46.7 $\pm$ 0.2              \\
\clipS{} + CAD          &     \textbf{82.2 $\pm$ 0.3}        &       92.4 $\pm$ 0.3       &   71.7 $\pm$ 0.6     &    36.1 $\pm$ 0.8         &         48.7 $\pm$ 0.1             \\
\midrule
\clipL{}          &     80.7 $\pm$ 0.4        &       93.7 $\pm$ 0.8       &   79.9 $\pm$ 0.1     &   36.9 $\pm$ 0.6          &        52.8 $\pm$ 0.1               \\
\clipL{} + CAD          &     81.4 $\pm$ 0.8        &       \textbf{94.7 $\pm$ 0.4}       &   \textbf{80.2 $\pm$ 0.2}     &   39.7 $\pm$ 1.1          &        \textbf{54.1 $\pm$ 0.1}               \\
\bottomrule
\end{tabular}}
\vspace{-.5\baselineskip}
\end{center}
\fi
\end{table}

\paragraph{Can we approximate optimal representations by exploiting pretrained CLIP?} 
\ifarxiv
\editarxiv{The row `CLIP L + CAD'}
\else
The last row
\fi
in \cref{table:clip_on_dombed} shows that finetuning a large pretrained CLIP model with our CAD achieves SOTA on nearly all DomainBed benchmarks
by a very large margin (see $2^\text{nd}$ row). 
Note that the poor performance on TerraIncognita is likely because CLIP's dataset does not cover such images (camera traps monitoring animals).
\ifarxiv
\editarxiv{The last row essentially shows an optimal representation, which we approximate by finetuning CLIP L with our CAD on \emph{all} domains including the target. The gap between CLIP L + CAD and the upper-bound suggests that one can still learn better representations. We hypothesize that end-to-end training of our objective would greatly shrink this gap.}
\else
In \cref{appcs:exp_results/bridge}, we estimated the non-idealized DG performance of optimal representations on PACS (with access to all-domain labeled data) to be $96.7\%$, which is only $2\%$ higher than  \clipL{} + CAD.
This suggests that our simple SSL encoder
might already be close to optimal.
\fi

\paragraph{Are gains due to the architectural differences?}
DomainBed's baselines finetuned an ImageNet pretrained ResNet-50. 
In contrast, \clipL{} pretrained a larger ViT.
To decouple gains due to our objective from architectural gains, we evaluated ResNet-50 pretrained CLIP S.
\Cref{table:clip_on_dombed} shows that \clipS{} + CAD still significantly outperforms DomainBed baselines.
Note that our theory does not constrain the encoder and so we expect larger encoders to be better as seen in
\Cref{table:clip_on_dombed}.


\paragraph{What is the effect of domain bottlenecks?}
\ifarxiv
In the \editarxiv{``CLIP'' rows of \cref{table:clip_on_dombed}}, we investigated the effect of finetuning CLIP with our CAD bottleneck.
\else
In the last five rows of \cref{table:clip_on_dombed}, we investigated the effect of finetuning with our CAD bottleneck.
\fi
We see that for both \clipL{} and \clipS{}, it consistently improves results by around $1 \sim 2 \%$.
These gains are due to the bottleneck, rather than finetuning on source data as seen by `\clipS{} + Base'. 
\edit{We believe the gains could potentially be much larger if CLIP was trained end-to-end with our bottleneck.}
Note that raw CLIP S already significantly outperforms baselines.
We hypothesize that this is because SGD acts as an information bottleneck that naturally favors support match \cite{DBLP:journals/corr/Shwartz-ZivT17}. 


\paragraph{Which pretrained SSL model to use?}
Our theory suggests that we can exploit pretrained SSL models as long as their augmentations are domain-agnostic and their training set covers desired domains.
We investigated adaption of SSL models that do not satisfy those properties by finetuning DINO \citep{caron2021dino}, the current SOTA on SSL ImageNet. 
DINO is pretraiend using standard augmentations. 
As a result, \cref{table:clip_on_dombed} shows that the finetuned DINO + CAD significantly underperforms compared to \clipS{} and DomainBed baselines.
\edit{This supports our hypothesis that CLIP is much more robust than other SSL methods due to its domain-agnostic augmentations.}

\subsection{Towards generic robust representations with SSL}
\label{main:experiment/realistic}

In the previous section, we finetuned CLIP in a task specific fashion by optimizing $\risk{Y}{Z}$ and our CAD bottleneck.
To get generic (task agnostic) robust representations, one should instead directly use our objectives on a sufficiently large dataset with image-text augmentations.
Unfortunately, we cannot fully train CLIP with our bottlenecks as we do not have access to CLIP's original dataset and sufficient compute. 
In this section, we aim to emulate such training of generic robust representations.

To do so we used LAION-400M \citep{schuhmann2021laion} that is a public dataset that contains 400M web-crawled image-text pairs.
Due to our computational budget, we again froze the pretrained \clipL{} and only finetuned an additional MLP with our $\Leb{}$.
We used $\Leb{}$ as it only requires access to paired image $X$ and text $\augm{}$ but no prior information about domain $D$.
As in CLIP's paper, we evaluated the learned representation $Z$ in \citepos{taori2020measuring} realistic setting, where a linear classifier $\fz$ from $Z$ is trained on ImageNet and tested on 7 natural distribution shift datasets.
Details in \cref{appcs:exp_details/realistic}.


\paragraph{Would training CLIP with a bottleneck have improved its robustness?}
As shown in the last 2 rows of \cref{table:clip_laion_imagenets}, finetuning \clipL{} on LAION with $\Leb{}$ (Tuned w/ Ent) outperforms finetuning without bottleneck (Tuned w/o Ent) on all 7 distribution shift datasets.
This suggests that directly training CLIP with our Ent bottleneck would improve the robustness of learned representations.
We hypothesize that the gains could be larger if SSL models trained $\Leb{}$ end-to-end.
In \cref{appcs:exp_results/realistic}, we show similar results on DomainBed.
Note that both models underperform the original \clipL{}, likely due to non-end-to-end training and LAION data with (possibly) lower quality than CLIP's data.

\begin{table}[h]
\caption{Finetuning \clipL{} on LAION with an entropy bottleneck improves its robustness compared to finetuning without on 7 distribution shift datasets.
The pretrained \clipL{} is still better likely due to end-to-end training with higher quality data.
IN denotes ImageNet. 
}
\vspace{-\baselineskip}
\label{table:clip_laion_imagenets}
\begin{center}
\small
\adjustbox{max width=\textwidth}{
\begin{tabular}{l|c|cccccccc}
\toprule
 & \textbf{IN}  & \textbf{IN-V2} & \textbf{IN-S} & \textbf{YT-BB} & \textbf{IN-Vid} & \textbf{ObjectNet}  &  \textbf{IN-A} & \textbf{IN-R} & \textbf{Avg.}\\
 \midrule
\clipL{}         &    75.2      &   64.2    &   41.0    &   58.4    &   71.6    &   42.8    &   27.5    &   62.9    &   52.6    \\
\midrule
Tuned w/o Ent     &   73.8     &   62.1    &   37.0    &   56.9    &   68.8    &   41.3    &   26.0    &   58.1    &   50.0    \\
Tuned w/ Ent      &   74.2     &   62.7    &   38.9    &   58.1    &   70.1   &    42.1    &    26.2    &   60.8    &   51.3   \\
\bottomrule
\end{tabular}
}
\end{center}
\vspace{-\baselineskip}
\end{table}

\section{Conclusion}
\label{sec:conclusion}
We gave a simple variational characterization of all representations on which source-risk minimizers are guaranteed to generalize to target domains that preserve the Bayes predictor. Similar to previous work, our theory strongly implies the need for target information when learning representations for domain generalization. Nevertheless, we identified a domain-agnostic property of data augmentations that make it possible to learn optimal representations from unlabelled data. Thus, we showed that it is possible to learn robust representations using only large sources of inputs $X$ and  augmentations $A$.



There are caveats that need to be addressed in future work. 
First, we studied an idealized DG, which assumes access to the population distributions.
This gives insights into the challenges that are specific to DG, rather than finite sample challenges faced throughout ML.
Second, we considered risk minimizers from an unconstrained hypothesis class.
The support constraint can likely be weakened, if the hypothesis class is constrained.
Finally, we focus only on optimal representations, but it would be interesting to characterize approximately optimal representations.
Nevertheless, in this idealized setting, our characterization is a springboard from which all future objectives can be derived, and, in general, it brings us closer to the goal of robust machine learning systems.

\ifarxiv
\else
\newpage
\fi
\paragraph{Acknowledgement}
We would like to thank Elliot Creager, Roger Grosse, Elan Rosenfeld, Guodong Zhang, Han Zhao, and anonymous reviewers for their helpful feedbacks and encouragements. Resources used in preparing this research were provided, in part, by the Province of Ontario, the Government of Canada through CIFAR, and companies sponsoring the Vector Institute. We acknowledge the support of the Natural Sciences and Engineering Research Council of Canada (NSERC), RGPIN-2021-03445.
\paragraph{Reproducibility}
For our theoretical results, we include formal assumptions, statements, and proofs in \cref{appcs:preliminary,appcs:proof}. We include the detailed derivations of our algorithms in \cref{appcs:prac_objectives}. For our experiments, we include experimental details for reproducing our results in \cref{appcs:exp_details} and have released our code at  \url{https://github.com/ryoungj/optdom}.

\bibliography{iclr2022_conference}
\bibliographystyle{iclr2022_conference}

\newpage
\appendix

\addcontentsline{toc}{section}{Appendix} 
\part{Appendix} 
\parttoc 
\clearpage
\newpage

\section{Preliminaries}
\label{appcs:preliminary}

\subsection{Notation}
\label{appcs:notation}

For the most part, we will assume that all spaces are discrete probability spaces. A full list of assumptions is found at \cref{appcs:assumptions}.

\paragraph{General}
The image of a set $A \subseteq \inputspace$ under a function $f : \inputspace \to \labelspace$ is denoted $\image{f}{A} = \set{f(x) \cond x \in A}$. The pre-image is denoted $\preimage{f}{B} = \set{x \in \inputspace \cond f(x) \in B}$ for $B \subseteq \labelspace$.

\paragraph{Probability} Random variables (r.v.) are denoted by uppercase letters (e.g., $X$), and their sample space and realizations are denoted by the corresponding calligraphic (e.g., $\inputspace$) and lowercase letters (e.g., $x$) respectively. 
The probability mass function (pmf) of a random variable $X$ is denoted as $\dens{X}$. We use capital $P$ instead of $p$ to denote the measure under $p$. The support $\supp{\dens{X}}$ of a discrete distribution is the set of all points $x \in \inputspace$ with positive probability, i.e., $\supp{\dens{X}} = \{x \in \inputspace \cond \dens{X}(x) > 0\}$. The space of all probability distributions on $\inputspace$ is denoted $\distspace(\inputspace) = \set{\dens{X} \cond \dens{X}(x) \geq 0 \ \text{and} \  \sum_{x \in \inputspace} \dens{X}(x) = 1}$.

When it is necessary to be explicit, we will denote `$X$ is distributed as $\dens{X}$' using the notation $X \dsim \dens{X}$.
Expectations are written as: $\E{\dens{X} }{f(X)}$,
independence of two r.v. as $\cdot \indep \cdot$, conditional independence as $\cdot \indep \cdot \cond \cdot$.

For jointly distributed random variables $(X,Y)$ taking value in (t.v.i.) $\inputspace \times \labelspace$, the conditional distribution is denoted as $\conddens{Y}{X}: \labelspace \times \inputspace \rightarrow [0, 1]$. For convenience, let $\conddens{Y}{x} = \conddens{Y}{X}( \vardot \cond x)$ be the conditional distribution of $Y$ given $x$.
All random variables are independently distributed, unless an explicit joint distribution or coupling is given.

\subsection{Definitions}
\label{appcs:definitions}


We are interested in prediction problems with domain shift. 
There are three random variables: the target domain $\Dt$, the input $X$, the label $Y$. They have the following joint distribution:
\begin{equation}
(\Dt, X, Y) \dsim \dens{\Dt} \vardot \conddens{X,Y}{\Dt}
\end{equation}
where we drop the arguments of the probability densities for clarity. We make a variety of convenience assumptions on these random variables (\cref{asmp:domains}). Crucially, we will be making the Bayes invariance assumption on $\dens{\Dt,X,Y}$ that can be thought of as a generalized covariate shift assumption (\cref{asmp:bayes_inv}).

We will be studying the effect of changing the representation of the data. This is done by ``encoding'' $X$ into a representation $Z$ using a conditional distribution $\conddens{Z}{X}$. 
\begin{definition}[Encoder]
An \emph{encoder} is a conditional distribution $\conddens{Z}{X} : \repspace\times \inputspace \to [0, 1]$ from the input space $\inputspace$ to the representation space $\repspace$.
\end{definition}
The data together with the representation has the following joint:
\begin{equation}\label{eq:graphical_model}
(\Dt, X, Y, Z) \dsim \dens{\Dt} \vardot \conddens{X,Y}{\Dt} \vardot \conddens{Z}{X}
\end{equation}
The key thing to notice here is that $Z$ is conditionally independent of $Y, \Dt$ given $X$. In particular, the same encoder is used across all domains.

\subsubsection{Risk minimization} 
Our ultimate goal is to predict $Y$ from the representation $Z$ of $X$ in a manner that is robust to changes in the domain. 

We formalize this in the standard way by making predictions  $\act{} \in \actionspace$ in a space of predictions or actions. For example the prediction space may be the set of all possible labels $\actionspace = \labelspace$, in which case we would be predicting deterministic labels. Or we may predict a distribution over labels, in which case the prediction space would be the set of all probability distributions on $\labelspace$, i.e. $\actionspace = \distspace(\labelspace)$.

A \emph{predictor} is a function mapping inputs to predictions, \ie, $\fx : \inputspace \to \actionspace$, or representations to predictions, i.e., $\fz : \repspace \to \actionspace$. For example, $\fx$ may be a neural network that takes as input a sample $x$ and outputs a vector of logits that parameterize a softmax distribution over finitely many labels.  


We select predictors according to the \emph{risk} defined via a loss function $\ell : \labelspace \times \actionspace \to \mathbb{R}_{\geq 0} \cup \{\infty\}$:
\begin{align}
  \riskq{Y}{X}{\fx} \defeq   \E{\dens{X,Y} }{\ell(Y, \fx(X))}. 
\end{align}
In particular, we are interested in the Bayes (minimum) risk over all predictors:
\begin{equation}
    \label{eq:risk}
    \risk{Y}{X} \defeq \inf_{\fx} \riskq{Y}{X}{\fx},
\end{equation}
We denote the set of all optimal predictors from $X$ as 
\begin{equation}
\label{eq:optimalpredictors}
\bayesfamilyx \defeq \set{\fx \cond \riskq{Y}{X}{\fx} = \risk{Y}{X}}
\end{equation}
Similarly, we define the risk $\riskq{Y}{Z}{\fz}$, the Bayes risk $\risk{Y}{Z}$, and the set of optimal predictors 
\begin{equation}
\label{eq:optimalpredictorsz}
\bayesfamilyz \defeq \set{\fz \cond \riskq{Y}{Z}{\fz} = \risk{Y}{Z}}
\end{equation}
from $Z$, all of which vary as a function of the encoder $\conddens{Z}{X}$. Note, in the main body of the paper, we omitted the subscript $Z$ from $\bayesfamilyz$ for clarity, but we will keep it in the Appendices. We assume that together our loss and prediction space always admit optima (\cref{asmp:losses:properness} of \cref{asmp:losses}), and thus $\bayesfamilyx, \bayesfamilyz$ are always non-empty.

We will be assuming that the risk admits unique optimal prediction when predicting from $X$ (\cref{asmp:losses:strictproperness} of \cref{asmp:losses}). Thus it makes sense to define the following:

\begin{definition}[The Bayes predictor]
\label{def:bayespredictor}
\emph{The Bayes predictor} $\bayesfx : \inputspace \to \actionspace$ is the unique predictor that is optimal for all $x \in \inputspace$:
\begin{equation}
    \bayesfx(x) = \arg \min_{\act{} \in \actionspace} \E{\conddens{Y}{x} }{\ell(Y, \act{})}
\end{equation}
\end{definition}

\begin{definition}[The Bayes image]
\label{def:bayesimage}
The image of all the inputs under the Bayes predictor will be denoted as $\actionspace^* = \image{\bayesfx}{\inputspace}$ and called \emph{the Bayes image}.
\end{definition}


Note that $\bayesfamilyx$ becomes a singleton $\{\bayesfx\}$, but it is not necessarily the case for $\bayesfamilyz$ since we will not be making any uniqueness assumption on optimal prediction from $Z$.


\subsubsection{Domain generalization} We are interested in controlling the risk in a domain generalization setting, and so we define the \emph{domain-conditional risk},
\begin{align}
  \domriskq{X}{\fx}{d} \defeq   \E{\conddens{X,Y}{d} }{\ell(Y, \fx(X))}.
\end{align}
$\domrisk{X}{d}, \bayesfamilyxd{d}$ are defined as \cref{eq:risk,eq:optimalpredictors}, respectively, but with respect to $\operatorname{R}^d_{\fx}$. 
Similarly, define the Bayes image for domain $d$ as \begin{equation}
    \actionspace_d^* := \image{\bayesfx}{\supp{\conddens{X}{d}}}.
\end{equation}
We also define domain-conditional quantities for prediction from a representation $Z$. The most important term which we will be investigating is an idealization of the domain generalization worst-case risk.

\begin{definition}[IDG risk]
Given an encoder $\conddens{Z}{X}$ and a distribution $\dens{\Dt, \Ds}$ over a target domain $\Dt$ and source domain $\Ds$, the idealized domain generalization worst-case risk, \textit{IDG risk} for short, is the expected worst-case target risk taken over source minimizers, \ie,
\begin{equation}
\IDGrisk{Z} \defeq \E{\dens{\Dt, \Ds}}{\sup_{\fz \in \bayesfamilyzd{\Ds}} \domriskq{Z}{\fz}{\Dt}}
\end{equation}
\end{definition}

Note that the IDG risk is well-defined because $\bayesfamilyzd{\Ds}$ is non-empty by \cref{asmp:losses}. The desired optimal representations, are then those that minimize the IDG risk.

\begin{definition}[Optimal representations for IDG]
\label{def:IDG_risk}
An encoder $\conddens{Z^*}{X}$ is \textit{optimal} for idealized domain generalization if and only if it minimizes the IDG risk, \ie, 
\begin{equation}
 \IDGrisk{Z^*}  =  \inf_{\conddens{Z}{X}} \IDGrisk{Z} 
\end{equation}
\end{definition}







\subsection{Assumptions}
\label{appcs:assumptions}

We make a the following assumptions throughout the paper. \textbf{All these assumptions should hold for practical settings.}

\begin{assumption}[Convenience: discrete probability spaces]\label{asmp:probabilityspaces}
All data spaces $(\domspace, \inputspace, \labelspace, \repspace, \augspace)$ are discrete spaces.  Because the distributions of $X, Y, D$ are fixed, we assume for convenience that $\supp{\dens{X}} = \inputspace$, $\supp{\dens{Y}} = \labelspace$, and $\supp{\dens{\Dt}} = \domspace$.
\end{assumption}

\Cref{asmp:probabilityspaces} is a convenience assumption to avoid measure theory for the sake of clarity. 
It always holds in practice due to finiteness of computers, \ie, all spaces will be finite but arbitrarily large.
We believe that our claims can nevertheless be generalized to typical continuous spaces with some minor technical assumptions.

\begin{assumption}[Losses admit optima]\label{asmp:losses}
We assume that our risk always admits optimal predictions:
\begin{enumerate}
\item \label{asmp:losses:twoactions} $|\actionspace| > 1$.
\item \label{asmp:losses:properness} For all $\dens{\Upsilon} \in \distspace(\labelspace)$, there exists $\act{}^* \in \actionspace$, such that
\begin{equation}
    \E{\dens{\Upsilon}}{\ell(\Upsilon, \act{}^*)} \leq \E{\dens{\Upsilon}}{\ell(\Upsilon, \act{})} \qquad \forall \ \act{} \in \actionspace.
\end{equation}    
\item \label{asmp:losses:strictproperness} For all $x \in \inputspace$, there exist $\act{}^* \in \actionspace$, such that
\begin{equation}
    \E{\conddens{Y}{x}}{\ell(Y, \act{}^*)} <\E{\conddens{Y}{x}}{\ell(Y, \act{})} \qquad \forall \ \act{} \neq \act{}^*.
\end{equation}
\end{enumerate}
Note that for log-loss $\ell(y, \act{})=-\log \act{}(y)$ and finite $\labelspace$, these assumptions are satisfied if $\actionspace = \distspace(\labelspace)$ where the optimal prediction for \cref{asmp:losses:strictproperness} is $\act{}^* =\conddens{Y}{x}$ by strict properness \citep{gneiting2007strictly}. If we consider the 0-1 loss (reverse accuracy) $\ell(y, \act{})= 1 - \indeq{y}{\act{}}$ with $\actionspace = \labelspace$ and a finite label space  where the optimal prediction for \cref{asmp:losses:strictproperness} is $\act{}^* =\arg \max_{y \in \labelspace} \conddens{Y}{x}(y)$, this assumption is mostly satisfied, except we assume that $\conddens{Y}{x}$ has a unique mode.

\end{assumption}

\cref{asmp:losses} serves two purposes: 
\cref{asmp:losses:properness} ensures that for any representation the optimal predictors from $Z$ exists such that the IDG risk is well-defined as in \cref{def:IDG_risk};
\cref{asmp:losses:strictproperness} ensures a unique Bayes predictor from $X$, which simplifies the analysis and is satisfied by common losses as described above.

\begin{assumption}[Cardinalities]\label{asmp:cardinality}
We assume that
\begin{equation}
    |\repspace| \geq |\actionspace^*|  \geq 2
\end{equation}
\end{assumption}

\Cref{asmp:cardinality} is very weak and ensures that optimal representations always exists (\cref{lem:exists}).




\begin{assumption}[Generalized covariate shift]\label{asmp:bayes_inv}
The Bayes predictor is optimal for all domains. I.e., for all $(x,d) \in \supp{\dens{X, \Dt}}, \act{} \in \actionspace$ such that $\act{} \neq \bayesfx(x)$, we have
\begin{equation}
    \E{\conddens{Y}{x,d}}{\ell (Y, \bayesfx(x))} < \E{\conddens{Y}{x,d}}{\ell (Y, \act{})}.
\end{equation}
For example, in the case of strictly proper scoring rules, \eg log loss, covariate shift $\conddens{Y}{X,D} = \conddens{Y}{X}$ is equivalent to the invariance of the Bayes predictor. For the 0-1 loss, this is guaranteed by invariance of the most likely label.
For MSE it is guaranteed by the invariance of the expected label.
In the latter two cases, \cref{asmp:bayes_inv} is less stringent than the typical covariate shift assumption.
\end{assumption}

\cref{asmp:bayes_inv} is  the core assumption for our theoretical results. 
It ensures that source and target domains are related in a useful way that can be utilized by the representation.

\begin{assumption}[Constant Bayes image]\label{asmp:image}
The Bayes image is invariant across domains, i.e., for all $d \in \domspace$,
\begin{equation}
    \actionspace_d^* = \actionspace^*.
\end{equation}
For the case of 0-1 loss, this simply means that the label set for all domains is the same, which is trivial. For log-loss, this means that the set of possible conditional distributions $\actionspace_d^*=\{\conddens{Y}{x} \cond x \in \supp{\conddens{X}{d}} \}$ is the same across domains.
\end{assumption}

\cref{asmp:image} is crucial to be able to learn.
Without it, in the extreme case, one could set each domain to be all examples associated with a single element from the label set (or the Bayes image set) in which case it is impossible to generalize across different domains. 
\cref{asmp:image} is also necessary to guarantee the existence of optimal representations as in \cref{lem:exists}.


\begin{assumption}[Domain joint]\label{asmp:domains}
$\dens{\Dt, \Ds}$ is any distribution such that $\supp{\dens{\Dt, \Ds}} = \domspace \times \domspace$.
\end{assumption}

In a simplified scenario, one could define the source $\Ds$ and target $\Dt$ as i.i.d.\ r.v. from $\dens{\Dt}$, where $\dens{\Dt, \Ds}=\dens{\Dt}\cdot\dens{\Ds}=\dens{\Dt}\cdot\dens{\Dt}$ and \cref{asmp:domains} is trivially satisfied.

\clearpage
\newpage
\section{Proofs}
\label{appcs:proof}

\subsection{Lemmas for general losses}

An important result that we will be using is the generalized data processing inequality of Bayes risk \cite{xu2020minimum,dubois2021lossy}.
We include it here for
completeness.

\begin{lemma}[Generalized DPI \cite{xu2020minimum,dubois2021lossy}]\label{lem:DPI}
Let $Z-X-Y$ be a Markov chain of random variables. For any loss function $\ell$,
\begin{equation}
\risk{Y}{X} \leq \risk{Y}{Z}.
\end{equation}
\end{lemma}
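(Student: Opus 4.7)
}
The plan is to show that for any predictor $\fz : \repspace \to \actionspace$ from $Z$, there is a predictor $\fx : \inputspace \to \actionspace$ from $X$ whose risk is no larger; taking the infimum over $\fz$ then yields the claim. The natural candidate for $\fx$ is the Bayes predictor $\bayesfx$ of \cref{def:bayespredictor}, which exists by \cref{asmp:losses:properness} of \cref{asmp:losses}.

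The first step will be to use the Markov chain structure $Z - X - Y$ to decompose the risk from $Z$ by conditioning on $X$. Concretely, by the tower property and the conditional independence $Y \indep Z \cond X$,
\begin{equation}
\riskq{Y}{Z}{\fz} = \E{\dens{X}}{\E{\conddens{Z}{X}}{\E{\conddens{Y}{X}}{\ell(Y,\fz(Z))}}}.
\end{equation}
For each fixed $x \in \inputspace$ and each $z \in \repspace$, the inner expectation $\E{\conddens{Y}{x}}{\ell(Y,\fz(z))}$ is the expected loss of using the prediction $\fz(z) \in \actionspace$ at the conditional law $\conddens{Y}{x}$, and by \cref{asmp:losses:properness} this is lower bounded by the optimal value attained at $\bayesfx(x)$, i.e.\ $\E{\conddens{Y}{x}}{\ell(Y,\fz(z))} \geq \E{\conddens{Y}{x}}{\ell(Y,\bayesfx(x))}$.

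Plugging this pointwise bound back in and noting that the right-hand side no longer depends on $z$, the inner expectation over $\conddens{Z}{x}$ collapses and we obtain
\begin{equation}
\riskq{Y}{Z}{\fz} \geq \E{\dens{X}}{\E{\conddens{Y}{X}}{\ell(Y,\bayesfx(X))}} = \riskq{Y}{X}{\bayesfx} = \risk{Y}{X}.
\end{equation}
Since this inequality holds for every $\fz$, taking the infimum over $\fz$ on the left gives $\risk{Y}{Z} \geq \risk{Y}{X}$, which is the desired data-processing inequality.

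I do not foresee a genuine obstacle here; the proof is essentially two lines once the Markov chain decomposition is set up and \cref{asmp:losses} is invoked. The only mildly delicate point is being careful that the pointwise lower bound by $\bayesfx(x)$ is applied uniformly in $z$ before marginalising $Z$, so that no information from $Z$ is smuggled back in; this is exactly what the conditional independence $Y \indep Z \cond X$ buys us.
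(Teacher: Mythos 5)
The paper does not actually supply a proof of \cref{lem:DPI}; it states the result and cites \citet{xu2020minimum,dubois2021lossy}, noting it is included ``for completeness.'' So there is no in-paper argument to compare against.

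Your proposed proof is correct, and it is the standard argument: condition on $X$ using $Y \indep Z \mid X$, lower-bound the inner expectation $\E{\conddens{Y}{x}}{\ell(Y,\fz(z))}$ pointwise in $(x,z)$ by the optimal action for $\conddens{Y}{x}$, observe the bound no longer depends on $z$, and marginalise. The one thing worth flagging is the scope of assumptions: the lemma is stated ``for any loss function $\ell$,'' whereas your argument invokes \cref{asmp:losses:properness} of \cref{asmp:losses} to guarantee that the pointwise-optimal action $\bayesfx(x)$ exists. Under the paper's standing assumptions this is fine (\cref{asmp:losses} is assumed throughout), but if one genuinely wants the fully general statement, the same argument goes through by lower-bounding $\E{\conddens{Y}{x}}{\ell(Y,\fz(z))} \geq \inf_{\act \in \actionspace}\E{\conddens{Y}{x}}{\ell(Y,\act)}$ and then showing $\risk{Y}{X} = \E{\dens{X}}{\inf_{\act}\E{\conddens{Y}{X}}{\ell(Y,\act)}}$ via an $\epsilon$-optimal selection rather than an exact Bayes predictor. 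Also, strictly speaking, $\bayesfx$ is defined in \cref{def:bayespredictor} via \cref{asmp:losses:strictproperness} (uniqueness), not just \cref{asmp:losses:properness}, though for your argument existence of a minimiser is all that is needed.
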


For the case of strictly proper losses (\cref{asmp:losses}) we can go one step further.

\begin{lemma}\label{lem:CMI}
Let $Z-X-Y$ be a Markov chain of random variables. Then, under \cref{asmp:probabilityspaces,asmp:losses} we have that
\begin{equation}\label{eq:CMI_X}
 \risk{Y}{Z} = \risk{Y}{X}  \quad \iff \quad 
\forall \bayesfz \in \bayesfamilyz, \forall (x,z) \in \supp{\dens{X,Z}}, \ \bayesfz(z) = \bayesfx(x).
\end{equation}
\end{lemma}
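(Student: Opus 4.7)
The plan is to handle the two directions separately, using the Markov property $Z - X - Y$ throughout to collapse $\conddens{Y}{X,Z}$ to $\conddens{Y}{X}$, and leaning on the strict properness assumption (\cref{asmp:losses:strictproperness}) for the harder direction.

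\textbf{Sufficiency ($\Leftarrow$).} Pick any $\bayesfz \in \bayesfamilyz$. By the tower property and the Markov property,
\begin{equation*}
\risk{Y}{Z} = \E{\dens{X,Z}}{\E{\conddens{Y}{X}}{\ell(Y,\bayesfz(Z))}}.
\end{equation*}
By hypothesis, for every $(x,z) \in \supp{\dens{X,Z}}$ the inner expectation equals $\E{\conddens{Y}{x}}{\ell(Y,\bayesfx(x))}$, so $\risk{Y}{Z} = \risk{Y}{X}$.

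\textbf{Necessity ($\Rightarrow$).} Fix any $\bayesfz \in \bayesfamilyz$. Using the same manipulation,
\begin{equation*}
\risk{Y}{Z} - \risk{Y}{X} = \E{\dens{X,Z}}{\,\E{\conddens{Y}{X}}{\ell(Y,\bayesfz(Z)) - \ell(Y,\bayesfx(X))}\,}.
\end{equation*}
By \cref{asmp:losses:strictproperness}, for every $x \in \inputspace$ the Bayes predictor value $\bayesfx(x)$ is the \emph{unique} minimizer of $\act{} \mapsto \E{\conddens{Y}{x}}{\ell(Y,\act{})}$, so the integrand is pointwise $\geq 0$ on $\supp{\dens{X,Z}}$ and is strictly positive at any $(x,z)$ where $\bayesfz(z) \neq \bayesfx(x)$. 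Since the left-hand side is zero by assumption and the integrand is nonnegative on the support, it must vanish on every $(x,z) \in \supp{\dens{X,Z}}$, which by uniqueness forces $\bayesfz(z) = \bayesfx(x)$.

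\textbf{Expected obstacle.} The only delicate point is the appeal to strict properness: one must be careful that the minimizer $\bayesfx(x)$ defined via $\conddens{Y}{x}$ is indeed the unique minimizer at every $x \in \inputspace$, so that the pointwise equality on the support can be transferred into a functional equality. Since $\supp{\dens{X}} = \inputspace$ by \cref{asmp:probabilityspaces} and \cref{asmp:losses:strictproperness} supplies uniqueness at every $x \in \inputspace$, this is immediate. All remaining manipulations are routine measure-free sums that rely only on the discreteness from \cref{asmp:probabilityspaces}.
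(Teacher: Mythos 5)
Your proof is correct and takes essentially the same approach as the paper. The only cosmetic difference is in the necessity direction, where you argue directly that the pointwise-nonnegative integrand $\E{\conddens{Y}{x}}{\ell(Y,\bayesfz(z)) - \ell(Y,\bayesfx(x))}$ must vanish on $\supp{\dens{X,Z}}$, while the paper argues by contrapositive from a single offending pair $(x',z')$; both hinge on exactly the same appeal to \cref{asmp:losses:strictproperness} and the Markov collapse $\conddens{Y}{X,Z}=\conddens{Y}{X}$.
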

\begin{proof}
Suppose that for all $\bayesfz \in \bayesfamilyz$ we have $\bayesfz(z) = \bayesfx(x)$ on the support of $\dens{X,Z}$. Then,
\begin{align}
    \risk{Y}{X} &= \E{\dens{X,Y}}{\ell(Y, \bayesfx(X))}\\
    &= \E{\dens{X,Y}\conddens{Z}{X}}{\ell(Y, \bayesfx(X))}\\
    &= \E{\dens{X,Y}\conddens{Z}{X}}{\ell(Y,\bayesfz(Z))}\\
    &= \E{\dens{Z,Y}}{\ell(Y, \bayesfz(Z))}\\
    &= \risk{Y}{Z}.
\end{align}
Now suppose there exists a $ \bayesfz \in \bayesfamilyz$ and a pair $(x',z') \in \supp{\dens{X,Z}}$ such that $\bayesfz(z') \neq \bayesfx(x')$. Then
\begin{align}
    &\risk{Y}{Z} \\
    &\quad = \E{\dens{X,Z} \conddens{Y}{X}}{\ell(Y, \bayesfz(Z))}\\
    &\quad = \dens{X,Z}(x',z') \E{\conddens{Y}{x'}}{\ell(Y, \bayesfz(z'))} + \sum_{(x,z) \neq (x', z')} \dens{X,Z}(x,z) \E{\conddens{Y}{x}}{\ell(Y, \bayesfz(z))}\\
    &\quad \geq\label{eq:item1losses} \dens{X,Z}(x',z') \E{\conddens{Y}{x'}}{\ell(Y, \bayesfz(z'))} + \sum_{(x,z) \neq (x', z')} \dens{X,Z}(x,z) \E{\conddens{Y}{x}}{\ell(Y, \bayesfx(x))}\\
    &\quad >\label{eq:item2losses} \dens{X,Z}(x',z') \E{\conddens{Y}{x'}}{\ell(Y, \bayesfx(x'))} + \sum_{(x,z) \neq (x', z')} \dens{X,Z}(x,z) \E{\conddens{Y}{x}}{\ell(Y, \bayesfx(x))}\\
    &\quad = \risk{Y}{X}
\end{align}
\cref{eq:item1losses} follows by \cref{asmp:losses:strictproperness} of \cref{asmp:losses} along with the definition of $\bayesfx$. \cref{eq:item2losses} follows by \cref{asmp:losses:strictproperness} of \cref{asmp:losses} and the fact that $\bayesfz(z') \neq \bayesfx(x')$. This completes the proof, because \cref{lem:DPI} prevents $\risk{Y}{Z} < \risk{Y}{X}$.
\end{proof}

\subsection{Proof of \Cref{thm:main_paper}}


First we will show that the desired representation exists by taking all inputs for which the Bayes predictor predicts similarly and ``bucketing'' them to the same representation.
This is a direct extension of the example from \citepos{dubois2020dib} Proposition 6, to the case of proper losses.

\begin{proposition}[Existence of optimal representations]\label{lem:exists}
Under \cref{asmp:probabilityspaces,asmp:cardinality,asmp:losses,asmp:image,asmp:bayes_inv}, there exists an encoder $\conddens{Z^*}{X}$ that is optimal for \cref{eq:min_risk}, i.e.,
\begin{equation}
\conddens{Z^*}{X} \in \arg \min_{\conddens{Z}{X}} \risk{Y}{Z} \quad \subject \quad \forall \ d \in \domspace, \ \supp{\conddens{Z}{d}} = \supp{\dens{Z}}.
\end{equation}
Moreover, we have that
\begin{equation}\label{eq:lem_min_risk}
\risk{Y}{X} = \risk{Y}{Z^*}.
\end{equation}
\end{proposition}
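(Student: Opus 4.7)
My plan is to construct an explicit optimal encoder by ``bucketing'' inputs according to the value of the Bayes predictor, and then verify both feasibility (support match) and optimality (achieving the Bayes risk) directly. Since \cref{asmp:cardinality} gives $|\repspace| \geq |\actionspace^*|$, I can fix some injection $g : \actionspace^* \hookrightarrow \repspace$. Define the deterministic encoder
\begin{equation}
\conddens{Z^*}{X}(z \mid x) \defeq \indeq{z}{g(\bayesfx(x))},
\end{equation}
so that $Z^* = g(\bayesfx(X))$ almost surely. This is the natural candidate because it retains all information relevant to the Bayes predictor and nothing else.

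\textbf{Feasibility.} I need to check $\supp{\conddens{Z^*}{d}} = \supp{\dens{Z^*}}$ for every $d \in \domspace$. Since $Z^*$ is a deterministic function of $X$ via $g \circ \bayesfx$ and $g$ is injective,
\begin{equation}
\supp{\conddens{Z^*}{d}} = \image{g}{\image{\bayesfx}{\supp{\conddens{X}{d}}}} = \image{g}{\actionspace_d^*}.
\end{equation}
By \cref{asmp:image}, $\actionspace_d^* = \actionspace^*$ for every $d$, so every domain-conditional support equals $\image{g}{\actionspace^*}$, which is in particular equal to $\supp{\dens{Z^*}}$. Hence the support constraint holds.

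\textbf{Optimality.} I next show $\risk{Y}{Z^*} = \risk{Y}{X}$. Since $g$ is injective on $\actionspace^*$, I can define $\fz^*(z) \defeq \bayesfx(g^{-1}(z))$ for $z \in \image{g}{\actionspace^*}$ and arbitrarily elsewhere. Then $\fz^*(Z^*) = \bayesfx(X)$ almost surely, so
\begin{equation}
\riskq{Y}{Z^*}{\fz^*} = \E{\dens{X,Y}}{\ell(Y, \bayesfx(X))} = \risk{Y}{X}.
\end{equation}
Combining this upper bound with the generalized data-processing inequality (\cref{lem:DPI}) applied to the Markov chain $Z^* - X - Y$, which gives $\risk{Y}{X} \leq \risk{Y}{Z^*}$, yields $\risk{Y}{Z^*} = \risk{Y}{X}$, establishing \cref{eq:lem_min_risk}.

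\textbf{Conclusion and obstacle.} Finally, since $\risk{Y}{X}$ is a lower bound on $\risk{Y}{Z}$ for any encoder by \cref{lem:DPI}, the value $\risk{Y}{Z^*} = \risk{Y}{X}$ is the global infimum, so $Z^*$ solves the constrained problem. The only subtle point, which I expect to be the main thing to get right, is the careful bookkeeping of supports under the deterministic encoder and the appeal to \cref{asmp:image}; everything else is a direct application of the existence of the Bayes predictor (\cref{asmp:losses:strictproperness} of \cref{asmp:losses}) and DPI.
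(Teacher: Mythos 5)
Your proof is correct and takes essentially the same approach as the paper: both embed the Bayes image into $\repspace$ via an injection, set $Z^* = g(\bayesfx(X))$, verify support match using \cref{asmp:image}, and establish $\risk{Y}{Z^*} = \risk{Y}{X}$ by exhibiting the predictor that pulls back through $g^{-1}$ together with the data-processing inequality (\cref{lem:DPI}). The only cosmetic difference is that the paper names the injection $\phi$ rather than $g$ and folds the "define $\fz^*$ and evaluate its risk" step into a single chain of equalities.
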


\begin{proof}
Because we assume arbitrary encoders $\conddens{Z}{X}$, the essence of this construction is simple: we embed the Bayes image into $\repspace$. Indeed, let $\phi : \actionspace^* \to \repspace$ be any one-to-one function, which exists due to \cref{asmp:cardinality} (here we use deterministic one-to-one function for simplicity, the construction can be easily extended to stochastic case). 
Then let $Z^* = \phi(\bayesfx(X))$. We now verify the properties of $\conddens{Z^*}{X}$.

\begin{enumerate}
    \item $Z^*$ satisfies $\risk{Y}{X} = \risk{Y}{Z^*}$. Indeed, 
    \begin{align}
        \risk{Y}{X} &= \E{\dens{X,Y}}{\ell(Y, \bayesfx(X))}\\
        &= \E{\dens{X,Y}\conddens{Z^*}{X}}{\ell(Y, \bayesfx(X))} \\
        &\label{eq:bydefzstar} = \E{\dens{Z^*,Y}}{\ell(Y, \phi^{-1}(Z^*))} \\
        &\label{eq:bybayesrisk}\geq \risk{Y}{Z^*}.
    \end{align}
     \cref{eq:bydefzstar} is by our construction of $Z^*$ and \cref{eq:bybayesrisk} is by the definition of the Bayes risk. Due to the data processing inequality of Bayes risk (\cref{lem:DPI}) we also have $\risk{Y}{X} \leq \risk{Y}{Z^*}$, from which we conclude that $\risk{Y}{X} = \risk{Y}{Z^*}$ and that \cref{eq:lem_min_risk} holds.
     \item Recall that $\actionspace^* = \image{\bayesfx}{\inputspace}$ and $\actionspace^*_d = \image{\bayesfx}{\supp{\conddens{X}{d}}}$. Now let us compute the desired support for all $d \in \domspace$: 
     \begin{align}
         \supp{\conddens{Z^*}{d}} &= \image{\phi}{\actionspace^*_d}\\
         &= \label{eq:byinvarimage} \image{\phi}{\actionspace^*}\\
         &= \supp{\dens{Z^*}}.
     \end{align}
     \cref{eq:byinvarimage} is by \cref{asmp:image}.
\end{enumerate}
Because $\risk{Y}{X}$ is the minimum achievable risk by any encoder regardless of constraint (this is by \cref{lem:DPI}), this implies that $\conddens{Z^*}{X}$ is an optimal encoder for \cref{eq:min_risk}.
\end{proof}


The following lemma essentially says that when $\risk{Y}{Z}$ is minimized, then the optimal predictors for each domain all agree on the intersection of their support.

\begin{lemma}\label{lemm:agree} Let $\conddens{Z}{X}$ be an encoder such that
 $\risk{Y}{Z} = \risk{Y}{X}$. Under \cref{asmp:probabilityspaces,asmp:losses}, we have that for all $z \in \supp{\dens{Z}}$, there exists $\act{}^* \in \actionspace$ such that
\begin{equation}
    \E{\conddens{Y}{z}}{\ell(Y, \act{}^*)} <\E{\conddens{Y}{z}}{\ell(Y, \act{})} \qquad \forall \ \act{} \neq \act{}^*.
\end{equation}
In other words, the restriction of any $\bayesfz \in \bayesfamilyz$ to $\supp{\dens{Z}}$ is unique. If, in addition, \cref{asmp:bayes_inv} holds, then for all $(z, d) \in \supp{\dens{Z, \Dt}}, \act{} \in \actionspace$ such that $\act{} \neq \bayesfz(z)$,
\begin{equation}
    \E{\conddens{Y}{z, d}}{\ell(Y, \bayesfz(z)} <\E{\conddens{Y}{z, d}}{\ell(Y, \act{})}.
\end{equation}
In other words, the restriction of any $\fz \in \bayesfamilyzd{d}$ to $\supp{\conddens{Z}{d}}$ is unique and equal to $\bayesfz$.
\end{lemma}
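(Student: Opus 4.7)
My plan is to leverage \cref{lem:CMI}, which already gives the forward direction I need: since $\risk{Y}{Z}=\risk{Y}{X}$, for every $\bayesfz \in \bayesfamilyz$ and every $(x,z) \in \supp{\dens{X,Z}}$ we have $\bayesfz(z)=\bayesfx(x)$. In particular, fixing any $z \in \supp{\dens{Z}}$, all points $x$ in the fibre $\supp{\conddens{X}{z}}$ share a single Bayes prediction, which I will call $\act{}^*_z \defeq \bayesfx(x)$. This common value is my candidate unique minimizer.

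For the first claim, I would push the expectation through the Markov chain $Z-X-Y$: for any $z \in \supp{\dens{Z}}$ and any $\act{}\in\actionspace$,
\begin{equation}
\E{\conddens{Y}{z}}{\ell(Y,\act{})} = \sum_{x \in \supp{\conddens{X}{z}}} \conddens{X}{z}(x)\, \E{\conddens{Y}{x}}{\ell(Y,\act{})}.
\end{equation}
Each inner expectation is, by \cref{asmp:losses:strictproperness} of \cref{asmp:losses}, strictly minimized at $\bayesfx(x)$, and by the consequence of \cref{lem:CMI} recalled above this common minimizer is $\act{}^*_z$ for every $x$ in the fibre. A positive-weighted sum of functions that are all strictly minimized at the same point is itself strictly minimized there, giving the desired strict inequality and hence uniqueness of the restriction of any $\bayesfz$ to $\supp{\dens{Z}}$.

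For the second claim I repeat the same decomposition, but condition additionally on $d$. Using the graphical model \pcref{eq:graphical_model}, $Z$ is independent of $(Y,\Dt)$ given $X$, so
\begin{equation}
\E{\conddens{Y}{z,d}}{\ell(Y,\act{})} = \sum_{x \in \supp{\conddens{X}{z,d}}} \conddens{X}{z,d}(x)\, \E{\conddens{Y}{x,d}}{\ell(Y,\act{})}.
\end{equation}
For each $x$ in this fibre, $(x,z) \in \supp{\dens{X,Z}}$ and $(x,d)\in \supp{\dens{X,\Dt}}$, so by \cref{asmp:bayes_inv} the inner expectation is strictly minimized at $\bayesfx(x)$, which by the first part equals $\bayesfz(z)$ (for any $\bayesfz\in\bayesfamilyz$). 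The same positive-convex-combination argument then gives strict optimality of $\bayesfz(z)$ for $\conddens{Y}{z,d}$, which identifies it as the unique minimizer of $\domriskq{Z}{\vardot}{d}$ restricted to $\supp{\conddens{Z}{d}}$.

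The only subtlety, and likely the main obstacle to spell out carefully, is the support bookkeeping: I must check that $\supp{\conddens{X}{z}}$ is nonempty whenever $z \in \supp{\dens{Z}}$ (immediate from marginalization), and that $\supp{\conddens{X}{z,d}} \subseteq \supp{\dens{X,Z}}\cap\supp{\dens{X,\Dt}}$ so that both the conclusion of \cref{lem:CMI} and \cref{asmp:bayes_inv} apply to every $x$ in the fibre. Once these inclusions are stated, the rest is a clean application of strict properness and strict convex-combination monotonicity.
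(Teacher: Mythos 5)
Your proof is correct and follows essentially the same approach as the paper's: both parts rely on decomposing $\conddens{Y}{z}$ (resp.\ $\conddens{Y}{z,d}$) through the Markov chain $Z-X-Y$, applying \cref{lem:CMI} to get a single Bayes prediction on each fibre, and then invoking strict properness (\cref{asmp:losses}) or generalized covariate shift (\cref{asmp:bayes_inv}) pointwise before summing with positive weights. The only difference is that the paper first derives the intermediate fact $\domrisk{Z}{d} = \domrisk{X}{d}$ for all $d$ and a domain-conditional analogue of \cref{lem:CMI} (\cref{eq:domaincond_CMI}); you go directly to the strict pointwise inequality, which is the decisive step in the paper's argument as well and already implies the ``in other words'' conclusion, so your streamlining is sound.
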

\begin{proof}
For the first result, let $z \in \supp{\dens{Z}}$ and consider $x \in \supp{\conddens{X}{z}}$. By \cref{lem:CMI}, it must be the case that $\bayesfx$ is constant on $\supp{\conddens{X}{z}}$. Thus, we can pick $\act{}^* = \bayesfx(x)$. Now, let $\act{} \neq \act{}^*$. We have that,
\begin{align}
    \label{eq:agree:condindep}\E{\conddens{Y}{z}}{\ell(Y, \act{}^*)} &= \E{\conddens{X}{z} \conddens{Y}{X}}{\ell(Y, \act{}^*)}\\ 
    &= \E{\conddens{X}{z} \conddens{Y}{X}}{\ell(Y, \bayesfx(X))}\\ 
    &\label{eq:agree:strictprop}< \E{\conddens{X}{z} \conddens{Y}{X}}{\ell(Y, \act{})}\\ 
    &= \E{\conddens{Y}{z}}{\ell(Y, \act{})}.
\end{align}
\Cref{eq:agree:condindep} is due to the conditional independence of $Y$ and $Z$ given $X$. \Cref{eq:agree:strictprop} is due to \cref{asmp:losses} and the definition of the Bayes predictor. Let $\bayesfz : \supp{\dens{Z}}\to \actionspace$ be the unique Bayes predictor from $Z$.

Now, for the second result, note that
\begin{align}
    \risk{Y}{X} &= \riskq{Y}{X}{\bayesfx}\\
    &= \sum_{d \in \domspace} \dens{\Dt}(d) \domriskq{X}{\bayesfx}{d}\\
    &= \sum_{d \in \domspace} \dens{\Dt}(d) \domrisk{X}{d}, & \text{\cref{asmp:bayes_inv}}
\end{align}
and
\begin{align}
    \risk{Y}{Z} &= \riskq{Y}{Z}{\bayesfz}\\
    &= \sum_{d \in \domspace} \dens{\Dt}(d) \domriskq{Z}{\bayesfz}{d}\\
    &\geq \sum_{d \in \domspace} \dens{\Dt}(d) \domrisk{Z}{d}, \label{eq:invariant:bayesdef}
\end{align}
where \cref{eq:invariant:bayesdef} is due to the definition of (domain-conditional) Bayes risk.
Then
\begin{align}
    \risk{Y}{Z} - \risk{Y}{X} &\geq \sum_{d \in \domspace} \dens{\Dt}(d) \left(\domrisk{Z}{d} - \domrisk{X}{d}\right)\\
    & \geq 0. & \text{\cref{lem:DPI} conditioned on $d$}
\end{align}
Thus, any encoder that achieves $\risk{Y}{Z} = \risk{Y}{X}$ also satisfies $\domrisk{Z}{d} = \domrisk{X}{d}$ for all $d \in \domspace$ since we assume that $\supp{\dens{\Dt}}=\domspace$ in \cref{asmp:probabilityspaces}. Now, let $d \in \domspace$. An argument analogous to \cref{lem:CMI} gives us,
\begin{equation}\label{eq:domaincond_CMI}
 \forall \fz \in \bayesfamilyzd{d}, \forall (x,z) \in \supp{\conddens{X,Z}{d}}, \ \fz(z) = \bayesfx(x) = \bayesfz(z).
\end{equation}
\cref{eq:domaincond_CMI} is derived from $\domrisk{Z}{d} = \domrisk{X}{d}$ using \cref{asmp:bayes_inv} in place of \cref{asmp:losses:strictproperness} of \cref{asmp:losses} for a specific domain $d$. 
Let $z \in \supp{\conddens{Z}{d}}$ and $\act{} \in \actionspace$ such that $\act{} \neq \bayesfz(z)$. Since $\supp{\conddens{X}{z,d}} \subseteq \supp{\conddens{X}{z}}$, $\bayesfx$ is a constant on $\supp{\conddens{X}{z,d}}$ and equal to $\bayesfz$. Now, as above, we have that
\begin{align}
    \E{\conddens{Y}{z,d}}{\ell(Y, \bayesfz(z))} &= \E{\conddens{X}{z,d} \conddens{Y}{X, d}}{\ell(Y, \bayesfz(z))}\\ 
    &= \E{\conddens{X}{z,d} \conddens{Y}{X, d}}{\ell(Y, \bayesfx(X))}\\ 
    &\label{eq:agree:bayes_inv}< \E{\conddens{X}{z,d} \conddens{Y}{X, d}}{\ell(Y, \act{})}\\ 
    &= \E{\conddens{Y}{z,d}}{\ell(Y, \act{})}.
\end{align}
\Cref{eq:agree:bayes_inv} is due to \cref{asmp:bayes_inv}.
\end{proof}


\begin{corollary}\label{cor:agree}
Let $\conddens{Z}{X}$ be an encoder such that $\risk{Y}{Z} = \risk{Y}{X}$. Under \cref{asmp:probabilityspaces,asmp:losses,asmp:bayes_inv} we have that $\bayesfamilyz \subseteq \bayesfamilyzd{d}$ for all $d \in \domspace$ and that for all $d_s, d_t \in \domspace$
 \begin{equation}\label{eq:cor:agree:toprove}
     \inf_{\fz \in \bayesfamilyzd{d_s}} \domriskq{Z}{\fz}{d_t} = \domrisk{Z}{d_t}
 \end{equation}
\end{corollary}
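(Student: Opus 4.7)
}
The plan is to leverage the two conclusions of \cref{lemm:agree} that already do most of the work: under $\risk{Y}{Z}=\risk{Y}{X}$ there is a uniquely defined Bayes predictor $\bayesfz$ on $\supp{\dens{Z}}$, and every domain-conditional risk minimizer $\fz \in \bayesfamilyzd{d}$ must coincide with $\bayesfz$ on $\supp{\conddens{Z}{d}}$. Both claims of the corollary then reduce to support-containment arguments, so no new calculations of risks should be needed.

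\emph{First claim ($\bayesfamilyz \subseteq \bayesfamilyzd{d}$).} I would fix $d \in \domspace$ and take an arbitrary $\fz \in \bayesfamilyz$. The first conclusion of \cref{lemm:agree} forces $\fz(z) = \bayesfz(z)$ for every $z \in \supp{\dens{Z}}$. Since $\supp{\conddens{Z}{d}} \subseteq \supp{\dens{Z}}$ (because $\dens{\Dt}$ has full support by \cref{asmp:probabilityspaces}), this equality also holds on $\supp{\conddens{Z}{d}}$. The domain-conditional risk $\domriskq{Z}{\fz}{d}$ only depends on the values of $\fz$ on $\supp{\conddens{Z}{d}}$, so $\fz$ achieves the same domain risk as $\bayesfz$, which by the second conclusion of \cref{lemm:agree} is exactly $\domrisk{Z}{d}$. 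Hence $\fz \in \bayesfamilyzd{d}$.

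\emph{Second claim (infimum identity).} For fixed $d_s, d_t \in \domspace$, the inequality $\inf_{\fz \in \bayesfamilyzd{d_s}} \domriskq{Z}{\fz}{d_t} \geq \domrisk{Z}{d_t}$ is immediate from the definition of $\domrisk{Z}{d_t}$. For the reverse direction I would construct a particular witness $\fz^\dagger$ by splicing: set $\fz^\dagger(z) = \bayesfz(z)$ on $\supp{\conddens{Z}{d_s}} \cup \supp{\conddens{Z}{d_t}}$ and define it arbitrarily elsewhere (invoking \cref{asmp:losses:properness} of \cref{asmp:losses} to ensure at least one valid action exists). By \cref{lemm:agree}, this $\fz^\dagger$ is optimal on $\supp{\conddens{Z}{d_s}}$, so $\fz^\dagger \in \bayesfamilyzd{d_s}$; simultaneously it coincides with the unique minimizer $\bayesfz$ on $\supp{\conddens{Z}{d_t}}$, so $\domriskq{Z}{\fz^\dagger}{d_t} = \domrisk{Z}{d_t}$, closing the inequality.

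\emph{Main obstacle.} The only subtlety I anticipate is the splicing step: one must be explicit that modifying $\fz$ outside $\supp{\conddens{Z}{d_s}}$ does not change its source risk, and that both supports live inside $\supp{\dens{Z}}$ so that $\bayesfz$ is actually defined on the union. Everything else is bookkeeping on top of \cref{lemm:agree}.
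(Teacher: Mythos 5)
Your proposal is correct and follows the same skeleton as the paper: both claims fall out of \cref{lemm:agree}. For the second claim, however, your splicing construction is more work than needed. The paper's route is leaner: once you have $\bayesfamilyz \subseteq \bayesfamilyzd{d}$ for every $d$ (your first claim), any single $\fz \in \bayesfamilyz$ is simultaneously in $\bayesfamilyzd{d_s}$ (so it is admissible in the infimum) \emph{and} in $\bayesfamilyzd{d_t}$ (so $\domriskq{Z}{\fz}{d_t} = \domrisk{Z}{d_t}$), and the lower bound $\domriskq{Z}{\fz}{d_t} \geq \domrisk{Z}{d_t}$ is definitional. There is no need to define a new predictor ``arbitrarily elsewhere'' or to invoke \cref{asmp:losses:properness} to populate the off-support values; the global Bayes family $\bayesfamilyz$ is already non-empty under the standing assumptions, and any member already does both jobs. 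Your splicing argument does work (both $\supp{\conddens{Z}{d_s}}$ and $\supp{\conddens{Z}{d_t}}$ are indeed inside $\supp{\dens{Z}}$, where $\bayesfz$ is defined), so the only issue is that it re-derives by hand a witness that the first claim hands you for free.
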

\begin{proof}
$\bayesfamilyz \subseteq \bayesfamilyzd{d}$ is immediate from \cref{lemm:agree}. Now, we have that $\domriskq{Z}{\fz}{d_t} \geq \domrisk{Z}{d_t}$. So, the result follows by taking any $\fz \in \bayesfamilyz  \subseteq \bayesfamilyzd{d_s}$ in the $\inf$ of \cref{eq:cor:agree:toprove}.
\end{proof}

\begin{theorem}[Characterizing optimal representations for IDG, equiv.~\Cref{thm:main_paper}]\label{thm:main}
Under \cref{asmp:probabilityspaces,asmp:cardinality,asmp:losses,asmp:image,asmp:domains,asmp:bayes_inv}, an encoder $\conddens{Z}{X}$ is optimal for idealized domain generalization if and only if it minimizes the Bayes risk while matching the support of $\conddens{Z}{d}$ and $\dens{Z}$ for all $d \in \domspace$, i.e.,
\begin{align}
\conddens{Z}{X} \in   & \arg\min_{\conddens{Z}{X}}  \risk{Y}{Z} \label{eq:appx_min_risk}   \\
&\ \subject \quad \forall\ d \in \domspace, \ \supp{\conddens{Z}{d}} = \supp{\dens{Z}}  \label{eq:appx_support_match}  
\end{align}
\end{theorem}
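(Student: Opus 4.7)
The plan is to compute the value of $\IDGrisk{Z}$ by sandwiching it between a universal lower bound and an upper bound that is attained exactly under the two stated conditions. The universal lower bound comes from the chain $\domriskq{Z}{\fz}{d_t} \geq \domrisk{Z}{d_t} \geq \domrisk{X}{d_t}$, valid for any $\fz \in \bayesfamilyzd{d_s}$ by the definition of the domain-wise Bayes risk and DPI conditioned on $d_t$ (\cref{lem:DPI}). Taking sup over $\fz$ and expectation over $\dens{\Ds,\Dt}$ gives $\IDGrisk{Z} \geq \E{\dens{\Dt}}{\domrisk{X}{\Dt}} = \risk{Y}{X}$, where the equality rewrites $\risk{Y}{X} = \E{\dens{\Dt}}{\domriskq{X}{\bayesfx}{\Dt}}$ and invokes \cref{asmp:bayes_inv} to identify each summand with the domain-wise Bayes risk. \Cref{lem:exists} already shows this bound is attainable, so the question reduces to identifying exactly which encoders achieve it.

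For the sufficiency direction ($\Leftarrow$), I would argue that under \eqref{eq:appx_min_risk} and \eqref{eq:appx_support_match}, \cref{lem:exists} together with the constraint forces $\risk{Y}{Z} = \risk{Y}{X}$, so \cref{lemm:agree} applies: $\bayesfz$ is uniquely defined on $\supp{\dens{Z}}$, and every source minimizer $\fz \in \bayesfamilyzd{d_s}$ coincides with $\bayesfz$ on $\supp{\conddens{Z}{d_s}}$. The support constraint then lifts this agreement to $\supp{\conddens{Z}{d_t}}$ since both equal $\supp{\dens{Z}}$, giving $\domriskq{Z}{\fz}{d_t} = \domrisk{Z}{d_t} = \domrisk{X}{d_t}$ for every pair $(d_s,d_t)$. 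The sup over source minimizers therefore collapses and $\IDGrisk{Z} = \risk{Y}{X}$, matching the lower bound.

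For the necessity direction ($\Rightarrow$), assume $\IDGrisk{Z} = \risk{Y}{X}$. Tightness of the lower bound, combined with the full support of $\dens{\Ds,\Dt}$ (\cref{asmp:domains}) and the full support of $\dens{\Dt}$, forces $\domrisk{Z}{d} = \domrisk{X}{d}$ for every $d \in \domspace$, hence $\risk{Y}{Z} = \risk{Y}{X}$ after averaging. For the support constraint, suppose toward contradiction that $\supp{\conddens{Z}{d_t}} \not\subseteq \supp{\conddens{Z}{d_s}}$ for some pair, pick $z^* \in \supp{\conddens{Z}{d_t}} \setminus \supp{\conddens{Z}{d_s}}$, and define $\fz$ to equal $\bayesfz$ on $\supp{\conddens{Z}{d_s}}$ and some other action at $z^*$ (possible since $|\actionspace| > 1$ by \cref{asmp:losses}). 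Values off $\supp{\conddens{Z}{d_s}}$ do not affect $\domrisk{Z}{d_s}$, so $\fz \in \bayesfamilyzd{d_s}$; yet the strictness clause of \cref{lemm:agree} at $d_t$ yields $\domriskq{Z}{\fz}{d_t} > \domrisk{Z}{d_t} = \domrisk{X}{d_t}$. Because $\dens{\Ds,\Dt}(d_s, d_t) > 0$, this strictly inflates $\IDGrisk{Z}$ above $\risk{Y}{X}$, a contradiction. Swapping the roles of $d_s$ and $d_t$ and taking unions promotes pairwise containment to $\supp{\conddens{Z}{d}} = \supp{\dens{Z}}$ for every $d$.

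The main obstacle is the necessity argument for the support constraint: one must convert any hypothetical support mismatch into a concrete source risk minimizer that is \emph{strictly} worse on the target. This depends essentially on the strictness portion of \cref{lemm:agree}, which itself leans on \cref{asmp:losses} and \cref{asmp:bayes_inv}; without strict properness, a worst-case source minimizer could accidentally coincide with $\bayesfz$ at $z^*$ and the supremum in $\IDGrisk{Z}$ would fail to penalize the mismatch. Getting the full-support hypothesis on $\dens{\Ds,\Dt}$ to translate a per-pair contradiction into a genuine inflation of the expected IDG risk is the other technical ingredient that makes the necessity direction go through.
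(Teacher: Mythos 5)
Your proposal follows essentially the same route as the paper's proof: establish the universal lower bound $\IDGrisk{Z} \geq \risk{Y}{X}$ via DPI and the generalized covariate shift assumption, then show the lower bound is attained exactly under the two conditions. The sufficiency direction tracks the paper almost verbatim, using \cref{lem:exists} to get $\risk{Y}{Z} = \risk{Y}{X}$ and \cref{lemm:agree} to collapse the supremum under the support constraint. The necessity direction is the same contrapositive argument, with one cosmetic variation: the paper splits the target support into the intersection $S$ and the excess $\bar S = \supp{\conddens{Z}{d_t}}\setminus\supp{\conddens{Z}{d_s}}$ and perturbs $\fz$ on all of $\bar S$, whereas you isolate a single offending point $z^* \in \bar S$ and deviate from $\bayesfz$ only there. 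Both produce a source minimizer that is strictly worse on the target and both rely on the same ingredients — the strictness of \cref{lemm:agree}, the freedom of $\fz$ off $\supp{\conddens{Z}{d_s}}$, and the full support of $\dens{\Ds,\Dt}$ to turn the per-pair violation into a strict inflation of the expectation. Your note to "define $\fz$ to equal $\bayesfz$ on $\supp{\conddens{Z}{d_s}}$ and some other action at $z^*$" should, to be fully precise, also stipulate that $\fz$ coincides with $\bayesfz$ on the rest of $\supp{\dens{Z}}$ so that $\domriskq{Z}{\fz}{d_t}$ is cleanly comparable to $\domrisk{Z}{d_t}$; but this is a small presentational gap, not a logical one. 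The closing "swap roles and take unions" step correctly promotes pairwise containment to full support match. Overall this is the paper's argument with the necessity case specialized to a single witness point.
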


\begin{proof}
The IDG risk is lower bounded by $\risk{Y}{X}$:
\begin{align}
\IDGrisk{Z} &\geq \E{\dens{\Ds, \Dt}}{\inf_{\fz \in \bayesfamilyzd{\Ds}} \domriskq{Z}{\fz}{\Dt}}\\
&\geq \E{\dens{\Ds, \Dt}}{ \domrisk{Z}{\Dt}} &\\
&\geq \E{\dens{\Ds, \Dt}}{ \domrisk{X}{\Dt}} & \text{\cref{lem:DPI}} &\\
&= \risk{Y}{X} & \text{\cref{asmp:bayes_inv}} \label{eq:geq_RYX}
\end{align}
We will now show that this lower bound is achieved by an encoder if and only if it satisfies \cref{eq:appx_min_risk,eq:appx_support_match}, which exist by \cref{lem:exists}.

\textbf{Sufficiency} ($\impliedby$):
Let $\conddens{Z}{X}$ be an encoder that satisfies \cref{eq:appx_min_risk,eq:appx_support_match}. Note that $\risk{Y}{Z} = \risk{Y}{X}$ by \cref{lem:exists}.
Let $\bayesfz \in \bayesfamilyz$, then we have the following IDG risk
\begin{align}
 &\IDGrisk{Z}  \\
 &\quad= 
  \E{\dens{\Ds, \Dt}}{\sup_{\fz \in \bayesfamilyzd{\Ds}} \E{\conddens{Z,Y}{\Dt}}{\ell(Y,\fz(Z))}} \\
 &\quad= \E{\dens{\Ds, \Dt}}{\sup_{\fz \in \bayesfamilyzd{\Ds}} \E{\conddens{Z,Y}{\Dt}}{\ell(Y,\bayesfz(Z))}} & \text{\cref{lemm:agree} under matching support} \label{eq:proof:agree} \\
  &\quad=
 \E{\dens{\Dt}}{ \E{\conddens{Z,Y}{\Dt}}{\ell(Y,\bayesfz(Z))}} & \text{constant w.r.t $\Ds$} \label{eq:proof:const} \\
 &\quad\label{eq:proof:final}=\risk{Y}{Z} = \risk{Y}{X}
\end{align}


\textbf{Necessity} ($\implies$):
If the IDG risk is $\risk{Y}{X}$, then it must be the case that
\begin{align}
    \risk{Y}{Z} &= \risk{Y}{X}\\
    \label{eq:impliessupport}\sup_{\fz \in \bayesfamilyzd{d_s}}  \domriskq{Z}{\fz}{d_t} &=   \domrisk{Z}{d_t} \quad \forall (d_s, d_t) \in \supp{\dens{\Ds, \Dt}}
\end{align}
We will prove by contrapositive that \cref{eq:impliessupport} implies support match (\cref{eq:appx_support_match}).
Suppose that the support match does not hold. Since $\supp{\dens{Z}} = \cup_{d \in \domspace} \supp{\conddens{Z}{d}}$ and $\supp{\dens{\Ds, \Dt}} = \domspace \times \domspace$ (\cref{asmp:domains}), there must exist $(d_s, d_t) \in \supp{\dens{\Ds, \Dt}}$ such that $\supp{\conddens{Z}{d_s}} \neq \supp{\conddens{Z}{d_s}}$. 

Define the set $S = \supp{\conddens{Z}{d_s}} \cap \supp{\conddens{Z}{d_t}}$ and $\Bar{S} = \supp{\conddens{Z}{d_t}} \setminus \supp{\conddens{Z}{d_s}}$, let $\rho = \conddist{Z}{d_t}(S)$, and let $\bayesfz \in \bayesfamilyz$. Then,
\begin{align}
&\sup_{\fz \in \bayesfamilyzd{d_s}}  \domriskq{Z}{\fz}{d_t} \\
&= \sup_{\fz \in \bayesfamilyzd{d_s}}  \rho \E{\conddens{Y,Z}{S,d_t}}{\ell(Y,\fz(Z))} + (1-\rho) \E{\conddens{Y,Z}{\Bar{S},d_t}}{\ell(Y,\fz(Z))} \\
&= \sup_{\fz \in \bayesfamilyzd{d_s}}  \rho \E{\conddens{Y,Z}{S,d_t}}{\ell(Y,\bayesfz(Z))} + (1-\rho) \E{\conddens{Y,Z}{\Bar{S},d_t}}{\ell(Y,\fz(Z))} & \text{Lem. \ref{lemm:agree}}  \label{eq:proof:agree2} \\
&= \rho \E{\conddens{Y,Z}{S,d_t}}{\ell(Y,\bayesfz(Z))} + (1-\rho)\sup_{\fz \in \bayesfamilyzd{d_s}} \E{\conddens{Y,Z}{\Bar{S}, d_t}}{\ell(Y,\fz(Z))} \label{eq:proof:sup}    \\
&=  \domrisk{Z}{d_t} + (1-\rho)\sup_{\fz \in \bayesfamilyzd{d_s}}   \E{\conddens{Y,Z}{\Bar{S}, d_t}}{\ell(Y,\fz(Z)) - \ell(Y, \bayesfz(Z))}& \label{eq:proof:agree}\\
&> \domrisk{Z}{d_t} &\text{Lem. \ref{lemm:agree}} \label{eq:proof:non_zero}
\end{align}
\Cref{eq:proof:non_zero} uses the following reasoning. $1 - \rho > 0$ due to support mismatch. For any $\fz \in \bayesfamilyzd{d_s}$ such that $\fz \neq \bayesfz$ on $\bar{S}$ (such an $\fz$ exists by \cref{asmp:losses:twoactions} of \cref{asmp:losses}), we have that
\begin{equation}
    \E{\conddens{Y,Z}{\Bar{S}, d_t}}{\ell(Y,\fz(Z)) - \ell(Y, \bayesfz(Z))} > 0
\end{equation}
by \cref{lemm:agree}.
\end{proof}

As a corollary from the proof strategy we directly have that the optimal DG risk is simply $\risk{Y}{X}$. This means that using the optimal encoder one can actually perform just as well by training on the source as if you were to directly train on the target using the raw data. 

\begin{corollary}[Optimal IDG Risk]\label{corr:opt_IDG_supervised}
Under \cref{asmp:probabilityspaces,asmp:cardinality,asmp:losses,asmp:image,asmp:domains,asmp:bayes_inv},
$\inf_{\conddens{Z}{X}} \IDGrisk{Z} = \risk{Y}{X}$.
\end{corollary}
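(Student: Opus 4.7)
The plan is to get the corollary as a direct consequence of the two pieces already assembled in the proof of \Cref{thm:main}, namely (i) the universal lower bound $\IDGrisk{Z}\geq \risk{Y}{X}$ for every encoder, and (ii) the existence of an encoder that attains it. No new computation should be required; the work is merely bookkeeping.

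First I would establish the lower bound $\inf_{\conddens{Z}{X}} \IDGrisk{Z} \geq \risk{Y}{X}$. This is exactly the inequality chain appearing in the proof of \Cref{thm:main}, which begins by pulling the supremum over source minimizers inside as an infimum, bounds the resulting domain-conditional risk by $\domrisk{Z}{\Dt}$ (definition of Bayes risk), applies the data processing inequality \Cref{lem:DPI} to obtain $\domrisk{Z}{\Dt}\geq \domrisk{X}{\Dt}$ pointwise in $\Dt$, and finally uses \Cref{asmp:bayes_inv} together with \Cref{asmp:domains} to rewrite the resulting expectation as $\risk{Y}{X}$. Since this holds for every encoder, the infimum inherits the same lower bound.

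Next I would exhibit an encoder that attains this bound. \Cref{lem:exists} produces an encoder $\conddens{Z^*}{X}$ that simultaneously satisfies $\risk{Y}{Z^*}=\risk{Y}{X}$ and the support-match condition $\supp{\conddens{Z^*}{d}}=\supp{\dens{Z^*}}$ for all $d\in\domspace$. In particular, $\conddens{Z^*}{X}$ satisfies the hypotheses \pcref{eq:appx_min_risk,eq:appx_support_match} of \Cref{thm:main}, so the sufficiency direction of that theorem yields $\IDGrisk{Z^*}=\risk{Y}{X}$. Combining this with the lower bound from the previous paragraph gives $\inf_{\conddens{Z}{X}}\IDGrisk{Z}=\risk{Y}{X}$, completing the proof.

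I do not expect any real obstacle, since every ingredient has already been proved: the lower bound is literally \cref{eq:geq_RYX}, and the attainment is the sufficiency direction of \Cref{thm:main} applied to the encoder built in \Cref{lem:exists}. The only thing to be careful about is being explicit that the infimum over encoders is in fact attained (so that one may equivalently write $\min$ instead of $\inf$), which follows from the same construction.
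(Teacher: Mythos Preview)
Your proposal is correct and matches the paper's own reasoning exactly: the corollary is presented there as an immediate consequence of the proof of \Cref{thm:main}, combining the universal lower bound \cref{eq:geq_RYX} with the sufficiency direction applied to the encoder constructed in \Cref{lem:exists}.
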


\subsection{Impossibility results}
\label{appx:impossib}

As a direct corollary of \cref{thm:main} we know that it is impossible to learn an optimal representation without knowledge or assumptions on the target domain. 
We can actually prove the following much stronger negative result, which essentially states that it is impossible to find a useful representation without having some information about the target domain.
Specifically, we prove that if there exists a non-trivial target domain on which the representation is advantageous then there exists an infinite amount of target domains on which it is disadvantageous compared to predicting from a constant.

For clarity, we will focus on the proof for the standard accuracy (0-1 loss) which is much shorter and simpler to understand, but note that we can generalize the proof to all losses with the right assumptions.

The key is that outside of the source domain, the label distribution is unconstrained because generalized covariate shift has no effect. In other words, for any domain which gives some probability mass on an example that has not been seen during training, then all possible labels for that example gives a valid domain.
Furthermore, if there exists one domain on which the representation is good, then one can construct a domain on which the representation is bad simply by labelling this point as the constant prediction.


\begin{proposition}[No free lunch for learning representations for IDG, equiv.~\Cref{prop:impossibility}]\label{prop:no_free_lunch_source}
Let $\ell$ be the 0-1 loss with prediction space $\actionspace{} = \labelspace$. Let $\alg : \distspace(\inputspace,\labelspace) \to  \distspace(\repspace|\inputspace)$ be any algorithm for choosing an encoder $\conddens{Z}{X}$ from the data distribution $\dens{X, Y}$, $C$ be any constant r.v. that t.v.i. $\repspace$,
and $\conddens{X,Y}{\ds}$ be any desired source distribution such that
\begin{itemize}
    \item there is a unique constant prediction $\act{}_C = \argmin_{y \in \labelspace} \E{\conddens{Y}{\ds}}{\ell(Y,y)}$,
    \item and $|\mathcal{X} \setminus \supp{\conddens{X}{\ds}}| > 1$.
\end{itemize}
Let $\conddens{Z_{\ds}}{X}  \defeq \alg(\conddens{X,Y}{\ds})$ be the chosen source encoder. If there exists a target domain $\conddens{X,Y}{\dt^g}$ such that
\begin{itemize}
\item \textbf{(Non-trivial support)} $\emptyset \neq \supp{\conddens{X}{\dt^g}} \subseteq \mathcal{X} \setminus \supp{\conddens{X}{\ds}}$;
\item \textbf{(Satisfies Bayes image invariance)} $\actionspace{}^*_{\dt^g} = \labelspace$, \ie, there is at least one example for every possible label;
\item \textbf{(Source encoder is useful)} $\conddens{Z_{\ds}}{X}$ performs  better than a constant representation, 
\begin{equation}
    \sup_{\fz \in \bayesfamilyzd[Z_{\ds}]{\ds}} \domriskq{Z_{\ds}}{\fz}{\dt^g} < \sup_{\fz \in \bayesfamilyzd[C]{d_s}} \domriskq{C}{\fz}{\dt^g},
\end{equation}
\end{itemize}
Then there exist multiple target domains $\dt^b$ such that $\conddens{Z_{\ds}}{X}$ underperforms a constant encoder,
\begin{equation}\label{prop:worst_than_constant}
\sup_{\fz\in \bayesfamilyzd[Z_{\ds}]{d_s}} \domriskq{Z_{\ds}}{\fz}{\dt^b} > \sup_{\fz \in \bayesfamilyzd[C]{d_s}} \domriskq{C}{\fz}{\dt^b}.
\end{equation}
\end{proposition}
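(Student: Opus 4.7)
The plan is to exploit the fact that source-risk minimizers from $Z_{\ds}$ are essentially unconstrained on the representation image of outside-source-support points, while the constant representation $C$ admits a unique source-risk minimizer $\act{}_C$ whose target behavior is fully pinned down. The argument splits into (i) extracting a ``witness'' point $x_0$ and a source-risk minimizer $\fz^\star$ with $\fz^\star(Z_{\ds}(x_0)) \neq \act{}_C$ from the usefulness hypothesis on $\dt^g$, and (ii) constructing a family of bad target domains that place mass on $x_0$ and label it by $\act{}_C$, so that $C$ is perfect while $\fz^\star$ errs.

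For (i), uniqueness of $\act{}_C$ implies $\sup_{\fz \in \bayesfamilyzd[C]{\ds}} \domriskq{C}{\fz}{\dt^g} = \E{\conddens{Y}{\dt^g}}{\ell(Y,\act{}_C)}$. Pick $\fz^\star \in \bayesfamilyzd[Z_{\ds}]{\ds}$ attaining (or $\varepsilon$-attaining) the supremum on the left-hand side of the usefulness inequality. If $\fz^\star(Z_{\ds}(x)) = \act{}_C$ for every $x \in \supp{\conddens{X}{\dt^g}}$, then $\domriskq{Z_{\ds}}{\fz^\star}{\dt^g}$ equals exactly the constant's risk, contradicting the strict inequality. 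Hence there exists $x_0 \in \supp{\conddens{X}{\dt^g}} \subseteq \mathcal{X}\setminus\supp{\conddens{X}{\ds}}$ with $\fz^\star(Z_{\ds}(x_0)) \neq \act{}_C$. For (ii), define $\dt^b$ by $\conddens{X}{\dt^b}(x_0)=1$ and $\conddens{Y}{x_0,\dt^b}(\act{}_C)=1$. Since $x_0 \notin \supp{\conddens{X}{\ds}}$, the value $\fz^\star(Z_{\ds}(x_0))$ never enters the source risk, so $\fz^\star$ remains a valid source-risk minimizer. Then $\domriskq{Z_{\ds}}{\fz^\star}{\dt^b}=1 > 0 = \domriskq{C}{\act{}_C}{\dt^b}$, giving \cref{prop:worst_than_constant}. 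Multiplicity follows by (a) using any other outside-source point $x_1$, available by $|\mathcal{X}\setminus\supp{\conddens{X}{\ds}}|>1$, to mix the input distribution $\conddens{X}{\dt^b}$, and (b) softening $\conddens{Y}{x_0,\dt^b}(\act{}_C)$ away from $1$: any value above the crossover threshold at which the constant's risk still exceeds $\fz^\star$'s worst-case risk yields a valid $\dt^b$, producing an infinite family.

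The main obstacle is step (i), particularly non-attainment of the supremum and the possibility that $Z_{\ds}(x_0)$ coincides with some $Z_{\ds}(x')$ for an in-source $x'$. The latter is actually benign: if $Z_{\ds}(x_0)\in Z_{\ds}(\supp{\conddens{X}{\ds}})$, then all source-risk minimizers agree at $Z_{\ds}(x_0)$ on the essentially unique source Bayes prediction, and $\fz^\star(Z_{\ds}(x_0))\neq\act{}_C$ propagates to every source-risk minimizer, so the construction in (ii) still works. The former can be handled by picking $\fz^\star$ within $\varepsilon$ of the supremum for some $\varepsilon$ smaller than the positive gap $\sup_{\fz \in \bayesfamilyzd[C]{\ds}} \domriskq{C}{\fz}{\dt^g} - \sup_{\fz \in \bayesfamilyzd[Z_{\ds}]{\ds}} \domriskq{Z_{\ds}}{\fz}{\dt^g}$ afforded by the strict usefulness hypothesis, so that the averaging contradiction still forces a disagreement with $\act{}_C$ somewhere on $\supp{\conddens{X}{\dt^g}}$.
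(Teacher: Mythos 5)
Your overall strategy is the same as the paper's: extract from the usefulness hypothesis a point outside the source support whose representation is handled differently from $\act{}_C$, then build a ``bad'' target that concentrates mass there and labels it $\act{}_C$. But there is a real gap in your construction of $\dt^b$. You place \emph{all} probability mass on $x_0$, so the Bayes image of $\dt^b$ is the singleton $\{\act{}_C\}$. That violates the constant-Bayes-image assumption (\cref{asmp:image}), which requires $\actionspace^*_{\dt^b} = \actionspace^*$; equivalently, every domain in $\domspace$ must realize every Bayes prediction at least once. Your $\dt^b$ is therefore not an admissible target domain, and the conclusion is vacuous — the IDG adversary is never allowed to present it. The paper avoids this by assigning only $1-\delta$ mass to $x_{\neq C}$ and distributing the remaining $\delta > 0$ mass as the source distribution; this preserves the source's Bayes image exactly, and the paper explicitly verifies validity before computing the risks. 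Your softening of $\conddens{Y}{x_0,\dt^b}$ away from a point mass does not repair this, since with all input mass on a single $x_0$ the Bayes image remains a singleton; mixing in a second out-of-support $x_1$ also does not help unless the induced Bayes predictions recover all of $\labelspace$, which you have no control over.

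Two smaller points. First, you treat the encoder as deterministic throughout (writing $Z_{\ds}(x_0)$), whereas $\conddens{Z_{\ds}}{X}$ is a stochastic kernel; the argument must be phrased in terms of $\conddist{Z_{\ds}}{x_0}(\cdot)$ having positive mass on the ``disagreeing'' region of $\repspace$, which is exactly how the paper sets it up. Second, your witness extraction leans on the supremum being (approximately) attained by some $\fz^\star$, and patching this with an $\varepsilon$-attainment argument is more delicate than needed: the paper instead fixes an \emph{arbitrary} $\bayesfz \in \bayesfamilyzd[Z_{\ds}]{\ds}$, partitions $\repspace$ by whether $\bayesfz$ agrees with $\act{}_C$, and notes that if the target put no mass on the disagreeing part then $\domriskq{Z_{\ds}}{\bayesfz}{\dt^g}$ would already equal the constant's risk, immediately contradicting the strict usefulness inequality via $\sup \geq \domriskq{Z_{\ds}}{\bayesfz}{\dt^g}$. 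This is cleaner and sidesteps attainment entirely. These are fixable, but the domain-validity gap is the one you need to close for the proposition to say anything.
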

\begin{proof}
Let $\bayesfz \in \bayesfamilyzd[Z_{\ds}]{\ds}$ be any source Bayes predictor corresponding to our encoder.
Partition $\repspace$ according to whether $\bayesfz$ predicts like the constant or not:
\begin{equation}
    \repspace{}_C \defeq \set{z \in \repspace{} \cond \bayesfz(z) = \act{}_C} \qquad \repspace{}_{\neq C} \defeq  \repspace{} \setminus \repspace{}_C.
\end{equation}
We know by assumption that $\dt^g$ is s.t. 
\begin{equation}
    \sup_{\fz \in \bayesfamilyzd[Z_{\ds}]{d_s}} \domriskq{Z_{\ds}}{\fz}{\dt^g} < \sup_{\fz \in \bayesfamilyzd[C]{d_s}} \domriskq{C}{\fz}{\dt^g},
\end{equation}
which is clearly only possible if 
\begin{equation}
    \conddist{Z_{\ds}}{\dt^g}(\repspace{}_{\neq C}) > 0.
\end{equation}
In other words, there exists some input $x_{\neq C} \in \mathcal{X} \setminus \supp{\conddens{X}{\ds}}$ that will get represented outside of the constant region, \ie,
\begin{equation}
    \conddist{Z_{\ds}}{x_{\neq C}}(\repspace{}_{\neq C}) > 0.
\end{equation}

We will now construct the desired bad domain $\dt^b$ by giving nearly all mass to this $x_{\neq C}$, specifically, let $\conddens{X}{\dt^b}(x_{\neq C}) = 1-\delta$ for some $0 < \delta < 1$. 
We assign this example to the constant label, \ie, $\conddens{Y}{x_{\neq C},\dt^b}(\act_{C}) = 1$.
The rest of the target domain mass $\delta$ is distributed as with the source domain, \ie, $\conddens{X,Y}{\dt^b}(x, y) = \delta \cdot \conddens{X,Y}{\ds}(x, y)$ for all $x,y \in \supp{\conddens{X,Y}{\ds}}$.
Importantly,  the constructed domain $\dt^b$ is valid.
Indeed, the Bayes image is the same as the source's (\cref{asmp:image}), because we removed no prediction $\gamma$ from the source's Bayes image ($\delta > 0$). We added no new prediction $\gamma$, because $\bayesfx(x_{\neq C}) = \act{}_C \in \labelspace$ which must already have been in $\actionspace{}^*$ due to the validity of $\dt^g$.

Now let us compute the desired risk for that ``bad'' domain and show that the desired encoder performs worse than a constant encoder.
\begin{align}
\sup_{\fz \in \bayesfamilyzd[Z_{\ds}]{\ds}} &\domriskq{Z_{\ds}}{\fz}{\dt^b} \\
&= \sup_{\fz \in \bayesfamilyzd[Z_{\ds}]{\ds}} (1-\delta)\E{\conddens{Z_{\ds}}{x_{\neq C}}}{1- 
\indeq{\act_{C}}{\fz(Z_{\ds})}} + \delta \domriskq{Z_{\ds}}{\fz}{\ds} \label{eq:appx:acc}  \\
&\geq  (1-\delta)(1-\conddist{Z_{\ds}}{x_{\neq C}}(\repspace{}_{C}))    \\
&=  (1-\delta)\conddist{Z_{\ds}}{x_{\neq C}}(\repspace{}_{\neq C})  
\end{align}
In contrast, it is easy to show that $\sup_{\fz \in \bayesfamilyzd[C]{d_s}} \domriskq{C}{\fz}{\dt^b} \leq \delta$ because the constant predictor would be perfect for $x_{\neq C}$.
So any choice of $0 < \delta < \frac{\conddist{Z_{\ds}}{x_{\neq C}}(\repspace{}_{\neq C})}{1+\conddist{Z_{\ds}}{x_{\neq C}}(\repspace{}_{\neq C})}$, would satisfy \cref{prop:worst_than_constant}.
We conclude the proof by noting that there are infinitely many such choices of $\delta$, and any choice of those would result in a different valid bad domain $\dt^b$.

\end{proof}

Note that representations can often be much worse than using a constant r.v.
Specifically, if an encoder $\conddens{Z}{X}$ maps an $x$ outside of the source support then there exists an infinite number of target domains where that representation is the worst possible representation. 

\begin{proposition}[Worst representation]
Let $\alg, \conddens{Y,X}{\ds}, \conddens{Z_{\ds}}{X}, \ell$ be as in \cref{prop:no_free_lunch_source}, and $\epsilon > 0$.
If there exists an example $x_b \in \inputspace \setminus \supp{\conddens{X}{\ds}}$ that is mapped outside of the source support, \ie,  $\supp{\conddens{Z_{\ds}}{x_b}} \cap \supp{\conddens{Z}{\ds}} = \emptyset{}$, then there exist many target domains $\conddens{X,Y}{\dt}$ s.t. $\conddens{Z_{\ds}}{X}$ is $\epsilon$ close to the worst possible loss, \ie,
\begin{equation}\label{prop:much_worst_than_constant}
\sup_{\fz \in \bayesfamilyzd[Z_{\ds}]{\ds}} \domriskq{Z_{\ds}}{\fz}{\dt} \geq 1 - \epsilon.
\end{equation}
\end{proposition}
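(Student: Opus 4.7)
The plan is to adversarially construct a target that concentrates nearly all of its mass on $x_b$ and to exploit the fact that the source risk is completely insensitive to the values a source risk minimizer takes on $\supp{\conddens{Z_{\ds}}{x_b}}$. The key observation, analogous to the one used inside the proof of \cref{prop:no_free_lunch_source}, is that because $\supp{\conddens{Z_{\ds}}{x_b}} \cap \supp{\conddens{Z}{\ds}} = \emptyset$, for any fixed $\bayesfz \in \bayesfamilyzd[Z_{\ds}]{\ds}$ and any label $y' \in \labelspace$ one can modify $\bayesfz$ to output $y'$ on all of $\supp{\conddens{Z_{\ds}}{x_b}}$ while preserving its source risk; in particular, the modified predictor remains in $\bayesfamilyzd[Z_{\ds}]{\ds}$. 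Hence the supremum in \cref{prop:much_worst_than_constant} is attained by an essentially adversarial choice on the representations of $x_b$.

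Next I would construct the target. Fix any $\delta \in (0, \epsilon)$ and any $y_b \in \labelspace$ that lies in the source's Bayes image. Put mass $(1-\delta)$ on the pair $(x_b, y_b)$ (so $\conddens{Y}{x_b, \dt}$ is a point mass at $y_b$) and distribute the remaining $\delta$ mass over $(x,y) \in \supp{\conddens{X,Y}{\ds}}$ proportionally to $\conddens{X,Y}{\ds}$. Generalized covariate shift then holds trivially at $x_b$ (the conditional is a point mass) and is inherited from $\ds$ on the rest; the Bayes image equals the source's Bayes image since $y_b \in \actionspace^*$ and the $\delta$-mass part reproduces all source predictions.

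With this target, pick $y' \neq y_b$ (which exists since $|\actionspace^*| \geq 2$) and let $\fz^*$ be the modification of $\bayesfz$ that predicts $y'$ on all of $\supp{\conddens{Z_{\ds}}{x_b}}$. Then the 0-1 loss equals $1$ on the $(1-\delta)$-mass at $x_b$, giving
\begin{equation}
\sup_{\fz \in \bayesfamilyzd[Z_{\ds}]{\ds}} \domriskq{Z_{\ds}}{\fz}{\dt} \geq \domriskq{Z_{\ds}}{\fz^*}{\dt} \geq (1-\delta) \geq 1 - \epsilon.
\end{equation}
Because we may choose $\delta$ anywhere in $(0, \epsilon)$, choose $y_b$ among any Bayes-image label, and perturb the distribution of the remaining $\delta$ mass, there are infinitely many valid target domains realizing the bound.

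The only delicate step will be maintaining the Bayes image invariance assumption (\cref{asmp:image}) together with generalized covariate shift (\cref{asmp:bayes_inv}); in particular, I need to ensure that adding the $(1-\delta)$-mass at $x_b$ does not introduce a prediction outside $\actionspace^*$ and does not alter the Bayes prediction at any source $x$. The first is handled by constraining $y_b \in \actionspace^*$; the second is automatic because $x_b \notin \supp{\conddens{X}{\ds}}$ and the $\delta$-portion at other $x$ inherits their source conditionals, so no Bayes prediction is perturbed. The rest of the argument is a routine loss computation.
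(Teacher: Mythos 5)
Your proposal is correct and follows essentially the same path as the paper's proof: construct a target that places mass $1-\delta$ on $(x_b, y_b)$ with $y_b$ in the source Bayes image and the residual $\delta$ distributed as the source, exploit the disjointness $\supp{\conddens{Z_{\ds}}{x_b}} \cap \supp{\conddens{Z_{\ds}}{\ds}} = \emptyset$ to argue a source risk minimizer is free on those representations, modify it to predict $y' \neq y_b$ there, and take $\delta < \epsilon$. Your extra care in explicitly verifying GCS and Bayes-image invariance is just a fleshed-out version of the paper's one-line remark that the constructed $\dt$ satisfies the assumptions.
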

\begin{proof}
By assumption there exists an $x_b$ whose support is outside the source support.
Then similarly to \cref{prop:no_free_lunch_source} we construct a bad target domain $\dt$ by giving nearly all mass to that example $\conddens{X}{\dt}(x_b)=1-\delta$ where $\delta > 0$ and assign with probability 1 to some label that is in the source Bayes image, \ie, 
$\conddens{Y}{x_b,\dt}(\act{}_b)=1$ for some $\act{}_b \in \actionspace{}_{\ds}^*$.
The rest of the target domain mass $\delta$ is distributed as in \cref{prop:no_free_lunch_source} to the source inputs.
As in \cref{prop:no_free_lunch_source}, such a target domain $\dt$ satisfies our assumptions.
Now let us compute the risk for that $\dt$  and show that the desired encoder performs arbitrarily bad.
\begin{align}
&\sup_{\fz \in \bayesfamilyzd[Z_{\ds}]{\ds}} \domriskq{Z_{\ds}}{\fz}{\dt} \\
&= \sup_{\fz \in \bayesfamilyzd[Z_{\ds}]{d_s}} (1-\delta)\E{\conddens{Z_{\ds}}{x_b}}{1- 
\indeq{\act_b}{\fz(Z_{\ds})}} + \delta \domriskq{Z_{\ds}}{\fz}{\ds} & \text{\cref{eq:appx:acc}} \\
&\geq  \sup_{\fz \in \bayesfamilyzd[Z_{\ds}]{\ds}} (1-\delta)\E{\conddens{Z_{\ds}}{x_b}}{1- 
\indeq{\act_b}{\fz(Z_{\ds})}} \\
&= 1-\delta & \label{eq:appx:acc_is_1}
\end{align}
\Cref{eq:appx:acc_is_1} uses the fact that $\bayesfamilyzd[Z_{\ds}]{d_s}$ is unconstrained outside of the source support and that by assumption  $\supp{\conddens{Z_{\ds}}{x_b}} \cap \supp{\conddens{Z_{\ds}}{\ds}} = \emptyset$. 
To achieve the sup $1-\delta$ it then suffices to predict an $\act{} \neq \act{}_b \in \actionspace{}$.
We thus see that \cref{prop:much_worst_than_constant} holds for $\dt$ as long as $0 < \delta < \epsilon$. 
We conclude the proof by noting that there is an infinite possible choices of $\delta$ each of which give rise to a bad target domain.
\end{proof}

\subsection{Augmentations}
\label{appcs:augmentations}

\Cref{prop:ssl_aug} shows that the optimal representations for IDG can be learned with augmentations in a self-supervised fashion. Here, we provide formal definitions, assumptions, and proofs.







\begin{definition}[Augmenter]\label{def:augmentation}
An \emph{augmenter} is a conditional distribution $\conddens{A}{X} : \augspace \times \inputspace \to [0,1]$ from the input space $\inputspace$ to an augmentation space $\augspace$. For example, in CLIP $\inputspace$ is the space of images and $\augspace$ is the space of text. In standard SSL, $\augspace$ is typically the same as $\inputspace$ (e.g., both $\inputspace$ and $\augspace$ are the space of images).
\end{definition}

\begin{definition}[Augmentation conditional set]\label{def:augmentation_conditional_set}
Given an augmenter $\conddens{\augm{}}{X}$, define the augmentation conditional set as the set of conditionals of $\augm{}$ given $X$:
\begin{equation}
    \distspace^*(\augm{} \cond X) \defeq \set{\conddens{\augm{}}{x} \cond x \in \inputspace}
\end{equation}
Similarly, we can define the augmentation conditional set for domain $d$:
\begin{equation}
    \distspace^*_d(\augm{} \cond X) \defeq \set{\conddens{\augm{}}{x} \cond x \in \supp{\conddens{X}{d}}}
\end{equation}
These sets are clearly countable. Note that the augmentation conditional set can be seen as a special case of the Bayes image (\cref{def:bayesimage}) if we view the augmentation $A$ as the label and consider the log-loss where the conditional distribution is the Bayes optimal predictor due to its strict properness \cite{gneiting2007strictly}.
\end{definition}


\begin{assumption}[Finite augmentation entropy]\label{asmp:finite_aug_entropy}
We consider the augmenter $\conddens{\augm{}}{X}$ such that the entropy of the augmentation $A$ is finite, \ie, $\H{\augm{}} < \infty$. 
\end{assumption}

\begin{assumption}[Cardinalities]\label{asmp:aug_cardinality}
We assume that
\begin{equation}
    |\repspace| \geq |\distspace^*(\augm{} \cond X)|
\end{equation}
This is a similar assumption as \cref{asmp:cardinality}, which ensures the existence of optimal representations.
\end{assumption}

\begin{assumption}[Domain-agnostic augmentation]\label{asmp:domain_cover_aug}
We assume that the augmentation $A$ is \emph{domain-agnostic}, \ie, the augmentation conditional set is invariant across domains, 
\begin{equation}
    \distspace^*_d(\augm{} \cond X) = \distspace^*(\augm{} \cond X), \quad \forall d \in \domspace
\end{equation}
This assumption is generalized from the constant Bayes image assumption (\cref{asmp:image}), which guarantees the existence of optimal representations. 

Domain-agnostic augmentations essentially ensures that each augmentation conditional $\conddens{A}{x} \in \distspace^*(\augm{} \cond X)$ is seen at least once in all domains. If we introduce an equivalence relation $\sim$ as $x \sim x'$ iff $\conddens{A}{x} = \conddens{A}{x'}$ and the equivalence class $\eclass{x} \defeq \{x' \in \inputspace \cond x' \erel x \}$. Under this relation, it is easy to see that the above assumption is satisfied if and only if, for all possible equivalence classes $\eclass{x} \in \set{[x'] \cond x' \in \inputspace}$, we have that $\eclass{x}$ has intersections with all domains:
\begin{equation}
    [x] \cap \supp{\conddens{X}{d}} \neq \emptyset, \quad \forall d \in \domspace
\end{equation}
Not all augmentations are domain-agnostic. In particular, the standard image augmentations used by typical SSL models like SimCLR are not domain-agnostic, but the text-image augmentations of CLIP nearly are, as discussed in the main body (\cref{sec:learning}).
\end{assumption}

\begin{assumption}[Bayes-preserving augmentation]\label{asmp:label_preserve_aug}
We assume that the augmentation $A$ is \emph{Bayes-preserving}, \ie,
$\forall x,x' \in \inputspace$, 
\begin{equation}
    \conddens{\augm{}}{x} = \conddens{\augm{}}{x'} \implies \bayesfx(x) = \bayesfx(x').
\end{equation}
Under the notion of equivalence relation in \cref{asmp:domain_cover_aug}, this means that for each equivalence class $\eclass{x}$, all $x' \in \eclass{x}$ have the same Bayes prediction. Note that most augmentations used in practice like standard image augmentations are Bayes-preserving.
\end{assumption}

Next, we show that under the above assumptions, we can learn optimal representations by maximizing the mutual information $\MI{A}{Z}$ (in the case of log-loss $\ell$) under the support match constraint. We use log-loss simply because it is typically the loss used for training in practice. Note that the learned representations are optimal for any strict proper losses.

\begin{proposition}[Learning optimal representations without labels, equiv.~\Cref{prop:ssl_aug}]\label{prop:appx:ssl_aug}
Let $\conddens{\augm{}}{X}$ be an augmenter.
Under \cref{asmp:probabilityspaces,asmp:cardinality,asmp:losses,asmp:image,asmp:domains,asmp:bayes_inv,asmp:finite_aug_entropy,asmp:label_preserve_aug,asmp:domain_cover_aug,asmp:aug_cardinality}, any encoder  $\conddens{Z}{X}$ such that 
\begin{align}
\conddens{Z}{X} \in  \arg\max_{\conddens{Z}{X}} & \ \MI{\augm{}}{Z} \label{eq:appx_min_risk_aug}   \\
&\ \ \subject \quad \forall\ d \in \domspace, \ \supp{\conddens{Z}{d}} = \supp{\dens{Z}}  \label{eq:appx_support_match_aug} 
\end{align}
is optimal for idealized domain generalization.


\end{proposition}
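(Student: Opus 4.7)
The plan is to reduce the proposition to \cref{thm:main}: it suffices to show that any encoder $\conddens{Z^*}{X}$ which maximizes $\MI{A}{Z}$ subject to the support-match constraint also minimizes $\risk{Y}{Z}$ subject to the same constraint. I would first compute the value of the constrained supremum and exhibit a feasible maximizer, then use the equality case of the data-processing inequality (DPI) together with the Bayes-preserving property of $A$ to show that any feasible maximizer achieves $\risk{Y}{Z^*} = \risk{Y}{X}$.

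For the first step I would mirror the construction in \cref{lem:exists}. Pick any injection $\phi : \distspace^*(A \cond X) \to \repspace$, which exists by \cref{asmp:aug_cardinality}, and set $Z^\circ = \phi(\conddens{A}{X})$. Knowing $Z^\circ$ pins down $\conddens{A}{X}$ by injectivity, so $\conddens{A}{Z^\circ}$ coincides with $\conddens{A}{X}$ and $\MI{A}{Z^\circ} = \MI{A}{X}$. The domain-agnostic assumption \cref{asmp:domain_cover_aug} guarantees that every augmentation conditional is realized on every domain, so $\supp{\conddens{Z^\circ}{d}} = \image{\phi}{\distspace^*(A \cond X)} = \supp{\dens{Z^\circ}}$ for all $d \in \domspace$, i.e., $Z^\circ$ is feasible. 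Since the DPI gives $\MI{A}{Z} \leq \MI{A}{X}$ for any encoder, and \cref{asmp:finite_aug_entropy} ensures all mutual informations are finite, the constrained supremum is exactly $\MI{A}{X}$ and is attained.

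For the second step, let $\conddens{Z^*}{X}$ be any feasible maximizer, so $\MI{A}{Z^*} = \MI{A}{X}$. Since $Z^* - X - A$ is a Markov chain by construction of the encoder, the equality case of DPI yields the reversed Markov chain $A - Z^* - X$, i.e., $A \indep X \cond Z^*$. Combined with $A - X - Z^*$, this forces $\conddens{A}{x} = \conddens{A}{z^*}$ on the joint support, so for every $z^* \in \supp{\dens{Z^*}}$ the augmentation conditional $\conddens{A}{x}$ is constant across $x \in \supp{\conddens{X}{z^*}}$. By the Bayes-preserving property (\cref{asmp:label_preserve_aug}), $\bayesfx$ is then constant on each such set, which by \cref{lem:CMI} gives $\risk{Y}{Z^*} = \risk{Y}{X}$. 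Since \cref{lem:exists} shows that $\risk{Y}{X}$ is the minimum of $\risk{Y}{Z}$ over feasible encoders, $Z^*$ attains this minimum; combined with feasibility, \cref{thm:main} then concludes that $\conddens{Z^*}{X}$ is optimal for IDG.

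The main obstacle I anticipate is the precise use of the DPI equality condition, i.e., translating the global information-theoretic equality $\MI{A}{Z^*} = \MI{A}{X}$ into the pointwise statement that augmentation conditionals agree across the level sets $\supp{\conddens{X}{z^*}}$, while carefully handling the possibly stochastic encoder $\conddens{Z^*}{X}$. Once this bridge is in place, the rest is essentially a chain of invocations of \cref{asmp:label_preserve_aug}, \cref{lem:CMI}, \cref{lem:exists}, and \cref{thm:main} already established for the labeled case.
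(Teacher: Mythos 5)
Your proposal is correct and follows essentially the same high-level structure as the paper's proof: reduce to \cref{thm:main}, establish that the constrained supremum of $\MI{A}{Z}$ equals $\MI{A}{X}$ and is attained, then show that any maximizer achieves $\risk{Y}{Z^*} = \risk{Y}{X}$. You differ in two technical bridges. First, where you pass from $\MI{A}{Z^*}=\MI{A}{X}$ to the pointwise identity $\conddens{A}{z^*}=\conddens{A}{x}$ via the equality case of the mutual-information DPI (yielding the reversed Markov chain $A-Z^*-X$), the paper instead notes that $\CH{A}{Z}$ is the Bayes risk under log-loss and invokes \cref{lem:CMI} with $A$ in place of $Y$, exploiting strict properness of log-loss; these routes are equivalent, since saturating the DPI is the same as the conditional independence you derive, but yours is arguably the more standard information-theoretic phrasing. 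Second, at the last step you invoke \cref{lem:CMI} directly to go from the constancy of $\bayesfx$ on level sets to $\risk{Y}{Z^*}=\risk{Y}{X}$; strictly speaking \cref{lem:CMI}'s hypothesis quantifies over all $\bayesfz \in \bayesfamilyz$, so you should first explicitly exhibit a well-defined $f_Z(z)\defeq\bayesfx(x_z)$ for an arbitrary $x_z\in\supp{\conddens{X}{z}}$ (well-defined by constancy on level sets), show $\riskq{Y}{Z^*}{f_Z}=\risk{Y}{X}$ by the substitution computation, and conclude by DPI. The paper does exactly this via an explicit maximal-invariant construction $\bayesfz = g\circ M$; your route is a compression of the same idea, but the loose invocation of the lemma should be unpacked. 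Aside from that, your existence construction of $Z^\circ=\phi(\conddens{A}{X})$ is correct and parallels the paper's reuse of \cref{lem:exists} with $A$ in place of $Y$.
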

\begin{proof}
The support match constraint \cref{eq:appx_support_match_aug} is equivalent to the support match constraint \cref{eq:appx_support_match}.
Thus, \cref{lem:exists,thm:main} state that we only need to prove that maximizing the mutual information of $\augm{}$ and $Z$ under the support constraint implies that 
\begin{equation}
    \risk{Y}{Z} = \risk{Y}{X}.
\end{equation}
We will prove this by constructing an optimal predictor $\bayesfz$.

Since $\H{\augm{}} < \infty$ (\cref{asmp:finite_aug_entropy}) we have that
\begin{equation}
    \arg\max_{\conddens{Z}{X}} \MI{\augm{}}{Z} = \arg\min_{\conddens{Z}{X}} \H{\augm{} \cond Z}.
\end{equation}
Note the fact that the conditional entropy is the Bayes risk under the log-loss \citep{gneiting2007strictly}, i.e., $\H{\augm{} \cond Z} = \risk{\augm{}}{Z}$. By construction, $\augm{}$ satisfies covariate shift w.r.t. $X$ (thus Bayes invariant) since $A-X-D$ forms a Markov chain. 
Together with \cref{asmp:probabilityspaces,asmp:finite_aug_entropy,asmp:domain_cover_aug,asmp:aug_cardinality}, it means that the optimization problem in \cref{eq:appx_min_risk_aug,eq:appx_support_match_aug} satisfies the assumptions of \cref{lem:exists}, with $A$ in place of $Y$. 
Thus, an optimal encoder satisfies $\risk{\augm{}}{Z} = \risk{\augm{}}{X}$, which leads to 
\begin{equation}\label{eq:prop_ssl:same_ent}
    \CH{\augm{}}{Z} = \CH{\augm{}}{X}.
\end{equation}
By \cref{asmp:finite_aug_entropy}, we can invoke \cref{lem:CMI} with the fact that $\augm{} - X - Z$ forms a Markove chain to show that for all $(x, z) \in \supp{\dens{X, Z}}$
\begin{equation}\label{eq:same_pTz_pTx}
    \conddens{\augm{}}{z} = \conddens{\augm{}}{x},
\end{equation}
as the conditional distributions are the Bayes optimal predictors due to strict properness of log-loss.

Now, define the following equivalence relation on $\inputspace$,
\begin{equation}\label{eq:min_IZT_equiv}
    x \erel x' \quad \iff \quad \conddens{\augm{}}{x} = \conddens{\augm{}}{x'}.
\end{equation}
Because the number of equivalence classes under $\erel$ is countable, there exists a maximal invariant $M : \inputspace \to \N$ from $\inputspace$ to the natural numbers \citep[for our definition of a maximal invariant see Definition 2,][]{dubois2021lossy}. By \cref{asmp:label_preserve_aug}, $\bayesfx$ is invariant on the equivalence classes $\eclass{x} \defeq \{x' \in \inputspace \cond x' \erel x \}$ for all $x \in \inputspace$. Thus, there exists a function $g : \N \to \mathcal{A}$ such that $\bayesfx = g \circ M$ \citep[Lemma 5,][]{dubois2021lossy}.
Given $z \in \supp{\dens{Z}}$, we construct $\bayesfz$ in the following way. Let $x_z \in \supp{\conddens{X}{z}}$ be any input point that could have led to this representation $z$ and define
\begin{equation}\label{eq:appx:bayes_z_construct}
    \bayesfz(z) = g(M(x_z)).
\end{equation}
By \cref{eq:same_pTz_pTx} we are guaranteed that all $x \in \supp{\conddens{X}{z}}$ share the same value for $\bayesfx$ since they are in the same equivalence class. Thus, by the definition of $M$ we have that
\begin{equation}\label{eq:appx:M_x_z}
    M(x_Z) = M(X) \quad \text{for} \quad (X, Z) \sim \dens{X,Z}.
\end{equation}
Therefore,
\begin{align}
    \riskq{Y}{Z}{\bayesfz} 
    &= \E{\dens{Y,Z}}{\ell(Y, \bayesfz(Z)} &\\
    &= \E{\conddens{Y}{X}\dens{X,Z}}{\ell(Y, \bayesfz(Z)} &\\
    &= \E{\conddens{Y}{X}\dens{X,Z}}{\ell(Y, g(M(x_Z)))} & \text{\cref{eq:appx:bayes_z_construct}}\\
    &= \E{\conddens{Y}{X}\dens{X,Z}}{\ell(Y, g(M(X)))} & \text{\cref{eq:appx:M_x_z}}\\
    &= \E{\conddens{Y}{X}\dens{X,Z}}{\ell(Y, \bayesfx(X))} = \risk{Y}{X}.
\end{align}

\end{proof}

\clearpage
\newpage
\section{Practical objectives}
\label{appcs:prac_objectives}

\Cref{prop:appx:ssl_aug} provides an objective to obtain the desired optimal representations, compared to \cref{thm:main} it is more practical in that it does not require \editarxiv{direct access to the labels and in that it can use augmentations under appropriate assumptions}.
There are nevertheless multiple remaining issues for deriving objectives that can be trained with in practice. 
Specifically, 
\begin{inlinelist}
\item the support constraint is hard to satisfy in practice;
\item mutual information $\MI{\augm{}}{Z}$ is hard to estimate from samples \citep{poole2019variational};
\item the objective is constrained which is harder to optimize.
\end{inlinelist}
We will now show different objectives and variational bounds of them that do not suffer from these issues, and could still recover the desired encoders in their optima.
In contrast to the proofs of main theoretical results (previous section), here the derivations will be less formal.

\ifarxiv
As we have seen in \Cref{prop:appx:ssl_aug}, the optimal representation achieves $\MI{\augm{}}{Z} = \MI{\augm{}}{X}$. 
In the following, we will rewrite the objective as the constrained optimization:
\begin{align} \label{eq:appcs:reversing_constrained}
\conddens{Z}{X} \in  \arg\min_{\conddens{Z}{X}} & \bottleneck{}    \\
&\ \ \subject \quad \editarxiv{\MI{\augm{}}{Z} = \MI{\augm{}}{X}} \label{eq:appcs:reversing_constrained:constraint}
\end{align}
\else
As we have seen in \Cref{prop:appx:ssl_aug}, the optimal representation achieves $\MI{\augm{}}{Z} = \MI{\augm{}}{X}$, and thus $\CH{\augm{}}{Z} = \CH{\augm{}}{X}$ (\cref{eq:prop_ssl:same_ent}). 
In the following, we will rewrite the objective as the constrained optimization:
\begin{align} \label{eq:appcs:reversing_constrained}
\conddens{Z}{X} \in  \arg\min_{\conddens{Z}{X}} & \bottleneck{}    \\
&\ \ \subject \quad \CH{\augm{}}{Z} = \CH{\augm{}}{X}  \label{eq:appcs:reversing_constrained:constraint}
\end{align}
\fi
where we introduce the \emph{domain bottleneck} $\bottleneck{}$ as the objective for enforcing support match (which we denote as $\bottleneckmini{}$ in the main body for simplicity).
The requirement on the domain bottleneck objective is that minimizing \cref{eq:appcs:reversing_constrained} under \cref{eq:appcs:reversing_constrained:constraint} implies that the support match constraint holds (and can be achieved by some encoder), which leads to optimal representations for IDG. Different domain bottlenecks will be derived later this section.
We can then use Lagrangian relaxation to get the following unconstrained objectives.
\ifarxiv
\begin{equation}
\arg\min_{\conddens{Z}{X}} \quad  \editarxiv{-\MI{\augm{}}{Z}} + \lambda \bottleneck{}
\end{equation}
The first term can be easily optimized using variational bounds on MI.
Throughout the paper, we will use a contrastive variational \editarxiv{lower} bound which is based on InfoNCE \citep{oord2018info_nce}.
\editarxiv{Namely, let $X$ be the input sample and $A$ be the `positive' augmentation sampled from $\conddens{A}{X}$.
We then obtain $n$ `negative' augmentations $\set{\augm{}^-_i}_{i=1}^n$ by first independently sampling $\set{X^-_i}_{i=1}^n$ from the marginal $\dens{X}$ and then sampling $\augm{}^-_i$ from $\conddens{\augm{}}{X^-_i}$. It is easy to see that the negatives $\augm{}^-_i$ follow the marginal $\dens{\augm{}}$.
We construct $\mathbf{\augm{}} \defeq \set{\augm{},\augm{}^-_1,\dots,\augm{}^-_n}$. } 
Let $Z$ be the representation of $X$ by passing it through the encoder $\paramencoder \defeq \conddens{Z}{X}$ parameterized by $\varphi$ and $\paramcritic$ the critic function parametrized by $\psi$ used to score which $\augm{}' \in \mathbf{\augm{}}$ is the positive augmentation.
\editarxiv{Then we have the following variational lower bound \citep{poole2019variational}:
\begin{equation}\label{eq:appcs:contrastive_variational}
\MI{\augm{}}{Z} 
\geq \log(n+1) + \E{\dens{\mathbf{\augm{}},X, Z}}{ 
\log \paraminfonce
}
\end{equation}
In the case of unconstrained variational families $\paramcritic,\paramencoder$ and infinite samples ($n \to \infty$), the above variational bound recovers $\MI{\augm{}}{Z}$ up to a constant (see \citet{oord2018info_nce,dubois2021lossy}).}
\else
\begin{equation}
\arg\min_{\conddens{Z}{X}} \quad  \CH{\augm{}}{Z} + \lambda \bottleneck{}
\end{equation}
The first term can be easily optimized using variational bounds on MI.
Throughout the paper, we will use a contrastive variational upper bound which is based on InfoNCE \citep{oord2018info_nce}.
Namely, let  $\mathbf{X} \defeq \set{X,X^-_1,\dots,X^-_n}$ be a sequence of samples where each negative $X^-_i$ is independently sampled from the marginal $\dens{X}$.
Now let $\mathbf{\augm{}} \defeq \set{\augm{}^+,\augm{}^-_1,\dots,\augm{}^-_n}$ be the sequence of augmented examples that come from independently augmenting each $X \in \mathbf{X}$ using the augmenter $\conddens{\augm{}}{X}$.
In particular, the augmented positive $\augm{}^+$ has the marginal $\conddens{\augm{}}{X}$ while the negatives $\augm{}^-_i$ follow the marginal $\dens{\augm{}}$. Let $Z$ be the representation of $X$ by passing it through the encoder $\paramencoder \defeq \conddens{Z}{X}$ parameterized by $\varphi$ and $\paramcritic$ the critic function parametrized by $\psi$ used to score which $\augm{}' \in \mathbf{\augm{}}$ is the positive augmentation.
Then we have a variational distribution of $\conddens{\augm{}}{Z}$:
\begin{equation}\label{eq:appcs:vardist_A_cond_Z}
    \paramtieclf(\augm{} \cond Z) \defeq    \frac{\exp{\paramcritic(\augm{},Z)}}{\sum_{\augm{}'\in \mathbf{\augm{}} }  \exp{\paramcritic(\augm{}',Z)}}
\end{equation}
which leads to the following variational bound:
\begin{equation}\label{eq:appcs:contrastive_variational}
\CH{\augm{}}{Z} 
= \E{\dens{A, Z}}{- \log \conddens{\augm{}}{Z}(\augm{} \cond Z)}
\leq \E{\dens{\mathbf{\augm{}},X, Z}}{ 
- \log \paramtieclf(\posi{\augm{}} \cond Z)
}
\end{equation}
with equality if the variational families $\paramcritic,\paramencoder$ are unconstrained and and we use infinite samples ($n \to \infty$).
\fi
Typically the critic is separable, \ie, $\paramcritic(A,Z) \defeq g_{\psi}(A)^T h_{\psi}(Z)$.
As discussed in the main body, it can be tied with the encoder $\paramencoder$ when $\augspace{} = \inputspace{}$.

In the following we focus on the second term $\bottleneck{}$ and discuss several choices.

\editarxiv{Throughout this section, the function $M: \inputspace \to \mathbb{N}$ is the maximal invariant defined in \cref{prop:appx:ssl_aug} via the equivalence relation defined in \cref{eq:min_IZT_equiv}.}

\subsection{Mutual information bottleneck $\bottleneck{} = \MI{Z}{X}$}
\label{appcs:prac_objectives:mi}
The first bottleneck we consider is so called mutual information (MI) bottleneck $\bottleneck{} = \MI{Z}{X}$, which was introduced by \citet{tishby2000information} to achieve a tradeoff between the predictive power and the complexity of representations. 
Intuitively, it tries to remove all information of $Z$ that is not needed for maximizing $\MI{Z}{A}$. 
In particular, using the fact that $Z-X-D$ forms a Markov chain and the chain rule of MI, we have $\MI{Z}{X} = \MI{Z}{X,D} = \MI{Z}{D} + \MI{Z}{X \cond D}$.
Thus, it not only minimizes $\MI{Z}{D}$, \ie, matches the representations' distribution \emph{across} domains, but also minimizes $\MI{Z}{X \cond D}$, \ie, matches the representations' distribution \textit{inside} domains.

\paragraph{Why}
The key to show is that minimizing \cref{eq:appcs:reversing_constrained}, \ie, $\arg\min_{\conddens{Z}{X}} \MI{Z}{X}$ under $\MI{\augm{}}{Z} = \MI{\augm{}}{X}$, implies the support match constraint.
This can be seen as a specific subcase of \citepos{dubois2021lossy} Corollary 15 with $A$ in place of $Y$ and $M(X)$ induced by $\conddens{A}{X}$ as in the proof of \cref{prop:appx:ssl_aug}.
From the corollary, we know that $\min_{\conddens{Z}{X}} \MI{Z}{X} = \H{M(X)}$ which can be achieved by any $Z$ s.t. $\conddens{Z}{x} = \conddens{Z}{x'} \iff M(x) = M(x')$.
With the assumption of domain-agnostic augmentations (\cref{asmp:domain_cover_aug}), we have that the set of maximal invariant $\{M(x) \cond x \in \supp{\conddens{X}{d}}\}$ is invariant across domains. 
Then we directly have $\supp{\conddens{Z}{d}} = \cup_{x \in \supp{\conddens{X}{d}}} \supp{\conddens{Z}{x}} = \cup_{x \in \supp{\dens{X}}} \supp{\conddens{Z}{x}} = \supp{\dens{Z}}$, where we use the fact that $x$ within the same equivalence class has the the same $\conddens{Z}{x}$.

\paragraph{How}
Essentially, we can use any variational upper bound of mutual information. We consider the one used by Variational Information Bottelenck \citep{alemi2016vib}, i.e., 
\begin{align}
    \MI{Z}{X} 
    &= \E{\dens{X, Z}}{\log \frac{\paramencoder(Z \cond X)}{\dens{Z}(Z)}} \\
    &= \E{\dens{X, Z}}{\log \frac{\paramencoder(Z \cond X)}{\paramvardist(Z)}} - \KL{\dens{Z}(Z), \paramvardist(Z)}\\
    &\leq \E{\dens{X, Z}}{\log \frac{\paramencoder(Z \cond X)}{\paramvardist(Z)}} \\
    &= \E{\dens{X}}{\KL{\paramencoder(Z \cond X),\paramvardist(Z)}}
\end{align}
where a variational distribution $\paramvardist$ is used to approximate $\dens{Z}$ and is jointly optimized with $\paramencoder$ to minimize the bound. The approximation gap of the bound is $\KL{\dens{Z}(Z), \paramvardist(Z)}$.
\ifarxiv
\editarxiv{Ignoring the constant, the final loss becomes}
\else
Then the final loss becomes
\fi
\ifarxiv
\begin{equation}\label{eq:appx:loss_MI} 
\mathcal{L}_{\text{MI}}(\psi, \varphi, \theta) \defeq  \E{\dens{X, \mathbf{ \augm{}}, Z} }{-\log
\paraminfonce + \lambda \KL{\paramencoder(Z \cond X),\paramvardist(Z)}}
\end{equation}
\else
\begin{equation}\label{eq:appx:loss_MI} 
\mathcal{L}_{\text{MI}}(\psi, \varphi, \theta) \defeq  \E{\dens{X, \mathbf{ \augm{}}, Z} }{-\log
\frac{\exp{\paramcritic(\augm{}^+,Z)}}{\sum_{\augm{}'\in \mathbf{\augm{}} }  \exp{\paramcritic(\augm{}',Z)}} + \lambda \KL{\paramencoder(Z \cond X),\paramvardist(Z)}}
\end{equation}
\fi
which recovers the optimal encoder in the case of unconstrained variational families for $\paramencoder, \paramvardist, \paramcritic$, infinite samples $n \to \infty$, and any $\lambda > 1$ \cite{dubois2021lossy}.

\subsection{Entropy bottleneck $\bottleneck{} = \H{Z}$}
\label{appcs:prac_objectives:ent}
The entropy (Ent) bottleneck introduced in the main body is a special case of the MI bottleneck, where the encoder is a deterministic mapping, \ie, $\paramencoder(Z \cond x)$ is a dirac delta function for all $x \in \mathcal{X}$ and we denote  by $\detencoder(x)$ the deterministic encoder s.t. $\paramencoder(\detencoder(x) \cond x) = 1$.

\paragraph{Why}
In the deterministic case, the MI bottleneck becomes the entropy bottleneck because $\MI{X}{Z} = \H{Z} - \CH{Z}{X} = \H{Z}$, where we use the fact that $\CH{Z}{X} = 0$.
Importantly, considering only deterministic encoders does not constrain our ability to learning optimal encoders.
Indeed, just as with the MI bottleneck optimizing the objective with the entropy bottleneck under $\MI{\augm{}}{Z} = \MI{\augm{}}{X}$ will recover encoders s.t. $\detencoder(x) = \detencoder(x') \iff M(x) = M(x')$, which also satisfies the support match constraint as discussed before.

\paragraph{How}
Using the same derivation as the MI bottleneck, we can derive the variational upper bound on entropy 
\begin{align}
    \H{Z}\leq \E{\dens{Z}}{- \log \paramvardist(Z)}
\end{align}
which is the standard variational bound used in neural compression \citep{balle2016end,theis2017lossy}.
Putting all together, we have
\ifarxiv
\begin{equation}\label{eq:appx:loss_ent} 
\Leb(\psi,\theta,\varphi) \defeq 
\E{\dens{X, \mathbf{\augm{}}, Z} }{-\log
\paraminfonce - \lambda \log \paramvardist(Z)}
\end{equation}
\else
\begin{equation}\label{eq:appx:loss_ent} 
\Leb(\psi,\theta,\varphi) \defeq 
\E{\dens{X, \mathbf{\augm{}}, Z} }{-\log
\frac{\exp{\paramcritic(\augm{}^+,Z)}}{\sum_{\augm{}'\in \mathbf{\augm{}} }  \exp{\paramcritic(\augm{}',Z)}} - \lambda \log \paramvardist(Z)}
\end{equation}
\fi
which also recovers the optimal encoder with unconstrained variational families, infinite samples, and $\lambda > 1$ as with the MI bottleneck. The detialed algorithm is provided in \cref{alg:loss_ent}. Note that the discreteness of $Z$ could lead to difficulty of gradient-based optimization, and we follow \citet{balle2016end} to add uniform noise to $Z$ as a differentiable substitute for rounding during training. In our experiments, we will mostly use the Ent bottleneck instead of the MI bottleneck to avoid introducing stochastic encoders.


\ifarxiv
\begin{center}
\begin{minipage}{0.5\textwidth}
\centering
\begin{algorithm}[H]
\caption{Ent objective}
\label{alg:loss_ent}
\begin{algorithmic}[1]
\Require $\detencoder{}, \paramcritic, \paramvardist, X, n$
\State $Z \leftarrow \detencoder(X)$
\State $\editarxiv{A} \leftarrow \algsample{\conddens{A}{X}}$
\State $\set{(\nega{X}_i, \nega{A}_i)}_{i=1}^n \xleftarrow[]{\text{i.i.d.}} \algsample{\dens{X, A}}$
\State $\mathbf{A} \leftarrow \set{\editarxiv{A}} \cup \set{\nega{A}_i}_{i=1}^n$
\State $ \Laug \leftarrow -\log \paraminfonce$ \Comment{\editarxiv{$-\MI{A}{Z}$}}
\State $ \Lsupp \leftarrow -\log \paramvardist(Z)$ \Comment{$\H{Z}$}
\State \Return $\Leb =  \Laug + \lambda  \Lsupp$
\end{algorithmic}
\end{algorithm}
\end{minipage}
\end{center}
\else
\begin{center}
\begin{minipage}{0.5\textwidth}
\centering
\begin{algorithm}[H]
\caption{Ent objective}
\label{alg:loss_ent}
\begin{algorithmic}[1]
\Require $\detencoder{}, \paramcritic, \paramvardist, X, n$
\State $Z \leftarrow \detencoder(X)$
\State $\posi{A} \leftarrow \algsample{\conddens{A}{X}}$
\State $\set{(\nega{X}_i, \nega{A}_i)}_{i=1}^n \xleftarrow[]{\text{i.i.d.}} \algsample{\dens{X, A}}$
\State $\mathbf{A} \leftarrow \set{\posi{A}} \cup \set{\nega{A}_i}_{i=1}^n$
\State $ \Laug \leftarrow -\log \frac{\exp{\paramcritic(\posi{\augm{}},Z)}}{\sum_{\augm{}'\in \mathbf{\augm{}} }  \exp{\paramcritic(\augm{}',Z)}}$ \Comment{$\CH{A}{Z}$}
\State $ \Lsupp \leftarrow -\log \paramvardist(Z)$ \Comment{$\H{Z}$}
\State \Return $\Leb =  \Laug + \lambda  \Lsupp$
\end{algorithmic}
\end{algorithm}
\end{minipage}
\end{center}
\fi

\subsection{Contrastive adversarial domain bottleneck $\bottleneck{} = \MI{Z}{D}$}
\label{appcs:prac_objectives:cad}
The previous two bottlenecks require removing the information of $Z$ (about $X$) as much as possible, which seems to be unnecessary since our ultimate goal is to match the support of $Z$ across domains. Now we introduce a bottleneck $\bottleneck{} = \MI{Z}{D}$ which we only seek to remove the information of $Z$ about the domain $D$. This is very related to the work on invariant representation learning for domain generalization/adaptation \citep[e.g.,][]{ganin2016domain, li2018domain}. We derive a new variational bound called the contrastive adversarial domain (CAD) bottleneck that is more stable to train and leads to better empirical performance. For simplicity we consider the deterministic encoder $\detencoder(x)$ as with the main body.

\paragraph{Why}
Similar to the previous analysis, we aim to show that $\arg\min_{\conddens{Z}{X}} \MI{Z}{D}$ under $\MI{\augm{}}{Z} = \MI{\augm{}}{X}$ leads to the support match constraint.
Using \cref{eq:appx:M_x_z} we have $\MI{Z}{D} = \MI{Z,M(X_Z)}{D} = \MI{Z,M(X)}{D} = \MI{M(X)}{D} + \MI{Z}{D \cond M(X)}$ where the last equality uses the chain rule of mutual information.
Due to the non-negativity of (conditional) mutual information, we have that the minimum of $\MI{Z}{D}$ under $\MI{\augm{}}{Z} = \MI{\augm{}}{X}$ is $\MI{M(X)}{D}$. Then we show the minimum is achievable by constructing the same optimal encoder $\detencoder(X)$ as the Ent bottleneck which clearly satisfies $\MI{Z}{D \cond M(X)} = 0$.
It is then easy to show that the support match constraint has to hold when $\MI{Z}{D \cond M(X)} = 0$ by contrapositive. Indeed, suppose that the support constraint does not hold then it must be true that $\MI{Z}{D \cond M(X)} > 0$ and so the encoder cannot be optimal. 


\paragraph{How}
The typical way of minimizing $\MI{Z}{D}$ is to derive the variational bound as
\begin{align}
    \MI{Z}{D} 
    &= \H{D} - \CH{D}{Z} \\
    &= (const) - \E{\dens{D, Z}}{- \log \conddens{D}{Z}(D \cond Z)} \\
    &\geq (const) - \E{\dens{D, Z}}{- \log \paramdomclf(D \cond Z)}
\end{align}
where a variational distribution (or domain classifier) $\paramdomclf$ is used to approximate $\conddens{D}{Z}$ and jointly trained to \emph{maximize} the bound. This recovers the domain-adversarial training method as introduced in \citet{ganin2016domain}.
However, this has two potential issues: 1) it gives a \textit{lower} bound instead of the desired upper bound on $\MI{Z}{D}$; 2) it requires adversarial training which is not stable in practice \citep{goodfellow2016tutorial, kodali2017convergence}.


We propose the contrastive adversarial domain (CAD) bottleneck, which is based on the above explicit version but uses a variational distribution $\paramdomclf(D \cond Z)$ that is \emph{tied with other parts of the model}, thus no need to learn a domain classifier. 
\ifarxiv
\editarxiv{%
Suppose we have access to a set of inputs $\mathbf{X}$, we first introduce a contrastive variational distribution $\paramXclf(X\cond Z)$ of $\conddens{X}{Z}$ as}
\begin{equation}\label{eq:appcs:vardist_X_cond_Z}
    \paramXclf(X \cond Z) \defeq    \frac{\exp{\paramcriticX(X,Z)}}{\sum_{X'\in \mathbf{X} }  \exp{\paramcriticX(X',Z)}}
\end{equation}
where $\paramcriticX(X,Z) \defeq \detencoder(X)^T Z$ is tied with the encoder $\detencoder$.
\editarxiv{Note that $\paramXclf$ has support over $\mathbf{X}$, and equals $\conddens{X}{Z}$ when $\paramcriticX(X,Z) \propto \log \dens{X, Z}(X,Z)$ and $\mathbf{X}$ recovers $\inputspace$.
In practice, we use a variety of crude approximations. Our first crude approximation is that we use the minibatch of samples, \ie, the independently sampled $\set{X^-_i}_{i=1}^n$ as $\mathbf{X}$.} 
Now, since $\conddens{D}{Z}$ can be rewritten as $\E{\conddens{X}{Z}}{\conddens{D}{X}}$ using the fact that $D-X-Z$ forms a Markov chain, we obtain the following variational distribution:
\begin{equation}
    \paramXclf(D \cond Z) = \E{\paramXclf}{\conddens{D}{X}(D \cond X)}
\end{equation}
which recovers $\conddens{D}{Z}$ when $\paramXclf=\conddens{X}{Z}$.
\else
Specifically, as with \cref{eq:appcs:vardist_A_cond_Z}, we first introduce a contrastive variational distribution $\paramXclf(X\cond Z)$ of $\conddens{X}{Z}$ as:
\begin{equation}\label{eq:appcs:vardist_X_cond_Z}
    \paramXclf(X \cond Z) \defeq    \frac{\exp{\paramcriticX(X,Z)}}{\sum_{X'\in \mathbf{X} }  \exp{\paramcriticX(X',Z)}}
\end{equation}
where $\paramcriticX(X,Z) \defeq \detencoder(X)^T Z$ is tied with the encoder $\detencoder$. 
Since $\conddens{D}{Z}$ can be rewritten as $\E{\conddens{X}{Z}}{\conddens{D}{X}}$ using the fact that $D-X-Z$ forms a Markov chain, we obtain the following variational distribution:
\begin{equation}
    \paramXclf(D \cond Z) = \E{\paramXclf}{\conddens{D}{X}(D \cond X)}
\end{equation}
which recovers $\conddens{D}{Z}$ when $\paramXclf=\conddens{X}{Z}$.
This is the case where \editarxiv{$\paramcriticX(X,Z) \propto \log \dens{X, Z}(X,Z)$}and infinite samples $n \to \infty$. 
\fi
Note that $\conddens{D}{X}$ is still not available. \editarxiv{For our second crude approximation, we use a count estimate $\hatp_{\mathbf{D}, \mathbf{X}}$.}
In particular, we obtain a collection $\mathbf{D}$ by taking each $X' \in \mathbf{X}$ and independently sampling $D'$ from $\conddens{D}{X'}$ to get
\ifarxiv
\editarxiv{$\mathbf{D} \defeq \set{\nega{D}_i}_{i=1}^n$.
In other words, $\set{(\nega{D}_i, \nega{X}_i)}_{i=1}^n$ are all i.i.d.\ sampled from $\dens{D, X}$.} 
Then we use a count estimate
\editarxiv{%
\begin{align}\label{eq:appcs:count_est}
    \hatp_{\mathbf{D}, \mathbf{X}}(d \cond x) = 
    \frac{\sum_{i=1}^n{\indicate{\nega{X}_i=x, \nega{D}_i=d}}}{\sum_{i=1}^n{\indicate{\nega{X}_i=x}}}
\end{align}}
\else
$\mathbf{D} \defeq \set{D, \nega{D}_1, \dots, \nega{D}_n}$.
In other words, $(D,X)$ and $(\nega{D}_i, \nega{X}_i)$ for $i \in [n] \defeq \set{1, \dots, n}$ are all i.i.d.\ sampled from $\dens{D, X}$. 
Then we use a count estimate
\begin{align}\label{eq:appcs:count_est}
    \hatp_{\mathbf{D}, \mathbf{X}}(d \cond x) = 
    \frac{\indicate{X=x, D=d} + \sum_{i=1}^n{\indicate{\nega{X}_i=x, \nega{D}_i=d}}}{\indicate{X=x} + \sum_{i=1}^n{\indicate{\nega{X}_i=x}}}
\end{align}
\fi
which is an accurate estimate with infinite samples. 
This leads to our final variational distribution:
\begin{equation}\label{eq:appcs:vardist_D_cond_Z}
    \paramXDclf(D \cond Z) = \sum_{X' \in \mathbf{X}}\paramXclf(X' \cond Z)\hatp_{\mathbf{D}, \mathbf{X}}(D \cond X')
\end{equation}

Putting all together we get that the loss:
\ifarxiv
\begin{equation}
\label{eq:appcs:loss_cad_orig}
 \Lcab(\varphi,\psi) \defeq \E{\dens{\mathbf{D}, \mathbf{X}, \mathbf{\augm{}}. Z}}{-\log \paraminfonce + \lambda \log \pa{\sum_{X' \in \mathbf{X}}\paramXclf(X' \cond Z)\hatp_{\mathbf{D}, \mathbf{X}}(D \cond X')}}.
\end{equation}
In practice, $\hatp_{\mathbf{D}, \mathbf{X}}(D \cond X)$ is typically a dirac delta function since it is rare to have the same samples in a minibatch. Thus, in \cref{eq:appcs:vardist_D_cond_Z} we only need to sum $\paramXclf(X' \cond Z)$ over those associated with the same domain label $D$ as $X$, \ie, \editarxiv{$\mathbf{X}_{D} \defeq \set{\nega{X}_i \cond \nega{D}_i = D, i \in [n]}$} where $[n] \defeq \set{1, \dots, n}$. This leads to the simplified loss:
\begin{equation}\label{eq:appcs:loss_cad}
 \Lcab(\varphi,\psi) \defeq \E{\dens{\mathbf{D}, \mathbf{X}, \mathbf{\augm{}},Z}}{- \log \paraminfonce +
 \lambda \log \pa{\sum_{X' \in \mathbf{X}_{D}} \paramXclf(X' \cond Z)}}.
\end{equation}
\editarxiv{In practice, we find that the second term that minimizes the log probability leads to numerical instability. Intuitively, this could be seen by the exploding gradient of the function $\log (p)$ when $p \to 0$. We thus replace it with $-\log (1-p)$ which has the same optima. I.e. in practice we maximize the log of the probablity summed over $\mathbf{X}_{\neg D}\defeq \mathbf{X}\setminus \mathbf{X}_{D}$.}
This reduces \cref{eq:appcs:loss_cad} to \cref{eq:loss_cad} described in the main body with a detailed algorithm in \cref{alg:loss_cad}. Note that it is easy to generalize \cref{alg:loss_cad} to parallel computation within a batch of samples. Indeed, for \emph{each} sample in the batch, we can view all other samples in the batch as negatives and compute the loss efficiently in parallel.
\else
\begin{equation}\label{eq:appcs:loss_cad_orig}
 \Lcab(\varphi,\psi) \defeq \E{\dens{\mathbf{D}, \mathbf{X}, \mathbf{\augm{}}. Z}}{- \log \paramtieclf(\posi{\augm{}} \cond Z) +
 \lambda \log \pa{\sum_{X' \in \mathbf{X}}\paramXclf(X' \cond Z)\hatp_{\mathbf{D}, \mathbf{X}}(D \cond X')}}.
\end{equation}
In practice, $\hatp_{\mathbf{D}, \mathbf{X}}(D \cond X)$ is typically a dirac delta function since it is rare to have the same samples in a batch. Thus, in \cref{eq:appcs:vardist_D_cond_Z} we only need to sum $\paramXclf(X' \cond Z)$ over those associated with the same domain label $D$ as $X$, which reduces \cref{eq:appcs:loss_cad} to \cref{eq:loss_cad} described in the main body with a detailed algorithm in \cref{alg:loss_cad}. Note that it is easy to generalize \cref{alg:loss_cad} to parallel computation within a batch of samples. Indeed, for \emph{each} sample in the batch, we can view all other samples in the batch as negatives and compute the loss efficiently in parallel.
\fi

\subsection{Conditional CAD $\bottleneck{} = \MI{Z}{D \cond Y}$}
\label{appcs:prac_objectives:ccad}

The analysis of the CAD bottleneck also implies that we can minimize the conditional mutual information $\MI{Z}{D \cond M(X)}$ if we have access to $M(X)$.
However, since $M(X)$ is typically not available in practice, we consider the special case where $M(X)=Y$. In particular, this is the case where the labels are available and the supervised augmentations are used (see \cref{fig:aug:oracle}).
This reduces the bottleneck to $\bottleneck{} = \MI{Z}{D \cond Y}$ which is related to the conditional version of the domain-adversarial neural network \citep{li2018cdann}.
In practice, minimizing $\MI{Z}{D \cond Y}$ could be easier for optimization than $\MI{Z}{D}$, as it does not require to remove the information that $D$ has about $Y$. 
In the following, we derive the conditional CAD (\ccad) bottleneck using a similar idea as CAD. 

\paragraph{How}
In this case, we want to minimize
\begin{align}
    \MI{Z}{D \cond Y} 
    &= \CH{D}{Y} - \CH{D}{Z, Y} \\
    &= (const) - \CH{D}{Z, Y} \\
    &\geq (const) - \E{\dens{D,Z, Y}}{- \log q(D \cond Z, Y)}
\end{align}
where $q(D \cond Z, Y)$ is a variational distribution of $\conddens{D}{Z,Y}$. Similar to the unconditional case, we also aim to use a non-parametric approximation that is tied with other parts of the model, 
and we obtain it using the fact $\conddens{D}{Z, Y} = \E{\conddens{X}{Z, Y}}{\conddens{D}{X}}$.
\ifarxiv
Specifically, \editarxiv{let $Y$ be the label of input $X$ sampled from $\conddens{Y}{X}$ and $\mathbf{Y} \defeq \set{\nega{Y}_1, \dots, \nega{Y}_n}$} be the collection of labels obtained by independently sampling the label from $\conddens{Y}{X'}$ for each $X' \in \mathbf{X}$. 
We collect samples associated with the label $Y$, \ie, 
$\mathbf{X}_{Y} \defeq \editarxiv{\set{\nega{X}_i \cond \nega{Y}_i = Y, i \in [n]}}$ and obtain a variational distribution of $\conddens{X}{Z, Y}$:
\begin{equation}\label{eq:appcs:vardist_X_cond_Z}
    \paramXYclf(X \cond Z, Y) \defeq    \frac{\exp{\paramcriticX(X,Z)}}{\sum_{X'\in \mathbf{X}_Y }  \exp{\paramcriticX(X',Z)}}
\end{equation}
\else
Specifically, let $\mathbf{Y} \defeq \set{Y, \nega{Y}_1, \dots, \nega{Y}_n}$ be the collection of labels obtained by independently sampling the label from $\conddens{Y}{X'}$ for each $X' \in \mathbf{X}$. 
We collect samples associated with the label $Y$, \ie, 
$\mathbf{X}_{Y} \defeq \set{X} \cup \set{\nega{X}_i \cond \nega{Y}_i = Y, i \in [n]}$ and obtain a variational distribution of $\conddens{X}{Z, Y}$:
\begin{equation}\label{eq:appcs:vardist_X_cond_Z}
    \paramXYclf(X \cond Z, Y) \defeq    \frac{\exp{\paramcriticX(X,Z)}}{\sum_{X'\in \mathbf{X}_Y }  \exp{\paramcriticX(X',Z)}}
\end{equation}
\fi
where we use the same critic $\paramcriticX(X,Z) \defeq \detencoder(X)^T Z$ that is tied with the encoder $\detencoder$ as before, but only take softmax over those samples with the same label $Y$. For the term $\conddens{D}{X}$, we use the same count estimate $\hatp_{\mathbf{D}, \mathbf{X}}$ in \cref{eq:appcs:count_est}. Then we obtain the variational distribution of $\conddens{D}{Z, Y}$:
\begin{align}\label{eq:appcs:vardist_D_cond_Z_Y}
    \paramXDYclf(D \cond Z, Y) 
    = \sum_{X' \in \mathbf{X}_Y}\paramXYclf(X' \cond Z, Y)\hatp_{\mathbf{D}, \mathbf{X}}(D \cond X')
\end{align}
Putting all together we get that the final loss:
\ifarxiv
\begin{equation}\label{eq:appcs:loss_c2ad}
 \Lccab(\varphi,\psi) \defeq \E{\dens{\mathbf{D}, \mathbf{X}, \mathbf{\augm{}},\mathbf{Y}, Z}}{- \log \editarxiv{\paraminfonce} +
 \lambda \log \pa{\sum_{X' \in \mathbf{X}_Y}\paramXYclf(X' \cond Z, Y)\hatp_{\mathbf{D}, \mathbf{X}}(D \cond X')}}.
\end{equation}
Again, since in practice $\hatp_{\mathbf{D}, \mathbf{X}}(D \cond X)$ is typically a dirac delta function, the summation in \cref{eq:appcs:vardist_D_cond_Z_Y} can be done only over those associated with the same label $Y$ \emph{and} the same domain label $D$ as $X$, \editarxiv{\ie, $\mathbf{X}_{Y, D} \defeq \set{\nega{X}_i \cond \nega{Y}_i = Y, \nega{D}_i = D,i \in [n]}$. 
Similarly, instead of minimizing the log of the probability summed over $\mathbf{X}_{Y, D}$, we maximize the log of the probability summed over $\mathbf{X}_{Y, \neg D} \defeq \mathbf{X}_{Y} \setminus \mathbf{X}_{Y, D}=\set{\nega{X}_i \cond \nega{Y}_i = Y, \nega{D}_i \neq D,i \in [n]}$.}
Finally we obtaine the simplified loss:
\begin{equation}\label{eq:appcs:loss_c2ad_simple}
 \Lccab(\varphi,\psi) \defeq \E{\dens{\mathbf{D}, \mathbf{X}, \mathbf{\augm{}},\mathbf{Y}, Z}}{- \log \editarxiv{\paraminfonce -
 \lambda \log \pa{\sum_{X' \in \mathbf{X}_{Y, \neg D}}\paramXYclf(X' \cond Z, Y)}}}.
\end{equation}
A detailed algorithm is in \cref{alg:loss_ccad}.

\begin{center}
\begin{minipage}{0.55\textwidth}
 \centering
\begin{algorithm}[H]
\small
\caption{conditional CAD (\ccad) objective}
\label{alg:loss_ccad}
\begin{algorithmic}[1]
\Require $\detencoder{}, \paramcritic, D, X, Y,n$
\State  $Z \leftarrow \detencoder(X)$
\State    $\editarxiv{A} \leftarrow \algsample{\conddens{A}{X}}$
\State   $\set{(\nega{D}_i, \nega{X}_i, \nega{A}_i, \nega{Y}_i)}_{i=1}^n \xleftarrow[]{\text{i.i.d.}} \algsample{\dens{D, X, A, Y}}$
\State   $\mathbf{X}, \mathbf{A} \leftarrow \editarxiv{\set{\nega{X}_i}_{i=1}^n}, \set{\editarxiv{A}} \cup \set{\nega{A}_i}_{i=1}^n$
\State   $\mathbf{X}_{Y} \leftarrow \editarxiv{\set{\nega{X}_i \cond \nega{Y}_i = Y, i \in [n]}}$
\State   \editarxiv{$\mathbf{X}_{Y, \neg D} \leftarrow \set{\nega{X}_i \cond \nega{Y}_i = Y,  \nega{D}_i \neq D,i \in [n]}$}
\State   $ \Laug \leftarrow -\log \editarxiv{\paraminfonce}$   \Comment{\editarxiv{$-\MI{A}{Z}$}}
\State   $ \Lsupp \leftarrow \editarxiv{ - \log \frac{\sum_{X'\in \mathbf{X}_{Y,\neg D}}\exp{\detencoder{}(X')^TZ}}{\sum_{X''\in \mathbf{X}_{Y} }  \exp{\detencoder{}(X'')^TZ}}}$ \Comment{$\MI{Z}{D \cond Y}$}
\State  \Return   $\Lccab =  \Laug + \lambda  \Lsupp$
\end{algorithmic}
\end{algorithm}
\end{minipage}
\end{center}
\else
\begin{equation}\label{eq:appcs:loss_c2ad}
 \Lccab(\varphi,\psi) \defeq \E{\dens{\mathbf{D}, \mathbf{X}, \mathbf{\augm{}},\mathbf{Y}, Z}}{- \log \paramtieclf(\posi{\augm{}} \cond Z) +
 \lambda \log \pa{\sum_{X' \in \mathbf{X}_Y}\paramXYclf(X' \cond Z, Y)\hatp_{\mathbf{D}, \mathbf{X}}(D \cond X')}}.
\end{equation}
Again, since in practice $\hatp_{\mathbf{D}, \mathbf{X}}(D \cond X)$ is typically a dirac delta function, the summation in \cref{eq:appcs:vardist_D_cond_Z_Y} can be done only over those associated with the same label $Y$ \emph{and} the same domain label $D$ as $X$. In particular, we collect samples $\mathbf{X}_{Y, D} \defeq \set{X} \cup \set{\nega{X}_i \cond \nega{Y}_i = Y, \nega{D}_i = D,i \in [n]}$ and obtained the simplified loss:
\begin{equation}\label{eq:appcs:loss_c2ad_simple}
 \Lccab(\varphi,\psi) \defeq \E{\dens{\mathbf{D}, \mathbf{X}, \mathbf{\augm{}},\mathbf{Y}, Z}}{- \log \paramtieclf(\posi{\augm{}} \cond Z) +
 \lambda \log \pa{\sum_{X' \in \mathbf{X}_{Y, D}}\paramXYclf(X' \cond Z, Y)}}.
\end{equation}
A detailed algorithm is in \cref{alg:loss_ccad}.

\begin{center}
\begin{minipage}{0.52\textwidth}
 \centering
\begin{algorithm}[H]
\small
\caption{conditional CAD (\ccad) objective}
\label{alg:loss_ccad}
\begin{algorithmic}[1]
\Require $\detencoder{}, \paramcritic, D, X, Y,n$
\State  $Z \leftarrow \detencoder(X)$
\State    $\posi{A} \leftarrow \algsample{\conddens{A}{X}}$
\State   $\set{(\nega{D}_i, \nega{X}_i, \nega{A}_i, \nega{Y}_i)}_{i=1}^n \xleftarrow[]{\text{i.i.d.}} \algsample{\dens{D, X, A, Y}}$
\State   $\mathbf{X}, \mathbf{A} \leftarrow \set{X} \cup \set{\nega{X}_i}_{i=1}^n, \set{\posi{A}} \cup \set{\nega{A}_i}_{i=1}^n$
\State   $\mathbf{X}_{Y} \leftarrow \set{X} \cup \set{\nega{X}_i \cond \nega{Y}_i = Y, i \in [n]}$
\State   $\mathbf{X}_{Y, D} \leftarrow \set{X} \cup \set{\nega{X}_i \cond \nega{Y}_i = Y,  \nega{D}_i = D,i \in [n]}$
\State   $ \Laug \leftarrow -\log \frac{\exp{\paramcritic(\posi{\augm{}},Z)}}{\sum_{\augm{}'\in \mathbf{\augm{}} }  \exp{\paramcritic(\augm{}',Z)}}$   \Comment{$\CH{A}{Z}$}
\State   $ \Lsupp \leftarrow \log \frac{\sum_{X'\in \mathbf{X}_{Y,D}}\exp{\detencoder{}(X')^TZ}}{\sum_{X''\in \mathbf{X}_{Y} }  \exp{\detencoder{}(X'')^TZ}}$  \Comment{$\MI{Z}{D \cond Y}$}
\State  \Return   $\Lccab =  \Laug + \lambda  \Lsupp$
\end{algorithmic}
\end{algorithm}
\end{minipage}
\end{center}
\fi

\clearpage
\newpage
\section{Extended Related Work}
\ifarxiv
\editarxiv{
\textbf{Provably optimal representations.}
Many previous work have theoretically studied advantages of representations in various two-stage settings (representation learning followed by standard training of predictors) by bounding downstream performance \citep[\eg,][]{ben2007analysis,shamir_learning_2010,saunshi_theoretical_2019}.
As learning theoretical bounds can be loose, it is hard to know whether they give the right insights into the problem. 
Our work instead proves the properties of \textit{optimal} representations, which ensure best downstream performance.
Those properties need to be approximated but give the right insights into what to aim for. 
This perspective and our proofs were inspired by \citet{dubois2020dib} who gives sufficient conditions for optimal representations in supervised learning.}
\else
\fi

\section{Experimental Details}
\label{appcs:exp_details}

\subsection{\scientific}
\label{appcs:exp_details/scientific}

In both the scientific setting and the following bridge setting, we consider rather unrealistic setups for verifying our theory where we have access to labels from all domains. We can choose to directly minimize the risk $\risk{Y}{Z}$ with the cross-entropy loss (denoted as CE henceafter), or minimize $\CH{A}{Z}$ (i.e., maximize $\MI{A}{Z}$) with supervised augmentations as in \cref{fig:aug:oracle} detailed below.

\paragraph{Implementation of supervised augmentations}
\label{appcs:exp_details/scientific/supcon}
When using supervised augmentations, for each sample we obtain its augmentations from within its label class across all domains.
A constrastive loss with such augmentations will essentially reduce to the supervised contrast loss \citep[SupCon,][]{khosla2020supervised}.
In particular, for a single sample in a batch, all samples in the batch with the same labels can be used as the positives (could come from the same domain or different domains) and others as the negatives. 
In \citet{khosla2020supervised}, two variants of SupCon loss were introduced for solving the issue of multi-positives depending on whether the summation over multi-positives was located inside (SupCon-In, Eq.\ (3) in \citet{khosla2020supervised}) or outside (SupCon-Out, Eq.\ (2) in \citet{khosla2020supervised}) the log. Though \citet{khosla2020supervised} chose SupCon-Out because it worked better than SupCon-In, we hypothesized that this is because SupCon-Out has an implicit bottleneck effect. 
Intuitively, SupCon-Out upper bounds SupCon-In and achieves its optima only if the logits with positive samples are all the same by Jensen's inequality, which may encourage positive samples from different domains to get clustered. Since this might confound with the effect of our bottlenecks, we chose to use SupCon-In though it performed slightly worse in out initial experiments.
For the implementation of SupCon, we followed \citet{khosla2020supervised} except that no projection was used. Specifically, the temperature was set to 0.1, and normalization was applied when computing the logits. 

In the scientific setting, we tried to simulate our theory to the greatest extent. In particular, we had two special considerations as detailed below:

\paragraph{Eliminating empirical generalization} 
As our theory focuses on the idealized domain generalization that assumes access to population distribution, we considered the setup where the empirical generalization was eliminated. Specifically, we treated the dataset as the population distribution and used the same dataset for training the encoder and training/evaluating the predictor. The ResNet-18 encoder was trained to 300 epochs without any regularization, using the Adam optimizer \citep{kingma2014adam} with a learning rate of 5e-5, a batch size of 192 (48 for each domain), and a cosine learning rate decay schedule.

\begin{figure}[th]
    \centering
    \begin{subfigure}[b]{0.42\textwidth}
            \includegraphics[width=\linewidth]{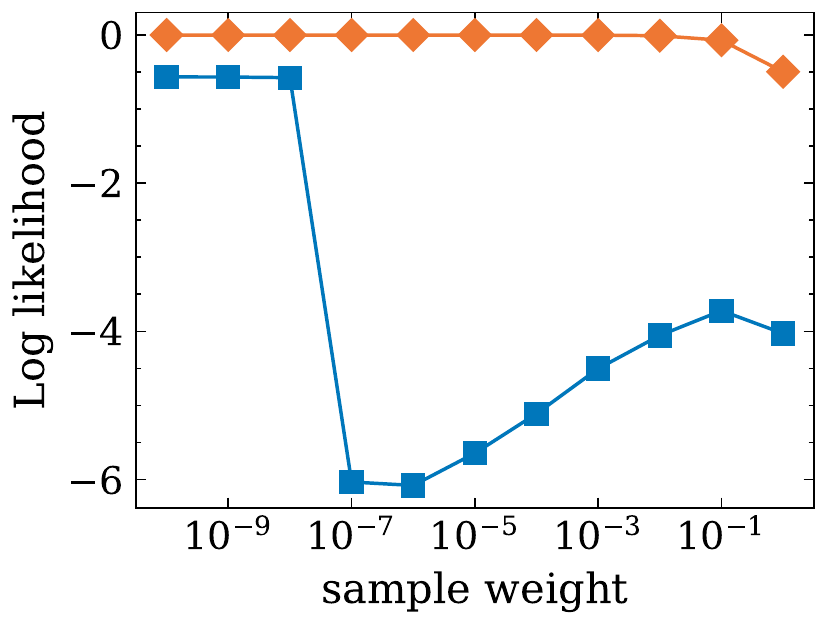}
            \caption{CE-Base}
    \end{subfigure}
    \hspace{.5em}
    \begin{subfigure}[b]{0.42\textwidth}
            \includegraphics[width=\linewidth]{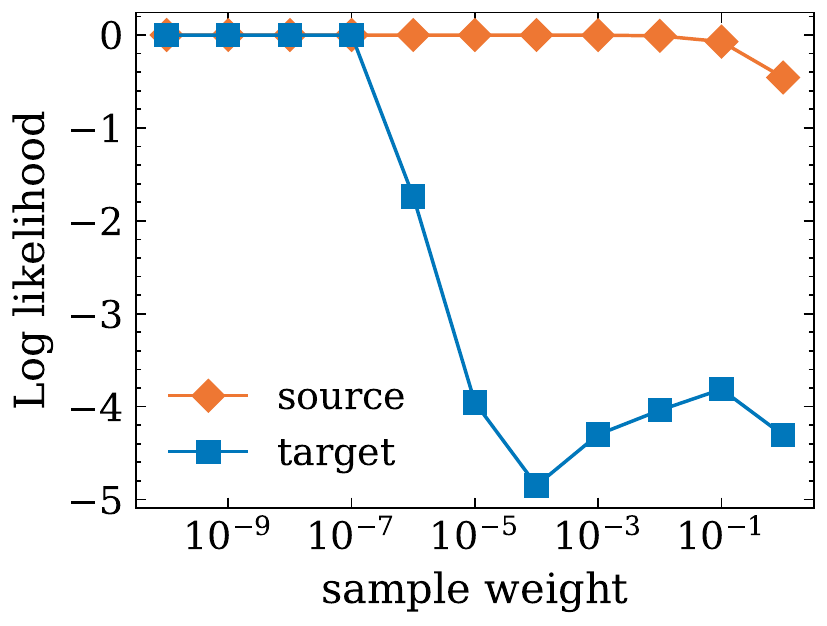}
            \caption{SupCon-Base}
    \end{subfigure}
    \caption{Sweeping the sample weight using CE-Base and SupCon-Base. We selected $10^{-5}$ which seemed to be reasonble for both cases.}
    \label{fig:sci_sample_weight_selection}
\end{figure}

\paragraph{Worst-case approximation}
\label{appcs:exp_details/scientific/worst_case}
To approximate the worst-case source predictor, we included the target data with randomly assigned \emph{wrong} labels to the training set for training the source predictor. The target data samples were down-weighted with a sample weight that maximizes the target risk while keeping the source risk close to optima (which is 0). We selected the sample weight by sweeping over $[10^{-10}, 1]$ with a logarithmic scale using CE-Base and SupCon-Base, as shown in \cref{fig:sci_sample_weight_selection}. As the sample weight increases, the target log likelihood (neg. risk) first decreases and then increases. We hypothesized that the increasing trend was due to that the source performance was already not optimal (though not visible from the figure), thus we selected the weight close to the turning point and $10^{-5}$ seemed to be reasonable for both CE-Base and SupCon-Base. Although we did not adaptively select the sample weight for each setup due to the computational cost, the pre-specified sample turned out to be reasonable for all other losses and different $\lambda$ combinations. 
Furthermore, we also removed regularization when training the linear classifier and initialized the linear weight i.i.d.\ from $\mathcal{N}(0, 1)$. 

Next, we provide other experimental details for reproducibility: 

\paragraph{Implementation of standard augmentations} 
\label{appcs:exp_details/scientific/standard_aug}
We followed SimCLR \citep{chen2020simple} for implementing standard image augmentations. For a fair comparison between the cases when using standard augmentations (SimCLR) and supervised augmentations (SupCon), we kept the total batch size the same and also used the same configurations for computing the SupCon loss, i.e., temperature set to 0.1, no projection, and normalization applied.

\paragraph{Details of \cref{fig:sci_inter_dom_aug}}
\label{appcs:exp_details/scientific/diff_aug}
In \cref{fig:sci_inter_dom_aug}, we considered different choices of augmentations. The `Standard` augmentation implementation is described above (\cref{appcs:exp_details/scientific/standard_aug}). The `Supervised' augmentation was essentially implemented using the SupCon loss as described in \cref{appcs:exp_details/scientific/supcon}. 
For other augmentations considered, we implemented them by \emph{dropout} inter-domain supervised augmentations in SupCon. 
Specifically, for each sample in the batch, we randomly masked the samples from different domains (i.e., both inter-domain positives and negatives) i.i.d.\ with the specified dropout probability, while samples within the same domain were always kept. 
`IntraDom' and `ApproxDA' correspond to dropout probability 1 and 0.9, respectively.
`SingleDom' were implemented by dropout all inter-domain samples with probability 1 except for a fixed domain (the `A' domain of PACS in our case).

\subsection{\bridge}
\label{appcs:exp_details/bridge}

In the bridge setting (see \cref{appcs:exp_results/bridge}), we aimed to bridge the gap between our theoretical setup to the practical setup. The main differences from the scientific setups are that the empirical generalization gap is considered and the average-case source predictor is used, as detailed below:

\paragraph{Incorporating empirical generalization} In practice, empirical-generalization gap should also be considered besides the source-target generalization gap. Thus, we randomly split the PACS dataset to 80\% training and 20\% validation splits for each domain. The training splits were used to train both the encoder and the source predictor, and the validation splits were used for encoder and source predictor selection as well as evaluation on target domains. We used the ResNet-50 model as the encoder and initialized it from ImageNet pretrained model. The encoder was trained to a maximum of 50 epochs with a 1e-5 weight decay, using the Adam optimizer \citep{kingma2014adam} with a learning rate of 5e-5, a batch size of 112 (28 for each domain), and a cosine learning rate decay schedule.

\paragraph{Using average-case source predictor} Instead of approximating the worst-case source predictor in the scientific setting, we considered the average-case\footnote{Here we have a slight abuse use of the phrase `average-case' to distinguish from the `worst-case' that we use in the scientific setting. In fact, the source predictor could be close to the `best-case' since the max-margin classifier (SVM) was used.} source predictor which is closer to the common practice. Specifically, we freezed the encoder and trained a SVM classifier with L2 regularization on the source training split. The regularization parameter was tuned over \{1e-4, 1e-3, 1e-2, 1e-1, 1, 1e1, 1e2, 1e3\} with the source validation accuracy. 

Next, we provide other experimental details for reproducibility: 

\paragraph{Selection of $\lambda$}
For all different setups considered in bridge settings, the CAD bottleneck was used and the $\lambda$ was tuned over \{1e-3, 1e-2, 1e-1, 1, 1e1\} independently for each.



\subsection{\practical}
\label{appcs:exp_details/practical}

\paragraph{Datasets}
We used non-MNIST datasets on DomainBed that were non-synthetic, including VLCS \citep{fang2013vlcs}, PACS \citep{li2017pacs}, OfficeHome \citep{venkateswara2017officehome}, TerraIncognita \citep{beery2018terrain}, and DomainNet \citep{peng2019domainnet}. For each dataset, we split it to 80\%/20\% training/validation set according to DomainBed.

\paragraph{SSL-based models \& Training}
For all models based on pretrained SSL models (either CLIP-based or DINO-based) with finetuning in this experiment, we freezed the pretrained SSL model and added on top a 1-layer MLP with hidden size 1024, and residual connection. 
We used CLIP ResNet-50 (\clipS{}) to obtain the best possible fair comparison with baselines from DomainBed, and CLIP ViT-B/32 (\clipL{}) to achieve the best results.
Note that the ResNet-50 model of \clipS{} was modified as described in \citet{radford2021learning} and contained 38M parameters (more than 23M of the original CLIP). The model was trained to 300 epochs for DomainNet and 50 epochs on other datasets (an epoch is defined as a single pass over the smallest domain according to DomainBed). No data augmentation was used and the temperature for scaling the logits in CAD was fixed to 0.05. We used the Adam optimizer with a 1e-5 weight dacay, and a cosine learning rate decay schedule. The hyperparameter search space is:
\begin{itemize}
    \item Learning rate: discrete set \{1e-4, 3e-4, 1e-3, 3e-3\}
    \item Batch size: discrete set \{128, 256, 512\} for DomainNet and OfficeHome, and \{64, 128, 256\} for other datasets
    \item MLP dropout: discrete set \{0., 0.1, 0.5\}
    \item Learning rate warmup: discrete set \{True, False\}
\end{itemize}

\paragraph{End-to-end models \& Training}
In \cref{table:clip_on_dombed}, we also included an end-to-end trained model without any pretrained SSL models. We used exactly the same model architecture (the original ResNet-50, initialized from ImageNet pretrained model), training procedure and evaluation protocal as baselines on DomainBed. Importantly, the linear classifier was jointly trained with the encoder, and no refitting was applied. The model was trained to a maximum of 5000 steps on each dataset, and data augmentations were applied. The Adam optimizer was used without any particular learning rate schedule. The hyperparameter search space is (same as DomainBed except we added the temperature):
\begin{itemize}
    \item Learning rate: log-uniform over [1e-5, 1e-3.5]
    \item Batch size: log-uniform over [8, 64] for DomainNet, and [8, $2^{5.5}$] for other datasets
    \item MLP dropout: discrete set \{0., 0.1, 0.5\}
    \item Weight decay: log-uniform over [1e-6, 1e-2]
    \item Temperature: discrete set \{0.05, 0.1\}
\end{itemize}

\paragraph{Linear Probe Evaluation}
\label{appcs:exp_details/practical:linear_eval}
In all the experiments except for the end-to-end training setup, we always followed the procedure of two-stage training, where we first trained the encoder with specified objectives, and then \emph{refit} the classifier. For datasets except DomainNet, we fitted the SVM classifier and tuned the regularization parameter over \{1e-4, 1e-3, 1e-2, 1e-1, 1, 1e1, 1e2, 1e3\} with source validation selection. Since DomainNet was too large and SVM cannot fit it efficiently, we used the logistic regression classifier which was trained with a batch size 512, the Adam optimizer with a learning rate 5e-4 and early stopping. Note that an alternative was to just use the linear head fitted when training \bob{} (as we used CE loss with source labels), and we found this could work better than refitting since the classifier was less overfitted to the source domain. However, we didn't do that since we wanted to stick to the representation learning protocol with two-stage training. We did that in our end-to-end training setup since we wanted it to be compeletely comparable to baselines on DomainBed (which did not do refitting).

\paragraph{Selection of $\lambda$}
\ifarxiv
In our experiments, we treated $\lambda$ as a special hyperparamter. For each model, we used the same $\lambda$ selected on PACS on all datasets except DomainNet, because our bottleneck is fairly robust to the choice of $\lambda$. For the large-scale DomainNet dataset, we selected its $\lambda$ individually. The $\lambda$ values chosen for each model were: 
\begin{itemize}
    \item \clipS{}: 1 on DomainNet and 1e-2 on other datasets
    \item \clipL{}: 1e-1 on DomainNet and 1e-2 on other datasets
    \item DINO: 1e-1 on all datasets
    \item End-to-end ResNet-50: 1e-5 on all datasets
\end{itemize}
\else
In our experiments, we treated $\lambda$ as a special hyperparamter. For each model, we selected $\lambda$ on the PACS dataset, and then used the same $\lambda$ value for all other datasets, because our bottleneck was fairly robust to the choice of $\lambda$. The $\lambda$ values chosen for SSL-based models (i.e., \clipS{}, \clipL{}, DINO) and end-to-end ResNet-50 were 1e-1 and 1e-5, respectively.
\fi

\subsection{\realistic}
\label{appcs:exp_details/realistic}

\paragraph{Model}
We used the \clipL{} model (\ie, CLIP ViT-B/32) with an additional network on top for finetuning. The additional network were two blocks of 2-layer MLP, each with hidden size 2048, pre-activation batch normalization, residual connection, and dropout probability 0.1. Note that the original \clipL{} model was frozen and only the additional network was trained. 

\paragraph{Dataset}
We used the LAION-400M dataset which contained 400 million image-text pairs for training. Though the dataset might not be as clean as the original CLIP training data (as evidenced by our experimental results), it was the largest publicly available image-text-pair dataset that we could get access to. As we froze the \clipL{} model and only did finetuning, we used the 1TB preprocessed embeddings provided by LAION-400M\footnote{See \url{https://laion.ai/laion-400-open-dataset/} for details.}. No further preprocessing was applied.

\paragraph{Training}
We used the image-text contrastive loss as introduced in \citet{radford2021learning} for training model. The temperature was learnable which was initialized as 0.07 and clipped with a minimum 0.01. The model was trained for 1 epoch using the Adam optimizer with a batch size of 16384 and a cosine learning rate decay schedule. The learning rate was tuned over the set \{3e-5, 1e-4, 3e-4, 1e-3, 3e-3, 1e-2\} and the $\lambda$ value for the Ent bottleneck was tuned over \{1e-3, 1e-2, 1e-1, 1, 1e1\}.

\paragraph{Evaluation}
For the evaluation on the ImageNet-related datasets, we followed a similar procedure in \citet{radford2021learning}, where a linear classifier was fitted on ImageNet using the model representations and evaluated on 7 natural distribution shift datasets. 
In particular, we fitted a logistic regression classifier with 1e-5 L2 regularization on ImageNet training set which was trained with a batch size 512, the Adam optimizer with a learning rate 3e-4 and early stopping. Note that this was different from \citet{radford2021learning}, where a logistic regression classifier was fitted using full-batch data with decent hyperparameter tuning, due to our computational budget. For evaluation on natural distribution shift datasets, we followed \citet{taori2020measuring} and used their released testbed\footnote{\url{https://github.com/modestyachts/imagenet-testbed}}. 
The evaluation datasets and their abbreviations used in \cref{table:clip_laion_imagenets} were: 
ImageNetV2 \citep[IN-V2,][]{recht2019imagenetv2}, 
ImageNet-Sketch \citep[IN-S,][]{wang2019imagenetsketch}, 
Youtube-BB \citep[YT-BB,][]{shankar2019imagevid}, 
ImageNet-Vid \citep[IN-Vid,][]{shankar2019imagevid}, 
ObjectNet \citep{barbu2019objectnet}, 
ImageNet Adversarial \citep[IN-A,][]{hendrycks2021imagenetadv}, 
and ImageNet Rendition \citep[IN-R][]{hendrycks2020imagenetr}.

\clearpage
\newpage
\section{Additional Experimental Results}
\label{appcs:exp_results}

\subsection{\scientific}
\label{appcs:exp_results/scientific}

\begin{figure}[th]
    \centering
    \begin{subfigure}[b]{0.42\textwidth}
            \includegraphics[width=\linewidth]{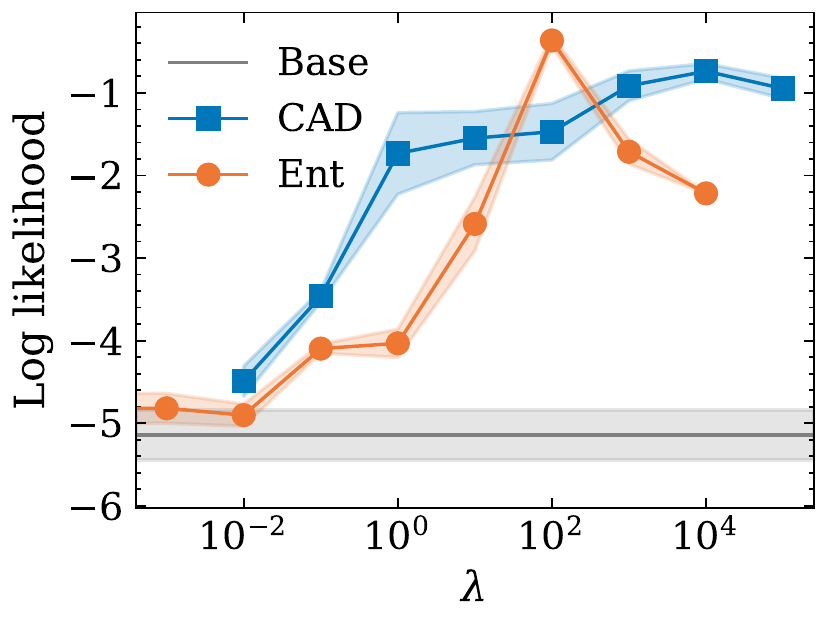}
            \caption{CE ($\risk{Y}{Z}$) objectives}
            \label{fig:sci_lambda_effect_all_bn_ce}
    \end{subfigure}
    \hspace{.5em}
    \begin{subfigure}[b]{0.42\textwidth}
            \includegraphics[width=\linewidth]{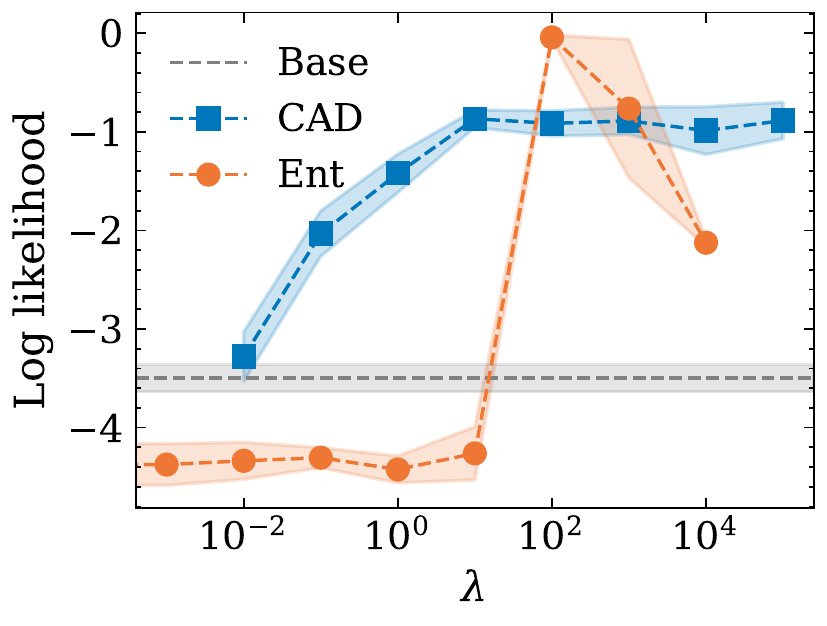}
            \caption{SupCon ($\CH{A}{Z}$) objectives}
            \label{fig:sci_lambda_effect_all_bn_supcon}
    \end{subfigure}
    \caption{The worst-case DG performance of Ent bottleneck is more sensitive to $\lambda$ than CAD}
    \label{fig:sci_lambda_effect_all_bn}
\end{figure}

\paragraph{What's the effect of $\lambda$ for different objectives on the worst-case DG performance?} 
In \cref{fig:sci_lambda_effect_all_bn}, the worst-case target log likelihood versus $\lambda$ values for different objectives is shown. We found that Ent is much more sensitive to the choice of $\lambda$ than CAD, which was part of the reason why we used the latter in most of our experiments. Note that for SupCon-Ent with small $\lambda$ values, it was worse than SupCon-Base because of the discretization introduced by the Ent bottleneck, which we verified by observing that setting $\lambda=0$ lead to similar results.



\subsection{\bridge}
\label{appcs:exp_results/bridge}

The scientific setup is closer to our theory than what we do in practice in that worst-case predictor was considered and empirical generalization gap was ignored. Here we bridged these gaps with a more practical setup.
In particular, we split the PACS dataset to training and validation splits for each domain and considered the setting: \bob{} trains the encoder on all-domain training splits with a validation loss selection; \alice{} trains the SVM predictor (average-case) on the source training split which is selected over the source validation split, and evaluates on the validation splits of other target domains. 
The target validation accuracy averaged over all (source, target) setups was reported. 
For simplicity, we will use CE to denote the objective with the cross-entropy loss that uses labels to minimize $\risk{Y}{Z}$, and SupCon for the contrastive loss that uses supervised augmentations to minimize $\CH{A}{Z}$.
We will use CAD in following experiments unless otherwise specified (chosen with initial experiments). Details in \cref{appcs:exp_details/bridge}.

\begin{table}[th]
\caption{We repeated most empirical analysis (in the scientific setting) in the more practical bridge setting and observed similar results.}
\label{table:bridge_pacs_extra}
\begin{center}
\begin{tabular}{l|c}
\toprule
\textbf{Setup} & \textbf{Avg. target acc.}\\
\midrule
CE-Base   &   95.9 $\pm$ 0.5\\ 
CE-CAD     &   96.7 $\pm$ 0.2\\
CE-CAD (partial domains) & 82.6 $\pm$ 0.5 \\
\midrule
SupCon-CAD & 96.7 $\pm$ 0.4 \\
SupCon-CAD (SingleDom) & 96.7 $\pm$ 0.3\\
SupCon-CAD (ApproxDA) & 96.6 $\pm$ 0.3\\
SupCon-CAD (IntraDom) & 96.2 $\pm$ 0.7 \\
SimCLR-CAD & 61.7 $\pm$ 0.8 \\
\bottomrule
\end{tabular}
\end{center}
\end{table}


\paragraph{Does domain bottleneck improve the average-case DG performance?} 
Though our theory focuses on the worst-case DG, we empirically showed that adding bottlenecks to enforce support match can also improve the average-case DG performance by comparing CE-Base and CE-CAD in \cref{table:bridge_pacs_extra}.

\paragraph{What if \bob{} only has access to source domains?} 
Similar to what we did in the scientific setting, we considered the setup where one single domain is specified as the target domain and excluded from the training set of \bob{} and used for evaluation with source predictors trained on other domains. This is denoted as CE+CAD (partial domains) in \cref{table:bridge_pacs_extra}, which is much worse then CE-CAD. This shows the necessity of getting access to target domain information for DG.

\paragraph{What if \bob{} only has access to domain-agnostic augmentations?}
In \cref{table:bridge_pacs_extra}, we also compared SupCon-CAD which used supervised augmentations through the labels with CE-CAD and they achieved the same performance. This shows that \bob{} can still learn good representations without labels but only domain-agnostic augmentations in practice.

\paragraph{Can we use standard augmentations?}
In \cref{fig:aug}, we point out that standard augmentations are not domain-agnostic and thus not suitable for SSL with our objectives. We empirically showed this by using augmentations of SimCLR (see \cref{appcs:exp_details/scientific/standard_aug} for details) with our objectives (SimCLR-CAD).
In \cref{table:bridge_pacs_extra}, we indeed observed that using standard augmentations performed much worse than using desired augmentations (SupCon-CAD). 

\paragraph{How do augmentations matter?}
Besides investigating the `Supervised' augmentations (SupCon-CAD) and `Standard' augmentations (SimCLR-CAD) above, we also compared other three augmentations as in the scientific section. Specifically, we considered the `SingleDom', `IntraDom', and `ApproxDA' augmentations. As shown in \cref{table:bridge_pacs_extra}, SupCon-CAD (SingleDom) and (ApproxDA) maintained the DG performance but SupCon-CAD (IntraDom) was slightly worse (0.5 accuracy drop). We assumed the small gap was due to the specific dataset that we used (PACS). We did the same analysis on VLCS, and SupCon-CAD with `Supervised', `SingleDom', and `IntraDom' augmentations gave 84.7 $\pm$ 0.4, 83.2 $\pm$ 0.3, and 77.5 $\pm$ 2.3, respectively. This shows the importance of using domain-agnostic augmentations in practice. 


\paragraph{Do standard augmentations affect source performance?} 
Previously, we showed that using standard augmentations hurt the DG performance measured by the average target accuracy. It is natural to ask whether using standard augmentations also hurt the source performance since we should also be interested in the `effective robustness' \citep{taori2020measuring}. Thus we also reported the average source accuracy of SupCon-CAD and SimCLR-CAD which were 96.9 $\pm$ 0.2 and 90.1 $\pm$ 0.2, respectively. The source performance using standard augmentations was indeed worse, but if we consider the source-target gap which was 0.2 for SupCon-CAD and 28.4 for SimCLR-CAD, which still verified that the non-domain-agnostic standard augmentations were harder to force support match. To be even more convincing, we did the same analysis on VLCS, and the average source accuracy of SupCon-CAD and SimCLR-CAD were 86.6 $\pm$ 0.1 and 84.6 $\pm$ 0.5 which were fairly close, but the average target accuracy were 84.7 $\pm$ 0.4 and 57.5 $\pm$ 1.7, respectively.

\subsection{\practical}
\label{appcs:exp_results/practical}

\begin{table}[h]
\caption{Full results on DomainBed with `oracle selection' method.}
\label{table:clip_on_dombed_full}
\begin{center}
\adjustbox{max width=\textwidth}{%
\begin{tabular}{l|ccccc}
\toprule
\textbf{Algorithm}     & \textbf{VLCS}             & \textbf{PACS}             & \textbf{OfficeHome}       & \textbf{TerraIncognita}   & \textbf{DomainNet}              \\
\midrule
ERM                 & 77.6 $\pm$ 0.3            & 86.7 $\pm$ 0.3            & 66.4 $\pm$ 0.5            & 53.0 $\pm$ 0.3            & 41.3 $\pm$ 0.1                      \\
IRM           & 76.9 $\pm$ 0.6            & 84.5 $\pm$ 1.1            & 63.0 $\pm$ 2.7            & 50.5 $\pm$ 0.7            & 28.0 $\pm$ 5.1                      \\
GroupDRO      & 77.4 $\pm$ 0.5            & 87.1 $\pm$ 0.1            & 66.2 $\pm$ 0.6            & 52.4 $\pm$ 0.1            & 33.4 $\pm$ 0.3                      \\
Mixup         & 78.1 $\pm$ 0.3            & 86.8 $\pm$ 0.3            & 68.0 $\pm$ 0.2            & \textbf{54.4 $\pm$ 0.3}            & 39.6 $\pm$ 0.1                      \\
CORAL         & 77.7 $\pm$ 0.2            & 87.1 $\pm$ 0.5            & 68.4 $\pm$ 0.2            & 52.8 $\pm$ 0.2            & 41.8 $\pm$ 0.1                      \\
MMD           & 77.9 $\pm$ 0.1            & 87.2 $\pm$ 0.1            & 66.2 $\pm$ 0.3            & 52.0 $\pm$ 0.4            & 23.5 $\pm$ 9.4                      \\
DANN          & 79.7 $\pm$ 0.5            & 85.2 $\pm$ 0.2            & 65.3 $\pm$ 0.8            & 50.6 $\pm$ 0.4            & 38.3 $\pm$ 0.1                      \\
CDANN         & 79.9 $\pm$ 0.2            & 85.8 $\pm$ 0.8            & 65.3 $\pm$ 0.5            & 50.8 $\pm$ 0.6            & 38.5 $\pm$ 0.2                      \\
VREx          & 78.1 $\pm$ 0.2            & 87.2 $\pm$ 0.6            & 65.7 $\pm$ 0.3            & 51.4 $\pm$ 0.5            & 30.1 $\pm$ 3.7                      \\
\midrule
CAD          &      78.0 $\pm$ 0.1        &     87.3 $\pm$ 0.2         &   67.0 $\pm$ 0.5     &      53.5 $\pm$ 0.9       &          41.5 $\pm$ 0.1             \\
\midrule
DINO + CAD          &    69.6 $\pm$ 0.6         &       76.1 $\pm$ 0.1       &   56.9 $\pm$ 0.5     &     25.9 $\pm$ 1.2        &         33.6 $\pm$ 0.1              \\
\midrule
\clipS{}         &    81.1 $\pm$ 0.5         &    90.3 $\pm$ 0.2          &    70.6 $\pm$ 0.1    &        29.6 $\pm$ 0.8     &       47.7 $\pm$ 0.0                \\
\clipS{} (Zero-Shot)    &     80.9 $\pm$ 0.1             &    91.8 $\pm$ 0.1         &      70.4 $\pm$ 0.2       &    19.1 $\pm$ 0.1         &     46.9 $\pm$ 0.0           \\
\ifarxiv
\clipS{} + Base          &      81.3 $\pm$ 0.5       &        91.2 $\pm$ 0.3      &   70.6 $\pm$ 0.1     &    36.4 $\pm$ 0.7         &         46.8 $\pm$ 0.2              \\
\clipS{} + CAD          &     \textbf{82.3 $\pm$ 0.3}        &       92.0 $\pm$ 0.2       &   71.9 $\pm$ 0.2     &    36.2 $\pm$ 0.8         &         48.8 $\pm$ 0.1             \\
\midrule
\clipL{}          &     80.7 $\pm$ 0.4        &       93.7 $\pm$ 0.8       &   79.6 $\pm$ 0.1     &   36.9 $\pm$ 0.6          &        52.8 $\pm$ 0.1               \\
\clipL{} + CAD          &     81.6 $\pm$ 0.1        &       \textbf{94.9 $\pm$ 0.3}       &   \textbf{80.0 $\pm$ 0.2}     &   40.6 $\pm$ 1.1          &        \textbf{53.7 $\pm$ 0.1}               \\
\else
\clipS{} + Base          &      81.6 $\pm$ 0.3       &        91.1 $\pm$ 0.3      &   70.6 $\pm$ 0.4     &    36.4 $\pm$ 0.7         &         46.7 $\pm$ 0.2              \\
\clipS{} + CAD          &     \textbf{82.2 $\pm$ 0.3}        &       92.4 $\pm$ 0.3       &   71.7 $\pm$ 0.6     &    36.1 $\pm$ 0.8         &         48.7 $\pm$ 0.1             \\
\midrule
\clipL{}          &     80.7 $\pm$ 0.4        &       93.7 $\pm$ 0.8       &   79.9 $\pm$ 0.1     &   36.9 $\pm$ 0.6          &        52.8 $\pm$ 0.1               \\
\clipL{} + CAD          &     81.4 $\pm$ 0.8        &       \textbf{94.7 $\pm$ 0.4}       &   \textbf{80.2 $\pm$ 0.2}     &   39.7 $\pm$ 1.1          &        \textbf{54.1 $\pm$ 0.1}               \\
\fi
\bottomrule
\end{tabular}}
\end{center}
\end{table}

\paragraph{Full result of \cref{table:clip_on_dombed}}
We included the full result of \cref{table:clip_on_dombed} with all baselines on DomainBed as in \cref{table:clip_on_dombed_full}. We considered most representative baselines from DomainBed, most of which considered learning invariant representations or optimal classifiers across domains. Specifically, we included IRM \citep{arjovsky2019invariant}, GroupDRO \citep{sagawa2019groupdro}, Mixup \citep{yan2020mixup}, CORAL \citep{sun2016coral}, MMD \cite{li2018domain}, DANN \citep{ganin2016domain}, CDANN \citep{li2018cdann}, and VREx \cite{krueger2021vrex}. We also included the result pretrained \clipS{} model with a zero-shot classifier using text representations (\clipS{} Zero Shot), which demonstrated better DG performance than \clipS{} with linear probe. But we observed that it was outperformed by our \clipS{} + CAD. 

\paragraph{What is the impact of CLIP pretraining?}
To ensure that our gains are \textit{not only} due to a novel CAD bottleneck, but the synergy between enforcing support constraint and using desired SSL models, we investigated CAD using the standard DomainBed protocol denoted as CAD in the table. 
It shows that CAD on its own performs similarly with DomainBed baselines (see \cref{table:clip_on_dombed_full} for a full comparison).

\paragraph{Why `oracle' selection?}
\label{appcs:exp_results/domaibed/why_oracle}
In the main body, we provided the results with `oracle selection' which was the closest to our theory among the model selection methods in DomainBed (in the sense that we needed target domain information to achieve IDG). Here, we also provided results with `source validation' selection in \cref{table:clip_on_dombed_src_select}. Source validation selection relies on the assumption that source and target data follow similar distributions \citep{gulrajani2021in} thus source and target accuracy are highly correlated, which is not really true in practice. We found some issues with source validation selection results: 
\begin{itemize}
    \item The selected model with the highest source validation accuracy tends to overfit the source domain, thus possibly leads to worse performance on the target domain. This can be probed by the fact that the finetuned CLIP models (CLIP + Base or  CLIP + CAD) were generally worse than the original CLIP model;
    \item Selecting model with source validation accuracy tends to diminish the effect of bottlenecks. This can be seen by the fact that the gap between CLIP + Base and CLIP + CAD of source validation selection is much smaller than that of oracle selection;
    \item The source accuracy is not a good indicator of target accuracy thus its result has a larger variance.
\end{itemize}

\begin{table}[h]
\caption{Results on DomainBed with `source validation' selection. Source validation selected model tends to overfit more to the source domain and diminish the effect of bottlenecks.}
\label{table:clip_on_dombed_src_select}
\vspace{-\baselineskip}
\begin{center}
\adjustbox{max width=\textwidth}{%
\begin{tabular}{l|ccccc}
\toprule
\textbf{Algorithm}     & \textbf{VLCS}             & \textbf{PACS}             & \textbf{OfficeHome}       & \textbf{TerraIncognita}   & \textbf{DomainNet}              \\
\midrule
ERM                       & 77.5 $\pm$ 0.4            & 85.5 $\pm$ 0.2            & 66.5 $\pm$ 0.3            & 46.1 $\pm$ 1.8            & 40.9 $\pm$ 0.1\\
IRM                       & 78.5 $\pm$ 0.5            & 83.5 $\pm$ 0.8            & 64.3 $\pm$ 2.2            & 47.6 $\pm$ 0.8            & 33.9 $\pm$ 2.8\\
GroupDRO                  & 76.7 $\pm$ 0.6            & 84.4 $\pm$ 0.8            & 66.0 $\pm$ 0.7            & 43.2 $\pm$ 1.1            & 33.3 $\pm$ 0.2\\
Mixup                     & 77.4 $\pm$ 0.6            & 84.6 $\pm$ 0.6            & 68.1 $\pm$ 0.3            & 47.9 $\pm$ 0.8            & 39.2 $\pm$ 0.1\\
CORAL                     & 78.8 $\pm$ 0.6            & 86.2 $\pm$ 0.3            & 68.7 $\pm$ 0.3            & 47.6 $\pm$ 1.0            & 41.5 $\pm$ 0.1\\
MMD                       & 77.5 $\pm$ 0.9            & 84.6 $\pm$ 0.5            & 66.3 $\pm$ 0.1            & 42.2 $\pm$ 1.6            & 23.4 $\pm$ 9.5\\
DANN                      & 78.6 $\pm$ 0.4            & 83.6 $\pm$ 0.4            & 65.9 $\pm$ 0.6            & 46.7 $\pm$ 0.5            & 38.3 $\pm$ 0.1\\
CDANN                     & 77.5 $\pm$ 0.1            & 82.6 $\pm$ 0.9            & 65.8 $\pm$ 1.3            & 45.8 $\pm$ 1.6            & 38.3 $\pm$ 0.3\\
VREx                      & 78.3 $\pm$ 0.2            & 84.9 $\pm$ 0.6            & 66.4 $\pm$ 0.6            & 46.4 $\pm$ 0.6            & 33.6 $\pm$ 2.9\\
\midrule
CAD          &      78.0 $\pm$ 0.5       &     85.2 $\pm$ 0.9         &   67.4 $\pm$ 0.2     &      47.3 $\pm$ 2.2       &          41.0 $\pm$ 0.1             \\
\midrule
DINO + CAD          &    68.9 $\pm$ 0.9         &       75.4 $\pm$ 0.5       &   56.4 $\pm$ 0.7     &     23.6 $\pm$ 1.2        &         31.0 $\pm$ 2.3              \\
\midrule
\clipS{}         &    81.1 $\pm$ 0.5         &    90.3 $\pm$ 0.2          &    70.6 $\pm$ 0.1    &        29.6 $\pm$ 0.8     &       47.7 $\pm$ 0.0                \\
\clipS{} (Zero-Shot)    &     80.9 $\pm$ 0.1             &    91.8 $\pm$ 0.1         &      70.4 $\pm$ 0.2       &    19.1 $\pm$ 0.1         &     46.9 $\pm$ 0.0           \\
\ifarxiv 
\clipS{} + Base          &      81.4 $\pm$ 0.4       &        89.6 $\pm$ 0.7      &   70.4 $\pm$ 0.2     &    30.9 $\pm$ 2.2         &         44.6 $\pm$ 1.6              \\
\clipS{} + CAD          &     81.2 $\pm$ 0.6        &      90.0 $\pm$ 0.6       &   70.5 $\pm$ 0.3     &    30.3 $\pm$ 0.9         &         45.5 $\pm$ 2.1              \\
\midrule
\clipL{}          &     80.6 $\pm$ 0.7        &       93.5 $\pm$ 0.8       &   79.4 $\pm$ 0.2     &   37.5 $\pm$ 0.7          &        50.1 $\pm$ 1.1               \\
\clipL{} + CAD          &     80.8 $\pm$ 0.7        &       93.5 $\pm$ 0.7       &   79.7 $\pm$ 0.2     &   37.4 $\pm$ 1.2          &     51.7 $\pm$ 1.4               \\
\else
\clipS{} + Base          &      81.0 $\pm$ 0.5       &        90.1 $\pm$ 0.3      &   70.4 $\pm$ 0.2     &    29.0 $\pm$ 1.4         &         44.7 $\pm$ 1.5              \\
\clipS{} + CAD          &     81.3 $\pm$ 0.3        &      90.0 $\pm$ 0.6       &   70.5 $\pm$ 0.2     &    29.4 $\pm$ 0.3         &         45.9 $\pm$ 2.1              \\
\midrule
\clipL{}          &     80.7 $\pm$ 0.4        &       93.7 $\pm$ 0.8       &   79.9 $\pm$ 0.1     &   36.9 $\pm$ 0.6          &        50.2 $\pm$ 1.6               \\
\clipL{} + CAD          &     80.5 $\pm$ 0.5        &       94.0 $\pm$ 0.6       &   79.8 $\pm$ 0.1     &   37.4 $\pm$ 1.2          &     52.3 $\pm$ 1.8               \\
\fi
\bottomrule
\end{tabular}}
\end{center}
\end{table}

\subsection{\realistic}
\label{appcs:exp_results/realistic}

\begin{table}[h]
\caption{Finetuning CLIP L on LAION with an entropy bottleneck performs better on DomainBed than finetuning without.}
\label{table:clip_laion_dombed}
\vspace{-\baselineskip}
\begin{center}
\adjustbox{max width=\textwidth}{%
\begin{tabular}{l|ccccc}
\toprule
\textbf{Algorithm}     & \textbf{VLCS}             & \textbf{PACS}             & \textbf{OfficeHome}       & \textbf{TerraIncognita}   & \textbf{DomainNet}              \\
\midrule
\clipL{}          &     80.7 $\pm$ 0.4        &       93.7 $\pm$ 0.8       &   79.9 $\pm$ 0.1     &   36.9 $\pm$ 0.6          &        52.8 $\pm$ 0.1               \\
\midrule
Tuned w/o Ent &    79.2 $\pm$  0.7     &       93.4 $\pm$  0.3     &   77.2 $\pm$  0.5     &   36.1 $\pm$ 0.4     &   51.2 $\pm$ 0.1 \\
Tuned w/ Ent &    80.7 $\pm$  0.4     &       94.3 $\pm$  0.8     &   78.2 $\pm$  0.2     &   36.8 $\pm$ 0.4     &   52.2 $\pm$ 0.1 \\
\bottomrule
\end{tabular}}
\end{center}
\end{table}

\paragraph{Evaluation results on DomainBed}
We included the evaluation results of trained models on DomainBed in \cref{table:clip_laion_dombed}, where we followed exactly the same linear evaluation protocal discussed in \cref{appcs:exp_details/practical:linear_eval}. We observed similar results as \cref{table:clip_laion_imagenets}: the \clipL{} model trained with the Ent bottleneck on LAION (Tuned w/ Ent) outperformed the one without (Tuned w/o Ent) on all DomainBed datasets, but slightly underperformed the original \clipL{} model (which might be due to quality of the LAION-400M dataset).

\end{document}